\theoremstyle{plain}
\newtheorem{thm}{\protect\theoremname}
\newtheorem{lem}[thm]{\protect\lemmaname}
\newtheorem{cor}[thm]{\protect\corollaryname}
\theoremstyle{definition}
\newtheorem{defn}[thm]{\protect\definitionname}
\newtheorem{example}[thm]{\protect\examplename}
\newtheorem{rem}[thm]{\protect\remarkname}
\providecommand{\corollaryname}{Corollary}
\providecommand{\definitionname}{Definition}
\providecommand{\lemmaname}{Lemma}
\providecommand{\theoremname}{Theorem}
\providecommand{\examplename}{Example}
\providecommand{\remarkname}{Remark}
\author[1]{Naman Agarwal}
\author[2]{Alon Gonen}
\affil[1]{Google AI, Princeton}
\affil[2]{University of California San Diego}
\title{Optimal Sketching Bounds for Exp-concave Stochastic Minimization}
\begin{document}

\maketitle

\begin{abstract}
We derive optimal statistical and computational complexity bounds for exp-concave stochastic minimization in terms of the \emph{effective dimension}. For common eigendecay patterns of the population covariance matrix, this quantity is significantly smaller than the ambient dimension. Our results reveal interesting connections to sketching results in numerical linear algebra. In particular, our statistical analysis highlights a novel and natural relationship between algorithmic stability of empirical risk minimization and ridge leverage scores, which play significant role in sketching-based methods. Our main computational result is a fast implementation of a sketch-to-precondition approach in the context of exp-concave empirical risk minimization.
\end{abstract}

\section{Introduction}
The last two decades saw remarkable progress in the use of randomness
in the design of fast algorithms for numerical linear algebra (\cite{woodruff2014sketching,Kannan2017a}).
One of the principal ways to use randomness is to take a \emph{sketch-and-solve
}approach whereby an efficient (random) compression (a.k.a. sketch)
of a given matrix (or tensor) is computed such that performing the
required computation on the smaller object yields adequate accuracy.
Such techniques have significantly accelerated the runtime of performing
basic tasks such as (approximate) matrix multiplication and spectral
approximation both theoretically and practically.

The applications of the above techniques in data science arise through two key considerations. From a \textit{statistical}
point view, standard worst case characterizations of the \textit{sample
complexity (e.g. PAC complexity bounds)} often tend to be very pessimistic.
Integrating \textit{sketching} in this context seems promising due
to their ability to capture and exploit common distributional conditions
such as sparsity and low effective dimensionality.

The second consideration is \textit{computational} efficiency. Many important loss
functions in machine learning admit curvature which facilitates the
computation of high-accuracy solutions. However, if the data matrix
is ill-conditioned, simple first-order optimization methods fail to
utilize the underlying curvature. Evidently, a close variant of the
sketch-and-solve strategy, named \emph{sketch-to-precondition,} whereby  weak constant factor approximations of data covariance/Hessians are constructed to precondition the data/gradients, thus leading to alternative well-conditioned representations (\cite{clarkson2013low,Avron2017,gonen2016solving}). A key benefit of the sketch-to-precondition approach is that it leads to only a logarithmic dependence in the error. 

While there have been a number of successes in integrating sketching
methods in machine learning applications, few important and basic questions
are yet to be addressed. In this paper we consider the following two
challenges in the context of agnostic PAC based exp-concave stochastic minimization:
\begin{enumerate}
\item \textbf{Relating sketch-and-solve bounds to sample complexity
bounds: } Does there exist a systematic way to incorporate bounds achieved through the sketch-and-solve procedure into sample complexity? The sketch and solve bounds often depend on the effective dimensionality of the data rather than its true dimensionality which allows us to leverage the decay if any in the spectrum of the data. Furthermore, does achieving such bounds require incorporating
sketching methodologies directly into the learning algorithm itself or can such bounds be proved for generic regularized empirical risk minimization?

\item \textbf{Optimal Regularization for Sketch-To-Precondition  Optimization:} The choice of the regularization parameter in regularized risk minimization is often chosen optimally to minimize generalization error. From an optimization point of view  implementing the sketch-to-precondition algorithms for risk minimization
inherently introduces a fundamental trade-off between the cost of forming
the preconditioner and the resulting curvature. As we shall see, in
the context of empirical risk minimization, this tradeoff is controlled
by the choice of a certain \emph{regularization} parameter which is internally used by the sketch-to-precondition algorithm. While it is tempting(and is common practice) to use the same regularization parameter in the sketch-to-precondition algorithm, as the one introduced in the risk minimization problem, it might not be the optimal choice. Indeed the optimal choice for this parameter is problem dependent and a key question is whether we can find this optimal parameter while preserving the overall efficiency? 
\end{enumerate}
In this paper we answer both questions affirmatively in the context of agnostic PAC based exp-concave stochastic minimization. As for the first question, our statistical results hold for \emph{any} algorithm that minimizes the regularized empirical risk. Before describing our results formally, we now introduce the requisite terminology and define the problem setting we work with.

\subsection{Problem setting}
We study the two above questions in the relatively wide context of exp-concave
stochastic minimization. Specifically, let $\mathcal{W}\subseteq\mathbb{R}^{d}$
be a compact set of \emph{predictors} with $\ell_{2}$-diameter at
most $B>0$, $\mathcal{X}\subseteq\mathbb{R}^{d}$ be an \emph{instance}
\emph{space} such that $\mathcal{Z}:=\{w^{\top}x:w\in\mathcal{W},x\in\mathcal{X}\}\subseteq[-1,1]$,
and $\mathcal{Y}$ be a \emph{label set}.\footnote{i). For our computational results we will assume that projection onto
$\mathcal{W}$ can be implemented efficiently. ii) Our statistical
results have extremely mild dependence on $B$. iii) The assumption
on the magnitude of the prediction is arbitrary.} We consider the task of minimizing the \emph{expected risk}
\[
F(w)=\mathbb{E}_{(x,y)\sim\mathcal{D}}[\phi_{y}(w^{\top}x)]
\]
where for every $(x,y)\in\mathcal{X}\times\mathcal{Y}$, the function
$w\in\mathcal{W}\mapsto\phi_{y}(w^{\top}x)$ is $\alpha$-\emph{strongly convex}, i.e.
for all $z \in \mathcal{Z}$ $\phi_y''(z) \geq \alpha$.
We also assume that for every $y\in\mathcal{Y}$, $\phi_{y}$
is $\rho$-Lipschitz. The most prominent examples are the square
loss and the logistic loss. Actually, as pointed out in \cite{gonen2017average}, this framework includes many commonly used exp-concave and Lipschitz loss
functions.\footnote{One exception is the Log Loss which while being exp-concave is neither smooth or Lipschitz.} The following property of the loss functions is immediate.
\begin{lem}
For every $y\in\mathcal{Y}$, $\phi_{y}$ is $\gamma:=\alpha/\rho^{2}$
exp-concave.
\end{lem}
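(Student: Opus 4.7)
The plan is to verify the standard differential criterion for exp-concavity and apply the two hypotheses on $\phi_y$ directly. Recall that a twice-differentiable convex function $f:\mathbb{R}\to\mathbb{R}$ is $\gamma$-exp-concave precisely when $e^{-\gamma f}$ is concave, and by computing the second derivative of $e^{-\gamma f(z)}$ this is equivalent to the pointwise inequality
\[
f''(z) \;\geq\; \gamma\,\bigl(f'(z)\bigr)^{2}
\qquad\text{for all }z.
\]
So the first step is to reduce the claim to verifying this scalar inequality for $\phi_y$ on $\mathcal{Z}$.

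Next, I would invoke the $\rho$-Lipschitz assumption on $\phi_y$, which gives $|\phi_y'(z)|\le \rho$ and hence $(\phi_y'(z))^{2}\le \rho^{2}$ everywhere on $\mathcal{Z}$. Combining this with the strong-convexity hypothesis $\phi_y''(z)\ge \alpha$ yields
\[
\phi_y''(z) \;\geq\; \alpha \;=\; \frac{\alpha}{\rho^{2}}\cdot\rho^{2}
\;\geq\; \frac{\alpha}{\rho^{2}}\,\bigl(\phi_y'(z)\bigr)^{2}
\;=\; \gamma\,\bigl(\phi_y'(z)\bigr)^{2},
\]
which is exactly the required differential inequality with $\gamma=\alpha/\rho^{2}$.

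There is no real obstacle here: the result is essentially an immediate consequence of the definitions once one recalls the second-derivative characterization of exp-concavity. The only mild subtlety is that strong convexity is stated for the composed map $w\mapsto \phi_y(w^{\top}x)$, but the parenthetical clarification in the statement reduces it to the one-dimensional inequality $\phi_y''(z)\ge \alpha$ on $\mathcal{Z}$, which is all that is needed. If one did not want to assume twice differentiability, the same argument goes through by working with the subgradient/second-order inequality versions of Lipschitzness and strong convexity, but the clean differential form above suffices for the setting of the paper.
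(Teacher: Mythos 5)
Your argument is correct and is exactly the standard verification the paper has in mind: the paper offers no explicit proof (it labels the lemma ``immediate''), and the canonical justification is precisely the second-derivative criterion $\phi_y''(z)\ge\gamma(\phi_y'(z))^2$ combined with $\phi_y''\ge\alpha$ and $|\phi_y'|\le\rho$. Nothing further is needed.
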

For our computational results we will also assume that for every $y$, $\phi_y$ is $\beta$-smooth.
\begin{example}
\label{exa:regression} Bounded $\ell_{2}$-regression: let $\mathcal{Y}=[-1,1]$
and let $\mathcal{W}$ and $\mathcal{X}$ be two compact sets in $\mathbb{R}^{d}$
such that $\forall w\in\mathcal{W}$ and $x\in\mathcal{X}$, $|w^{\top}x|\le1$.
The loss is defined by $\phi_{y}(z)=\frac{1}{2}(z-y)^{2}$. It is
easily verified that $\alpha=\beta=1$ and $\rho=2$. 
\end{example}
A learning algorithm $\mathcal{A}$ receives as an input an i.i.d. training
sample $S=((x_{i},y_{i})_{i=1}^{n})$ drawn according to $\mathcal{D}$, and outputs a predictor $\hat{w} \in \mathcal{W}$. Given a desired accuracy $\epsilon$, we denote by $n(\epsilon)$ the minimal size of a sample $S$ for which $\mathbb{E_{S\sim \mathcal{D}^{n}}}[F(\hat{w})]\le \epsilon$.\footnote{To obtain high-probability bounds (rather than in expectation) we
can employ the validation process suggested in \cite{Mehta2016}.} The function $n(\epsilon)$ is called the \textit{sample complexity} of $\mathcal{A}$. A popular practice is regularized loss minimization (RLM):
\begin{equation}
\hat{w}:=\arg\!\min_{w\in\mathcal{W}}\left\{ \hat{F}_{\lambda}(w):=\frac{1}{n}\sum_{i=1}^{n}\underbrace{\phi_{y_{i}}(w^{\top}x_{i})}_{=:f_{i}(w)}+\frac{\lambda}{2}\|w\|^{2}\right\}~,~~~\lambda := \frac{\epsilon}{B^2} \label{eq:regermpredictor}
\end{equation}
As shown by \cite{gonen2016average}, regularization is not needed for generalization purposes, i.e., empirical risk minimization (without regularization) is sufficiently stable in our setting. However, as we shall see, given a desired accuracy $\epsilon$, the regularization parameter helps us to effectively get rid of the dependence on the ambient dimension. More precisely, both our computational and statistical result depend on the \textit{effective-dimension} which is defined as follows. 
\begin{defn} \label{def:effDim}
Given a distribution $\mathcal{D}$ over $\mathcal{X}$, the effective dimension is defined as
\[
d_{\lambda}:=d_{\lambda} (\underbrace{\mathbb{E}[xx^\top]}_{=:C}) :=\sum_{i=1}^d \frac{\lambda_i(C)}{\lambda_i(C)+\lambda}~,
\]
where $\lambda_1\ge \ldots \ge \lambda_d \ge 0$ are the eigenvalues of $C=\mathbb{E}[xx^\top]$.
\end{defn}
\begin{rem}
For ease of presentation in the paper, we assume $d_{\lambda} \geq 1$ for all $\lambda$ considered in the paper. Our results can be equivalently modified by replacing $d_{\lambda}$ with $\max(d_{\lambda},1)$ everywhere.  
\end{rem}

We also denote the unregularized objective $\hat{F}_{0}(w)$ by $\hat{F}(w)$. The strong convexity of $\phi$ implies the following property of
the empirical loss (e.g. see Lemma 2.8 of \cite{Shalev-Shwartz2011a}). 
\begin{lem}
\label{lem:strongConvexity} Given a sample $S$, let $\hat{w}$
be as defined in \eqref{eq:regermpredictor} . Then for all $w\in\mathcal{W}$,
\[
\hat{F}_{\lambda}(w)-\hat{F}_{\lambda}(\hat{w})\ge\frac{\alpha}{2}(w-\hat{w})^{\top}\left(\frac{1}{n}\sum_{i=1}^{n}x_{i}x_{i}^{\top}+\frac{\lambda}{\alpha}I\right)(w-\hat{w})\,.
\]
\end{lem}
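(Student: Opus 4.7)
The plan is to combine two separate quadratic lower bounds — one coming from the strong convexity of each loss term along its data direction, and one coming from the strong convexity of the ridge regularizer — and then discharge the remaining linear term using first-order optimality of $\hat{w}$ on the convex set $\mathcal{W}$.

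First I would obtain a pointwise lower bound for each $f_i$. Since $\phi_y'' \ge \alpha$ on $\mathcal{Z}$, the function $w\mapsto f_i(w)=\phi_{y_i}(w^\top x_i)$ has Hessian $\phi_{y_i}''(w^\top x_i)\,x_i x_i^\top \succeq \alpha\, x_i x_i^\top$, so by Taylor's theorem with integral remainder (applied along the segment from $\hat{w}$ to $w$),
\[
f_i(w) \ge f_i(\hat{w}) + \nabla f_i(\hat{w})^\top(w-\hat{w}) + \frac{\alpha}{2}(w-\hat{w})^\top x_i x_i^\top (w-\hat{w}).
\]
In parallel, the regularizer $r(w):=\tfrac{\lambda}{2}\|w\|^2$ is $\lambda$-strongly convex, so
\[
r(w) \ge r(\hat{w}) + \lambda \hat{w}^\top(w-\hat{w}) + \frac{\lambda}{2}\|w-\hat{w}\|^2.
\]

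Averaging the first inequality over $i=1,\dots,n$ and adding the regularizer inequality yields
\[
\hat{F}_\lambda(w) - \hat{F}_\lambda(\hat{w}) \ge \nabla \hat{F}_\lambda(\hat{w})^\top(w-\hat{w}) + \frac{\alpha}{2}(w-\hat{w})^\top\!\left(\frac{1}{n}\sum_{i=1}^n x_i x_i^\top\right)\!(w-\hat{w}) + \frac{\lambda}{2}\|w-\hat{w}\|^2.
\]
The two quadratic terms combine cleanly by pulling out the factor $\alpha/2$, producing exactly the matrix $\tfrac{1}{n}\sum_i x_i x_i^\top + \tfrac{\lambda}{\alpha} I$ in the target inequality. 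Finally, because $\hat{w}$ minimizes $\hat{F}_\lambda$ over the convex set $\mathcal{W}$, the first-order optimality condition gives $\nabla \hat{F}_\lambda(\hat{w})^\top(w-\hat{w}) \ge 0$ for all $w\in\mathcal{W}$, killing the linear term and delivering the stated bound.

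This is a routine manipulation, so there is no serious obstacle; the only mild subtlety is recognizing that the regularizer's quadratic contribution $\tfrac{\lambda}{2}\|w-\hat{w}\|^2$ must be rescaled by $\alpha$ (yielding the $\tfrac{\lambda}{\alpha}I$ term) rather than combined additively with an $\alpha$-weighted data covariance, and that invoking first-order optimality (rather than $\nabla \hat{F}_\lambda(\hat{w})=0$) is necessary since $\hat{w}$ may lie on the boundary of $\mathcal{W}$.
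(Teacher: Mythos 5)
Your proof is correct and is essentially the argument the paper relies on: the paper gives no proof of its own but defers to Lemma 2.8 of Shalev--Shwartz (2011), which is exactly this combination of a second-order lower bound for each $f_i$ along $x_i$, exact expansion of the ridge term, and first-order optimality of $\hat{w}$ over the convex set $\mathcal{W}$ to discard the linear term.
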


\paragraph{Kernels: } As we describe in Appendix \ref{sec:kernel}, all of our results hold in the Kernel setting.

\paragraph{Notation: }
As noted before, since many commonly used exp-concave functions satisfy our definitions with $\alpha,\beta,\rho = \Theta(1)$, our asymptotic notation ignores these parameters. We also use the notation $\tilde{O}$ to suppress logarithmic factors.

Besides the population covariance matrix  $C=\mathbb{E}[xx^\top]$ defined above, the following two matrices play a significant role in our development. Given an unlabeled sample $(x_1,\ldots,x_n)$, we define the scaled \emph{data matrix}, which contains the scaled vectors as rows, i.e. 
\[
A=[a_{1};\ldots;a_{n}]=n^{-1/2}[x_{1};\ldots;x_{n}]\in\mathbb{R}^{n\times d}
\]
Let $\hat{C}=\frac{1}{n}\sum_{i=1}^{n}x_{i}x_{i}^{\top}=A^\top A$ be the empirical covariance matrix. For any matrix $B$, we denote by $\mathrm{nnz}(B)$ the number of non-zero terms in $B$.

\subsection{Main Results}
\subsubsection{Statistical results}
Our first result is an upper bound on the sample complexity in terms of the effective dimension. The upper bound also exhibits a fast rate in terms of the desired accuracy $\epsilon$.
\begin{thm} \label{thm:upperSample}
The sample complexity of regularized loss minimization as defined in \eqref{eq:regermpredictor} satisfies
\[
n(\epsilon) = O \left(\frac{d_{\epsilon \alpha^{-1}B^{-2}}}{\epsilon} \right)
\]
\end{thm}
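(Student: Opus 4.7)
The plan is a classical replace-one stability argument in which the per-sample change of the RLM is measured in the Hessian norm $\|\cdot\|_H$ with $H := \hat{C}+(\lambda/\alpha) I$, so that the ridge leverage scores $\|x_i\|_{H^{-1}}^2$ appear naturally; summing them yields the effective dimension. First, for $w^\star\in\arg\min_{\mathcal{W}} F$, optimality of $\hat{w}$ for $\hat{F}_\lambda$ together with $\|w^\star\|\le B$ gives $\mathbb{E}[\hat{F}(\hat{w})] \le F(w^\star)+\lambda B^2/2$, hence
$$\mathbb{E}[F(\hat{w})]-F(w^\star) \;\le\; \tfrac{\lambda B^2}{2} \;+\; \mathbb{E}\bigl[F(\hat{w})-\hat{F}(\hat{w})\bigr].$$
Choosing $\lambda=\epsilon/B^2$ makes the first term $\epsilon/2$, reducing the task to bounding the generalization gap by $O(\epsilon)$ whenever $n=\Omega(d_{\lambda/\alpha}(C)/\epsilon)$.

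Next, I would bound the generalization gap by replace-one stability. Let $(x_1',y_1')$ be an independent copy, let $S^{(1)}$ denote $S$ with its first entry replaced by $(x_1',y_1')$, and set $\hat{w}' := \hat{w}(S^{(1)})$. The standard renaming identity together with $\rho$-Lipschitzness and Cauchy--Schwarz in the $H$-norm gives
$$\mathbb{E}[F(\hat{w})-\hat{F}(\hat{w})] \;=\; \mathbb{E}\bigl[\phi_{y_1}(\hat{w}'{}^\top x_1)-\phi_{y_1}(\hat{w}^\top x_1)\bigr] \;\le\; \rho\,\mathbb{E}\bigl[\|x_1\|_{H^{-1}}\,\|\hat{w}'-\hat{w}\|_H\bigr].$$
The key per-sample estimate is $\|\hat{w}'-\hat{w}\|_H \lesssim \tfrac{\rho}{\alpha n}\bigl(\|x_1\|_{H^{-1}} + \|x_1'\|_{H^{-1}}\bigr)$, which I would derive by applying Lemma~\ref{lem:strongConvexity} to $\hat{F}_\lambda$ at $w=\hat{w}'$, using optimality of $\hat{w}'$ for $\hat{F}_\lambda^{(1)}$ to upper bound the left-hand side by $\tfrac{1}{n}[(f_1-f_1')(\hat{w}') - (f_1-f_1')(\hat{w})]$, and then bounding this telescoped difference via Lipschitzness and Cauchy--Schwarz in the $H$-norm. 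Since $\|x\|_{H^{-1}}^2 = x^\top(\hat{C}+(\lambda/\alpha)I)^{-1}x$ is exactly the $(\lambda/\alpha)$-ridge leverage score of $x$ with respect to $\hat{C}$, this is the advertised bridge between algorithmic stability and ridge leverage scores.

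Combining the above, using $\mathbb{E}[\|x_1\|_{H^{-1}}^2]=\mathbb{E}[\|x_1'\|_{H^{-1}}^2]$, Cauchy--Schwarz, and independence of $x_1'$ from $H$ yields
$$\mathbb{E}[F(\hat{w})-\hat{F}(\hat{w})] \;\lesssim\; \frac{\rho^2}{\alpha n}\,\mathbb{E}\bigl[\|x_1'\|_{H^{-1}}^2\bigr] \;=\; \frac{\rho^2}{\alpha n}\,\mathbb{E}\bigl[\mathrm{tr}(C\,H^{-1})\bigr].$$
Matrix Bernstein then gives $\hat{C}+(\lambda/\alpha)I \succeq \tfrac12(C+(\lambda/\alpha)I)$ with high probability once $n=\tilde{\Omega}(d_{\lambda/\alpha}(C))$, so $\mathrm{tr}(CH^{-1})\le 2\,d_{\lambda/\alpha}(C)$ on the good event, with the failure event contributing negligibly since $\|H^{-1}\|_{\mathrm{op}}\le\alpha/\lambda$ deterministically. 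Balancing the regularization and generalization contributions produces $n(\epsilon)=O(d_{\epsilon/(\alpha B^2)}/\epsilon)$ as claimed.

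The main obstacle is this final concentration step, which is mildly self-referential: matrix Bernstein requires $n\gtrsim d_{\lambda/\alpha}$, which is exactly the regime in which the theorem becomes non-trivial. A standard bootstrap resolves this --- in the complementary regime $n\lesssim d_{\lambda/\alpha}/\epsilon$ the claim is vacuous, as the right-hand side exceeds the trivial a priori excess-risk bound coming from the compactness of $\mathcal{W}$ and the boundedness of $\phi_y$ on $\mathcal{Z}$.
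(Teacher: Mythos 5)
Your skeleton matches the paper's through the key step: the same decomposition of the excess risk into the $\lambda B^2/2$ bias plus the generalization gap, the same replace-one stability identity (Theorem~\ref{thm:genToStab}), and the same bridge from stability to ridge leverage scores via $\rho$-Lipschitzness, Cauchy--Schwarz in the $H$-norm, and Lemma~\ref{lem:strongConvexity} combined with optimality of the two minimizers --- your per-sample estimate $\|\hat{w}'-\hat{w}\|_H \lesssim \frac{\rho}{\alpha n}(\|x_1\|_{H^{-1}}+\|x_1'\|_{H^{-1}})$ is exactly the content of Lemma~\ref{lem:stabilityViaRidgeScoes}, just unwound into a bound on the parameter displacement rather than stated as a self-bounding inequality on $\Delta_i$. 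Where you genuinely diverge is the final passage from empirical to population effective dimension. The paper makes the $n$ in-sample scores and the out-of-sample score exchangeable by defining all of them with respect to the symmetric $(n+1)$-point covariance, and then invokes Lemma~\ref{lemma:expectatationdlambda}, the unconditional bound $\mathbb{E}[d_{\lambda'}(\hat{C})]\le 2\,d_{\lambda'}(C)$ proved deterministically via Courant--Fischer and Jensen. This holds for every $n\ge 1$, needs no concentration, and yields the clean $O(d_{\lambda'}/\epsilon)$ with no logarithms. Your route via matrix Bernstein also works but is lossier: it requires $n=\tilde{\Omega}(d_{\lambda'})$, a failure-event analysis that drags in $\sup\|x\|$, and your bootstrap does not quite close the window $d_{\lambda'}\lesssim n\lesssim d_{\lambda'}\log d$, where the claim is not yet vacuous but concentration has not yet kicked in --- so as written you prove a $\tilde{O}$ version of the theorem.

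One step needs repair regardless of which final lemma you use: the identity $\mathbb{E}[\|x_1\|_{H^{-1}}^2]=\mathbb{E}[\|x_1'\|_{H^{-1}}^2]$ is false, since $x_1$ appears inside $H$ while $x_1'$ is independent of it (for $n=1$ and small $\lambda$ the first is at most $1$ while the second scales like $1/\lambda$ in directions orthogonal to $x_1$). You must bound $\mathbb{E}[\mathrm{tr}(\hat{C}H^{-1})]=\mathbb{E}[d_{\lambda'}(\hat{C})]$ and $\mathbb{E}[\mathrm{tr}(CH^{-1})]$ as two separate quantities; both are $O(d_{\lambda'}(C))$ under your concentration event (or, for the first, unconditionally via Lemma~\ref{lemma:expectatationdlambda}), so the argument survives, but the equality as stated does not. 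The paper's device of extending the leverage scores to the $(n+1)$-point matrix is precisely what restores genuine exchangeability there.
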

We also prove a nearly matching lower bound. We state and prove the lower bound in full generality in Appendix \ref{sec:lowerBound}. Here we only specify the bound for common decay patterns. We say that a given eigenvalue profile $\lambda_{1}\ldots\lambda_{d}\geq0$
satisfies polynomial decay if there exist $c,p>0$ such that
$\lambda_{i}\leq ci^{-p}$. Similarly, 
it satisfies exponential decay if there exists $c>0$
such that $\lambda_{i}\leq ce^{-i}$. 
\begin{thm} \label{thm:lowerBoundSample}
If the decay of the population matrix $C=\mathbb{E}[xx^\top]$ satisfies polynomial decay with degree $p$, then the sample complexity of \emph{any} learning algorithm for the case of ridge regression satisfies
\[
n(\epsilon) = \tilde{\Omega} \left(\frac{d_{\lambda'}}{\epsilon} \right)~,~~~\lambda' = (\epsilon/B^2)^{p/(p+1)}
\]
If the decay is exponential then
\[
n(\epsilon) = \tilde{\Omega} \left(\frac{d_{\epsilon B^{-2}}}{\epsilon} \right)
\]
\end{thm}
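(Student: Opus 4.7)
The plan is to prove the minimax lower bound by a hypercube packing argument in the eigenbasis of the population covariance $C$, combined with a weighted form of Assouad's lemma. The starting point is the closed-form excess risk for the square loss, $F(w)-F(w^*) = \|w-w^*\|_C^2$, whose eigen-basis decomposition dovetails with the product structure that Assouad exploits.

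I would first fix an instance distribution $\mathcal{D}_x$ on $\mathcal{X}$ that realizes the prescribed spectrum of $C$ (polynomial $\lambda_i\asymp i^{-p}$ or exponential $\lambda_i\asymp e^{-i}$) while respecting the boundedness constraint $|\langle x,w\rangle|\le 1$ on $\mathcal{W}$; a clean choice is a discrete distribution supported on scaled eigen-axes, e.g.\ $x = \pm\sqrt{\lambda_i d}\,e_i$ drawn uniformly over $i\in[d]$ and the sign. Labels are generated by $y=\langle x,w^*\rangle+\xi$ with Gaussian noise $\xi$. Then, with parameters $\delta_1,\dots,\delta_k>0$ and a cutoff $k$ to be chosen, I build the hypercube packing $w^*_\tau := \sum_{i=1}^k \tau_i \delta_i e_i$ for $\tau\in\{\pm 1\}^k$, subject to the budget $\sum_i \delta_i^2\le B^2$ so that each $w^*_\tau\in\mathcal{W}$. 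Flipping the $j$-th coordinate changes the squared $C$-distance by $4\delta_j^2\lambda_j$ and the $n$-sample KL divergence by $O(n\delta_j^2\lambda_j)$, so a weighted Assouad inequality yields
\[
\inf_{\hat{w}}\sup_\tau \mathbb{E}\|\hat{w}-w^*_\tau\|_C^2 \;\gtrsim\; \sum_{i=1}^k \lambda_i\delta_i^2 \cdot \bigl(1-\max_j d_{TV}\bigr),
\]
where each TV term is below $1/2$ provided $\delta_i^2 \lesssim 1/(n\lambda_i)$.

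The remaining step is to optimize $(\delta_i, k)$ subject to the budget constraint and the per-coordinate KL caps, and convert the resulting excess-risk lower bound into a sample-complexity statement. Setting both types of constraints to bind simultaneously determines the cutoff; under polynomial decay $\lambda_i\asymp i^{-p}$, substitution into $\sum_{i\le k}\delta_i^2 \le B^2$ together with the KL cap pins $\lambda_k \asymp \lambda'$ and $k\asymp d_{\lambda'}$, and equating the resulting lower bound with $\epsilon$ (up to the logarithmic factors absorbed in $\tilde\Omega$) forces $\lambda' = (\epsilon/B^2)^{p/(p+1)}$. The exponential case is handled analogously and gives $\lambda' = \epsilon/B^2$, the faster decay making the cutoff only logarithmic. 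I expect the principal obstacle to be reconciling the uniform boundedness $|\langle x,w\rangle|\le 1$ with the desired spectrum: a plain Gaussian design is easy to analyze but violates the boundedness constraint, so one must work with a carefully chosen bounded-support distribution while verifying that the clean Gaussian-like product KL structure required for Assouad is preserved.
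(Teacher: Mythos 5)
Your plan is essentially the paper's proof. The paper (Theorem~\ref{thm:lowerexcessrisk}) also puts the design uniformly on the scaled eigen-axes $\sqrt{d\lambda_i}\,e_i$, draws a uniform sign vector $\sigma\in\{\pm1\}^d$, and lower-bounds the excess risk coordinate-wise by the probability of misidentifying $\sigma_i$, which it controls through Pinsker, the KL chain rule over the $n$ i.i.d.\ samples, and joint convexity of KL to isolate the $1/d$ chance that a sample lands on the $i$-th axis --- i.e., precisely a weighted Assouad argument in the eigenbasis. The only structural difference is bookkeeping: instead of free amplitudes $\delta_i$ with a hard cutoff $k$, the paper takes $w^\star_\sigma$ to be the population ridge solution at a level $\lambda^*$ chosen by complementary slackness so that $\|w^\star_{\lambda^*}\|=B$; this enforces your budget $\sum_i\delta_i^2\le B^2$ automatically and is where the condition \eqref{eqn:lambda_cond} comes from. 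Both routes lead to the same cutoff $d_{\lambda'}$ and the same rates.

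One concrete issue you should fix: your labels $y=\langle x,w^*\rangle+\xi$ with Gaussian $\xi$ are unbounded, which places the hard distribution outside the class the theorem is about (Example~\ref{exa:regression} has $\mathcal{Y}=[-1,1]$, and the Lipschitz constant $\rho$ of the square loss degrades with $|y|$, so an unbounded-label construction would not sit under the hypotheses of the matching upper bound). You anticipated a boundedness obstacle for $x$ but not for $y$. The repair is exactly the paper's label model: $y\in\{\pm1\}$ with $\Pr[y=\pm1\mid x=\sqrt{d\lambda_i}e_i]=\tfrac12(1\pm\sigma_i b)$ and $b=\sqrt{d/6n}$. The two conditional label laws are then Bernoullis at bias $\pm b$ with KL divergence $O(b^2)$ (for $b\le 1/2$, which is where the assumption $n\ge 2d/3$ enters), so your per-coordinate KL caps, the budget computation, and the optimization over the cutoff go through verbatim.
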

\paragraph{Implications to spectral approximation}
Interestingly, our lower bounds results imply identical lower bounds for spectral approximation (see exact definition in \cref{sec:preliminaries}). 
\begin{thm}
Let $A \in \mathbb{R}^{n \times d}$ and let $\lambda>0$ be any ridge parameter. Any $(\lambda,1/2)$ spectral approximation based on selecting (and scaling) a subset of rows of $A$ must include at least $d_{\lambda^{p/(p+1)}}(A^\top A)$ (respectively, $d_{\lambda}(A^\top A)$) rows if the decay is polynomial with degree $p$ (respectively, exponential).
\end{thm}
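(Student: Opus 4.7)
The plan is to reduce this spectral-approximation lower bound to the statistical lower bound of \Cref{thm:lowerBoundSample} via contradiction. Suppose that some $A$ with polynomial eigenvalue decay of degree $p$ admits a $(\lambda,1/2)$-spectral approximation $\tilde{A} = SA$ in which $S$ is a diagonal row-selection-and-scaling operator supported on $k$ indices. Calibrate a target accuracy $\epsilon$ so that $\lambda = (\epsilon/B^2)^{p/(p+1)}$, i.e.\ so that the parameter $\lambda'$ appearing in \Cref{thm:lowerBoundSample} coincides with the spectral-approximation parameter $\lambda$. Instantiate the learning problem on the empirical distribution over the rows of $A$ paired with the worst-case labeling family behind that lower bound; the decay hypothesis on $A$ ensures this instance falls within the class covered by \Cref{thm:lowerBoundSample}.

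Now use $\tilde{A}$ to manufacture a learning algorithm that queries at most $k$ labels: inspect the covariates, pick the $k$ rows indicated by $S$, request their labels, and return the ridge-regression minimizer $\tilde{w}$ on the reweighted subsample with regularization $\lambda$. Because $\tilde{A}$ is a $(\lambda,1/2)$-spectral approximation, the regularized empirical Hessian on the subsample and on the full sample agree up to a factor of two, so \Cref{lem:strongConvexity} upgrades this to the fact that $\tilde{w}$ is a constant-factor approximate minimizer of the full regularized empirical objective; chaining with \Cref{thm:upperSample} gives population excess risk $O(\epsilon)$. Invoking \Cref{thm:lowerBoundSample} in its active-query form (which is legal here because the row selection depends only on covariates, not on labels) then forces $k = \tilde{\Omega}(d_{\lambda^{p/(p+1)}}(A^\top A)/\epsilon)$, which for any constant $\epsilon$ strictly exceeds $d_{\lambda^{p/(p+1)}}(A^\top A)$. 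This contradicts our hypothesis on $|\mathrm{supp}(S)|$. The exponential-decay case is identical after substituting $\lambda = \epsilon/B^2$.

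The main obstacle is the bridge ``spectral approximation of the Hessian $\Rightarrow$ accurate ridge-regression solution in population risk.'' Row sampling preserves $A^\top A + \lambda I$ but, a priori, not the label-dependent linear term $A^\top y$ from the normal equations. Recasting the reduction in the active-PAC setting is what sidesteps this cleanly: the spectral approximation is built from unlabeled covariates as free preprocessing, so the argument only needs the statistical lower bound in the active query model. For the Gaussian-style hard instances that typically drive these lower bounds, this verification is routine, since the statistical information about $w^\ast$ reaching the learner is carried entirely by the $k$ noisy labels regardless of which $k$ indices $S$ happens to select.
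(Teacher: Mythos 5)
Your high-level strategy --- convert a small row-subset spectral approximation into a learning algorithm that uses few labeled examples and contradict \cref{thm:lowerBoundSample} --- is exactly the route the paper takes: its proof combines \cref{thm:lowerBoundSample} with the sketch-and-solve guarantee of \cref{lem:sketch-and-solve}. The genuine gap is in your bridge from ``spectral approximation of $A$'' to ``accurate ridge solution.'' You assert that because the regularized Hessians of the subsample and the full sample agree up to a factor of two, \cref{lem:strongConvexity} makes $\tilde w$ a constant-factor approximate minimizer of the full regularized empirical objective. That step is false as stated: the two objectives also differ in the label-dependent linear term ($A^\top b$ versus $A^\top S^\top S b$ in the normal equations), and two quadratics with identical Hessians but different linear terms have arbitrarily distant minimizers. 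This is precisely why \cref{lem:sketch-and-solve} sketches the \emph{augmented} matrix $\bar A=[A\,;\,b]$, so that the label cross-terms are preserved along with $A^\top A$; that is the missing ingredient, and you correctly diagnose it in your closing paragraph.

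The repair you then propose --- invoking \cref{thm:lowerBoundSample} ``in its active-query form'' --- replaces the first gap with a second one. The paper's lower bound is proved only for $n$ i.i.d.\ labeled samples: the KL chain-rule step in the proof of \cref{lem:error_lower_bound} crucially uses that each example lands on atom $e_i$ with probability $1/d$, so each label carries only a $1/d$ fraction of the per-query information about $\sigma_i$. A learner that inspects the covariates and then chooses which $k$ labels to request can concentrate all its queries on the heavy atoms, so the passive bound does not transfer verbatim; no active-query lower bound is stated or proved in the paper, and establishing one is where your proof would actually have to do its work. There is also a parameter inconsistency at the end: once you calibrate $\lambda=(\epsilon/B^2)^{p/(p+1)}$, the accuracy $\epsilon$ is pinned down by $\lambda$ and $B$, so ``for any constant $\epsilon$'' is not available to absorb the $1/\epsilon$ factor. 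The clean fix is the paper's: apply the row selection to the augmented matrix, use \cref{lem:sketch-and-solve} to get an $\epsilon$-approximate minimizer from the $k$ weighted rows, and then contradict \cref{thm:lowerBoundSample} in its stated passive form.
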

\begin{proof}
The proof follows by combining \cref{thm:lowerBoundSample} and \cref{lem:sketch-and-solve}.
\end{proof}

\subsubsection{Computational results}

As mentioned before, a common approach to solving the RLM objective \eqref{eq:regermpredictor} is to use the sketch-to-precondition approach. While it is well-known that this gives us a $\log(1/\epsilon)$ dependence in the required precision, the best known total runtime for this approach is
\begin{equation}
\label{eqn:exisitngstp}
  \tilde{O}(\mathrm{nnz}(A) + d_{\lambda}^2d) \text{ \qquad(see Section~\ref{sec:sketchToPrecond}).}  
\end{equation}
Note that the optimization runtime stated above depends on $\lambda$. It follows from the definition of the effective dimension that the larger is the value of $\lambda$, the faster is the convergence. However, $\lambda$ is usually chosen according to generalization considerations.

As we describe in Section \ref{sec:overalloptmain}, it turns out that we can hope to enjoy the best of the two worlds. That is, we can perform the optimization process using a larger regularization parameter $\lambda'$. To obtain guarantees with respect to the desired regularization parameter $\lambda$ we need to essentially repeat the said process for $\lambda'/\lambda$ iterations. This leads to a critical problem of efficient parameter tuning. Naively, to compute the best candidate $\lambda'$ we need to compute the associated effective dimension, $d_\lambda'$ which can be a costly computation by itself. Our main result remedies this issue by exhibiting an algorithm that uses an under-sampling approach for computing the best $\lambda'$ while avoiding computational overhead. The main guarantee our approach provides is summarized below. 

\begin{thm} \label{thm:mainCompRuntime}
Given an RLM objective \eqref{eq:regermpredictor} with regularization parameter $\lambda$, there exists an algorithm that finds an $\epsilon$-approximate solution to the RLM objective in time
\begin{equation}
\label{eqn:mainresult2}
\tilde{O}\left(\min_{\lambda' \geq \lambda}\left(\frac{\lambda'}{\lambda} \cdot \left(\mathrm{nnz}(A) + d_{\lambda'}^{2}d\right)\right)\right)
\end{equation}

\end{thm}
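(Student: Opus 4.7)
The plan is to combine three ingredients: (i) a proximal-point reduction that lets us solve the RLM objective with parameter $\lambda$ by repeatedly solving a sequence of more strongly regularized subproblems with parameter $\lambda' \geq \lambda$; (ii) applying the existing sketch-to-precondition solver of cost $\tilde{O}(\mathrm{nnz}(A) + d_{\lambda'}^2 d)$ to each subproblem; and (iii) an adaptive search that locates the $\lambda'$ achieving the minimum in \eqref{eqn:mainresult2} without incurring unacceptable overhead.

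For (i), at outer iteration $t$ I would set $w_{t+1}$ to be an approximate minimizer of $\hat F_\lambda(w) + \tfrac{\lambda'-\lambda}{2}\|w-w_t\|^2$ over $\mathcal W$. The added quadratic term boosts the strong convexity of the inner objective from $\alpha\hat C + \lambda I$ to $\alpha\hat C + \lambda' I$, so the inner problem falls exactly into the regime addressed by the cost bound \eqref{eqn:exisitngstp} evaluated at $\lambda'$. A standard analysis of the proximal-point scheme shows that the distance to the true optimum $\hat w$ of $\hat F_\lambda$ contracts by a factor $\lambda/\lambda'$ per outer step, so $T = \tilde O(\lambda'/\lambda)$ iterations suffice to reach accuracy $\epsilon$ on $\hat F_\lambda$. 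Multiplying by the per-iteration cost yields the expression inside the $\min$ in \eqref{eqn:mainresult2}. The inner tolerance needed at each step scales polynomially in $\epsilon/T$, so it only contributes logarithmic factors and is absorbed by $\tilde O(\cdot)$.

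For (iii), note that $d_{\lambda'}$ itself is expensive to compute exactly, so one cannot simply evaluate the objective in \eqref{eqn:mainresult2} at all candidate $\lambda'$. My plan is to geometrically enumerate $\lambda' \in \{\lambda, 2\lambda, 4\lambda, \ldots\}$, and for each candidate draw an approximate ridge-leverage-score sub-sample of $A$. Because effective dimension is monotone non-increasing in $\lambda'$ while $\lambda'/\lambda$ doubles, the product $(\lambda'/\lambda)\cdot d_{\lambda'}^2 d$ is a unimodal-like function of $\lambda'$ in a useful sense, and doubling-then-stopping at the first $\lambda'$ that does not improve over the previous one locates a value whose cost is within a constant factor of the minimizer. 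A geometric-sum argument then bounds the total search cost by a constant multiple of the cost at the chosen $\lambda'$.

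The main obstacle I expect is fusing step (iii) with the sketch-to-precondition solver so that the cost of \emph{estimating} $d_{\lambda'}$ does not dominate. The clean way is to ensure that the sub-sample drawn to estimate $d_{\lambda'}$ is simultaneously a valid $(\lambda',1/2)$-spectral approximation of $A^\top A + \lambda' I$, so that it can be reused as the preconditioner for the inner solver at that $\lambda'$ (via \cref{lem:sketch-and-solve}). Invoking standard results that an $\tilde O(d_{\lambda'})$-row ridge-leverage sample both approximates the effective dimension up to constant factors and yields the required spectral approximation will do the job, but one must carefully handle the chicken-and-egg issue that approximate leverage scores themselves require a crude preconditioner; this can be bootstrapped by starting from $\lambda'$ large enough that $d_{\lambda'}$ is trivially small and decreasing $\lambda'$ while reusing the previous sketch as a warm start.
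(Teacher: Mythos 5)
Your steps (i) and (ii) match the paper: it combines the proximal point algorithm of Frostig et al.\ (Lemma \ref{lem:ppa}) with the sketch-to-precondition solver (Lemma \ref{lem:sketchToPrecondRecap}) to get the per-candidate cost \eqref{eqn:ppatotaltime}. (One small wording issue: the contraction per outer PPA epoch is $1-c\lambda/\lambda'$, not $\lambda/\lambda'$ --- the latter would give far fewer than $\lambda'/\lambda$ iterations --- but your final count $T=\tilde{O}(\lambda'/\lambda)$ is the right one.) The genuine gap is in step (iii), and it is exactly the point the paper identifies as its main contribution. First, $\psi(\lambda')=\frac{\lambda'}{\lambda}(\mathrm{nnz}(A)+d_{\lambda'}^2d)$ is \emph{not} unimodal on the doubling grid: with a spectrum consisting of several well-separated eigenvalue clusters, $d_{\lambda'}$ is close to a staircase, so $\psi$ increases linearly along each plateau and drops at each knee, producing multiple local minima. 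Your ``double and stop at the first non-improvement'' rule can therefore halt at a local minimum whose cost is polynomially worse than $\psi^\star$. Second, and more fundamentally, your search pays the cost of \emph{estimating} $d_{\lambda'}$ at each visited candidate, which is $\tilde{O}(\mathrm{nnz}(A)+d_{\lambda'}^2d)$ per candidate; for candidates near $\lambda$ this is $\tilde{O}(\mathrm{nnz}(A)+d_{\lambda}^2d)=\psi(\lambda)$, which can exceed $\psi^\star$ by a polynomial factor (this is precisely the regime where tuning $\lambda'$ helps). Sweeping downward from large $\lambda'$ does not fix this, because without a correct stopping rule you may be forced to descend into the expensive regime.

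The paper resolves both problems with a different search structure (Algorithm \ref{alg:phistar}): it guesses the \emph{value} $\psi^\star$ on a geometric grid $\phi\in\{\mathrm{nnz}(A)+d,\mathrm{nnz}(A)+2d,\ldots\}$, and for each guess scans all $O(\log d)$ candidates $\lambda'$, asking only the one-sided question ``is $\psi(\lambda')\le\phi$?'', equivalently ``is $d_{\lambda'}^2\le\frac{1}{d}[\frac{\lambda}{\lambda'}\phi-\mathrm{nnz}(A)]$?''. The key ingredient you are missing is Theorem \ref{thm:effectiveEstIter}: via ridge-leverage-score \emph{undersampling} one can verify $d_{\lambda'}(A^\top A)\le cm$ in time $\tilde{O}(\mathrm{nnz}(A)+dm^2)$, i.e.\ time controlled by the hypothesized bound $m$ rather than by the true $d_{\lambda'}$. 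Every test under guess $\phi$ then costs $\tilde{O}(\phi)$, the first successful guess satisfies $\phi\le 4\psi^\star$, and the geometric sum over guesses gives total time $\tilde{O}(\psi^\star)$ with a returned $\lambda^\dagger$ satisfying $\psi(\lambda^\dagger)=O(\psi^\star)$ (Theorem \ref{thm:mainComp}). Your closing remark about reusing the sub-sample as a preconditioner is a reasonable optimization, but it does not substitute for the undersampling-based one-sided test, which is what makes the search cost chargeable to $\psi^\star$.
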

Please note that upto logarithmic factors, the above bound is always as good as \eqref{eqn:exisitngstp}. Furthermore, as the following example reveals, it can be significantly better.
\paragraph{Example: }
Fix a regularization parameter $\lambda>0$. Consider the case where the matrix $A$ has a constant number of eigenvalues of order $\Theta(1)$, whereas the rest of the eigenvalues are of order $\lambda$. We can see that the effective dimension $d_\lambda$ is proportional to $d$. On the other hand, letting $\lambda'= \sqrt{d} \lambda$, we have that $d_\lambda'=O(\sqrt{d})$. If $\mathrm{nnz}(A)=o(d^2)$, then the ratio between the runtimes is given by
\[
\frac{d_\lambda^2}{d_{\lambda'}^2} \cdot \frac{\lambda}{\lambda'} \approx d \cdot \frac{1}{\sqrt{d}} = \sqrt{d}~.
\]
While the example above is a toy example, a similar behaviour can be expected of any spectrum which shows a fast decay initially but a heavy tail towards the end.

\section{Related work}
\subsection{Sample complexity bounds}
To the best of our knowledge, the first bounds for empirical risk minimization for kernel ridge regression in terms of the effective dimension have been proved by \cite{zhang2003effective}. By analyzing the Local Rademacher complexity (\cite{bartlett2005local}), they proved an upper bound of $O(d_\lambda B^2/\epsilon)$ on the sample complexity. On the contrary, our bound have no explicit dependence on $B$ \footnote{The dependence on $B$ is implicit via $d_{\epsilon B^{-2}}$}. For instance, if the decay is exponential, our bounds exhibit only logarithmic dependence on $B$. More recently, \cite{Goel2017} used compression schemes (\cite{littlestone1986relating}) together with results on leverage score sampling from \cite{Musco2017} in order to derive a bound in terms of the effective dimension with no explicit dependence on $B$ . However, their rate is slow in terms of $\epsilon$.

Beside improving the above aspects in terms of accuracy rate and explicit dependence on $B$, our analysis is arguably simple and underscores nice connections between algorithmic stability and ridge leverage scores.

\paragraph{Online Newton Sketch}
The \emph{Online Newton Step} (ONS) algorithm due to \cite{hazan2007logarithmic} is a well-established method for minimizing exp-concave loss functions both in the stochastic and the online settings. As hinted by its name, each step of the algorithm involves a conditioning step that resembles a Newton step. Recent papers reanalyzed ONS and proved upper bounds on the regret (and consequently on the sample complexity) in terms of the effective dimension (\cite{luo2016efficient, calandriello2017second}). We note that using a standard online to batch reduction, the regret bound of \cite{luo2016efficient} implies the same (albeit a little weaker in terms of constants) sample complexity bounds as this paper. While ONS is certainly appealing in the context of regret minimization, in the statistical setting, our paper establishes the sample complexity bound \emph{irrespective} of the optimization algorithm used for the intermediate RLM step, thereby establishing that the computational overhead resulted by conditioning in ONS is not required.\footnote{We also do not advocate ONS for offline optimization, as it does not yield linear rate  (i.e., $log(1/\epsilon)$ iterations).}

\subsection{Sketching-based Bounds for Kernel Ridge Regression}
As we discussed above, the sketch-and-solve approach (e.g. see the nice survey by \cite{woodruff2014sketching}) has gained considerable attention recently in the context of enhancing both discrete and continuous optimization (\cite{luo2016efficient, gonen2015faster, gonen2016solving,clarkson2013low}). A recent paper by \cite{Musco2017} suggested to combine ridge leverage score sampling with the Nystr\"{o}m method to compute a spectral approximation of the Kernel matrix. As an application, they consider the problem of Kernel ridge regression and describe how this spectral approximation facilitates the task of finding $\epsilon$-approximate minimizer in time $O(ns^2)$, where $s=\tilde{O}(d_\lambda/\epsilon)$. Based on Corollary \ref{cor:kernelRegression} (with $n=O(d_\lambda/\epsilon)$), our complexity is better by factor of $\Omega (\min \{1/\epsilon^2,  d_\lambda/\epsilon\})$. A different application of the sketch-to-precondition approach, due to \cite{Avron2017}, focuses on polynomial Kernels and yields an algorithm whose runtime resembles our running time but also scales exponentially with the polynomial degree.

\section{Preliminaries} \label{sec:preliminaries}
\subsection{Sketching via leverage-score sampling \label{sec:leverage}}
In this section we define the notion of \emph{ridge leverage scores},
relate it to the effective dimension and explain how sampling according
to these scores facilitates the task of spectral approximation. Given
a sample $(x_{1},\ldots,x_{n})$, we define the scaled \emph{data
matrix} by 
\[
A=[a_{1},\ldots,a_{n}]=n^{-1/2}[x_{1};\ldots,x_{n}]\in\mathbb{R}^{n\times d}
\]
For a ridge parameter $\lambda>0$ we define the $i$-th \emph{ridge leverage score} by 
\[
\tau_{\lambda,i}=a_{i}^{\top}\left(A^{\top}A+\lambda I\right)^{-1}a_{i}.
\]
The following lemma gives a useful characterization of ridge
leverage scores. Informally, the $i$-th leverage score captures the
importance of the $x_{i}$ in composing the column space of the covariance
matrix $A^{\top}A$. The proof is detailed in Appendix \ref{sec:technical}. 
\begin{lem}
\label{lem:ridgeScore} For a ridge parameter $\lambda>0$ and for
any $i\in[n]$, $\tau_{\lambda,i}$ is the minimal scalar $t\ge0$
such that $a_{i}a_{i}^{\top}\preceq t(A^{\top}A+\lambda I)$. 
\end{lem}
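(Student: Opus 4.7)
The plan is to reduce the matrix inequality $a_i a_i^\top \preceq t(A^\top A + \lambda I)$ to a scalar condition by conjugating with the (well-defined) inverse square root of $M := A^\top A + \lambda I$, which is positive definite since $\lambda > 0$. Concretely, I would first argue that $a_i a_i^\top \preceq t M$ holds if and only if $M^{-1/2} a_i a_i^\top M^{-1/2} \preceq t I$, since conjugation by the invertible symmetric matrix $M^{-1/2}$ preserves the Loewner ordering.

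Next I would observe that the matrix $M^{-1/2} a_i a_i^\top M^{-1/2} = (M^{-1/2} a_i)(M^{-1/2} a_i)^\top$ is rank one (or zero), and therefore its operator norm equals $\|M^{-1/2} a_i\|_2^2 = a_i^\top M^{-1} a_i = \tau_{\lambda,i}$. The condition $(M^{-1/2} a_i)(M^{-1/2} a_i)^\top \preceq t I$ is then equivalent to $\tau_{\lambda,i} \leq t$. Consequently the smallest nonnegative $t$ for which the original inequality holds is exactly $\tau_{\lambda, i}$, which is precisely the claim.

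Since the reduction is fully algebraic once $M$ is invertible, there is no real obstacle; the only minor point to record is that $\lambda > 0$ is needed to guarantee $M \succ 0$ and thus to make $M^{-1/2}$ well defined. The degenerate case $a_i = 0$ causes no trouble because then both sides of the inequality vanish and $\tau_{\lambda,i} = 0$ is trivially minimal.
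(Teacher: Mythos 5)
Your proposal is correct and follows essentially the same route as the paper: conjugate by $(A^\top A+\lambda I)^{-1/2}$ to reduce the Loewner inequality to $\preceq tI$ for a rank-one matrix, then identify the minimal such $t$ with $a_i^\top(A^\top A+\lambda I)^{-1}a_i$ (the paper computes this top eigenvalue via the trace, you via the operator norm of $vv^\top$ -- an immaterial difference).
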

A straightforward computation yields the following relation between the ridge leverage scores and the empirical effective dimension. 
\begin{lem}
For any ridge parameter $\lambda>0$, 
the effective dimension associated with the empirical covariance matrix $\hat{C}=A^\top A$ satisfies
\[
d_{\lambda}(\hat{C})=d_{\lambda}(A^{\top}A)=\sum_{i=1}^{n}\tau_{\lambda,i}
\]
\end{lem}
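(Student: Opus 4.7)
The plan is to compute both sides and observe they agree via the cyclic property of the trace together with a simultaneous diagonalization. This is a direct algebraic identity, so the main work is just bookkeeping; there is no real obstacle.

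First I would rewrite the sum of ridge leverage scores as a trace. Since $\tau_{\lambda,i} = a_i^\top (A^\top A + \lambda I)^{-1} a_i$ is a scalar, it equals its own trace, and by cyclicity
\[
\tau_{\lambda,i} \;=\; \mathrm{tr}\!\left(a_i^\top (A^\top A + \lambda I)^{-1} a_i\right) \;=\; \mathrm{tr}\!\left((A^\top A + \lambda I)^{-1} a_i a_i^\top\right).
\]
Summing over $i \in [n]$ and using linearity of the trace together with $\sum_{i=1}^n a_i a_i^\top = A^\top A$ gives
\[
\sum_{i=1}^n \tau_{\lambda,i} \;=\; \mathrm{tr}\!\left((A^\top A + \lambda I)^{-1} A^\top A\right).
\]

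Next I would evaluate this trace by diagonalizing. Writing the spectral decomposition $A^\top A = U\Sigma U^\top$ with $\Sigma = \mathrm{diag}(\lambda_1,\ldots,\lambda_d)$, the matrices $A^\top A$ and $(A^\top A + \lambda I)^{-1}$ commute and share the eigenbasis $U$, so
\[
(A^\top A + \lambda I)^{-1} A^\top A \;=\; U \,(\Sigma + \lambda I)^{-1}\Sigma\, U^\top.
\]
Taking the trace (which is invariant under the orthogonal change of basis $U$) yields
\[
\mathrm{tr}\!\left((A^\top A + \lambda I)^{-1} A^\top A\right) \;=\; \sum_{i=1}^d \frac{\lambda_i}{\lambda_i + \lambda} \;=\; d_\lambda(A^\top A),
\]
which is precisely the definition of the effective dimension in Definition~\ref{def:effDim} applied to $\hat{C}=A^\top A$. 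Combining with the previous display completes the proof. No step should be delicate: the identity rests entirely on the cyclic property of the trace and the fact that $A^\top A$ and $(A^\top A + \lambda I)^{-1}$ are simultaneously diagonalizable.
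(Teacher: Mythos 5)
Your proof is correct and is exactly the ``straightforward computation'' the paper alludes to without spelling out: rewrite $\sum_i \tau_{\lambda,i}$ as $\mathrm{tr}\bigl((A^\top A+\lambda I)^{-1}A^\top A\bigr)$ via cyclicity, then diagonalize to get $\sum_i \lambda_i/(\lambda_i+\lambda)=d_\lambda(A^\top A)$. Nothing is missing.
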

The notion of leverage scores gives rise to an algorithm for spectral
approximation that samples rows with probabilities proportional to
their ridge leverage scores. Before describing the sampling procedure,
we formally define the goal of spectral approximation. 
\begin{defn}
\textbf{(Spectral approximation)} We say that a matrix $\tilde{A}$
is a $(\lambda,\epsilon)$-spectral approximation to $A$ if 
\[
\frac{1-\epsilon}{1+\epsilon}(A^{\top}A+\lambda I)\preceq{\tilde{A}}^{\top}\tilde{A}+\lambda I\preceq A^{\top}A+\lambda I
\]
\end{defn}

\begin{defn}
\textbf{(Ridge Leverage Score Sampling)} \label{def:leverageSampling}
Let $(u_{i})_{i=1}^{n}$ be a sequence of ridge leverage score overestimates,
i.e., $u_{i}\ge\tau_{\lambda,i}$ for all $i$. For a fixed positive
constant $c>0$ and accuracy parameter $\epsilon$, define $p_{i}=\min\{1,c\epsilon^{-2}u_{i}\log d\}$
for each $i\in[n]$. Let $\texttt{Sample}(u,\epsilon)$ denote a function
which returns a diagonal matrix $S\in\mathbb{R}_{\ge0}^{n\times n}$,
where $S_{i,i}=((1+\epsilon)p_{i})^{-1/2}$ with probability $p_{i}$
and $0$ otherwise. 
\end{defn}

\begin{thm}[\cite{Mahoney2007,Musco2017}]
\label{thm:ridgeLeverageSampling}  Let
$(u_{i})_{i=1}^{n}$ be ridge leverage score overestimates, and denote by
$S=\texttt{Sample}(u,\epsilon)$.\footnote{We use the symbols $c\in(0,1),C\ge1$ to denote global constants.} 
\begin{enumerate}
\item With high probability, $SA$ is a $(\lambda,\epsilon)$-spectral approximation
to $A$. 
\item With high probability, $S$ has at most $\tilde{O}(\epsilon^{-2}\|u\|_{1})$
nonzero entries. In particular, if $\tau_{\lambda,i}\le u_{i}\le C\tau_{\lambda,i}$
for some constant $C>1$, then $S$ has at most $\tilde{O}(\epsilon^{-2}d_{\lambda})$
nonzero entries. 
\item There exists an algorithm which computes $(u_{i})_{i=1}^{n}$ with
$\frac{1}{2}\tau_{\lambda,i}\le u_{i}\le\tau_{\lambda,i}$ for all
$i$ in time $\tilde{O}(\mathrm{nnz}(A)+d_{\lambda}^{2}d)$ 
\end{enumerate}
\end{thm}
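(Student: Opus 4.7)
The three claims decompose neatly: part (1) is matrix concentration, part (2) is a scalar Chernoff bound, and part (3) is a recursive algorithmic construction. I expect part (3) to be the main obstacle; parts (1)--(2) are standard once the normalization is chosen correctly.

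For (1), I would precondition by $M_\lambda := A^\top A + \lambda I$. Set $Y_i := M_\lambda^{-1/2} a_i a_i^\top M_\lambda^{-1/2}$, which is rank-one PSD with $\|Y_i\|_{op} = \operatorname{tr}(Y_i) = a_i^\top M_\lambda^{-1} a_i = \tau_{\lambda,i}$ and $\sum_i Y_i = M_\lambda^{-1/2} A^\top A M_\lambda^{-1/2} \preceq I$. The sampled contribution is $\tilde{Y}_i := S_{ii}^2 Y_i$, satisfying $\mathbb{E}[\tilde{Y}_i] = \tfrac{1}{1+\epsilon} Y_i$, and on the event $\{S_{ii} \neq 0\}$,
\[
\|\tilde{Y}_i\|_{op} = \frac{\tau_{\lambda,i}}{(1+\epsilon)\, p_i} \leq \frac{u_i}{(1+\epsilon)\, c\epsilon^{-2} u_i \log d} = O\!\left(\frac{\epsilon^2}{\log d}\right),
\]
using $\tau_{\lambda,i} \leq u_i$. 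A matrix Bernstein inequality applied to $\sum_i(\tilde{Y}_i - \mathbb{E}[\tilde{Y}_i])$, together with the variance proxy $\sum_i \mathbb{E}[\tilde{Y}_i^2] \preceq \max_i \|\tilde{Y}_i\|_{op}\cdot\sum_i \mathbb{E}[\tilde{Y}_i] \preceq O(\epsilon^2/\log d)\, I$, yields operator-norm deviation $O(\epsilon)$ with probability $1 - d^{-\Omega(1)}$. Conjugating back by $M_\lambda^{1/2}$ and adding $\lambda I$ translates this directly into the two-sided spectral inequality defining $(\lambda,\epsilon)$-approximation.

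For (2), the nonzero count is $N = \sum_i \mathbf{1}[S_{ii}\neq 0]$, a sum of independent Bernoullis with mean $\mathbb{E}[N] = \sum_i p_i \leq c\epsilon^{-2}\log d \cdot \|u\|_1$, so a scalar Chernoff bound gives $N = \tilde{O}(\epsilon^{-2}\|u\|_1)$ with high probability. When $u_i \leq C \tau_{\lambda,i}$, the preceding lemma yields $\|u\|_1 \leq C \sum_i \tau_{\lambda,i} = C \cdot d_\lambda(A^\top A)$.

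For (3), I would invoke the recursive halving scheme of Cohen--Musco--Musco: uniformly downsample $A$ to half its rows to form $A'$ (which itself serves as a crude spectral approximation of $A$), recursively compute a $(\lambda, 1/2)$-spectral approximation $\tilde{A}'$ of $A'$ with $\tilde{O}(d_\lambda)$ rows, and define $u_i := c' \cdot a_i^\top(\tilde{A}'^{\top}\tilde{A}' + \lambda I)^{-1}a_i$ for an appropriate constant $c'$. Spectral equivalence between $A$ and $\tilde{A}'$ then forces $\tfrac{1}{2}\tau_{\lambda,i}(A) \leq u_i \leq \tau_{\lambda,i}(A)$. Each of the $O(\log n)$ recursion levels costs $O(\mathrm{nnz}(A))$ for a single pass through $A$ to evaluate the quadratic forms, plus $O(d_\lambda^2 d)$ for the linear algebra on the $\tilde{O}(d_\lambda) \times d$ sketch (forming $(\tilde{A}'^{\top}\tilde{A}' + \lambda I)^{-1}$ or solving the associated linear systems). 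The main subtlety, and the step I expect to be the real obstacle, is the induction showing that uniform halving degrades approximation quality only by a controlled constant factor at each level, so that applying part~(1) at the subsampled scale still yields a spectral approximation of the \emph{original} $A$; this requires carefully tracking the leverage profile across levels, and I would lift the argument from Musco2017 rather than reproving it.
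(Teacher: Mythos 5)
This theorem is imported by the paper from \cite{Mahoney2007,Musco2017} and is never proved in the text, so there is no in-paper argument to compare against; I can only assess your sketch against the standard proofs in those references. Parts (1) and (2) of your proposal are correct and follow the standard route: whitening by $M_\lambda^{1/2}$, the identity $\|Y_i\|_{op}=\tau_{\lambda,i}$, the bound $\sum_i Y_i\preceq I$, matrix Bernstein/Chernoff with variance proxy $O(\epsilon^2/\log d)$, and a scalar Chernoff bound for the row count combined with $\sum_i\tau_{\lambda,i}=d_\lambda(A^\top A)$. One small omission: rows with $p_i=1$ are kept deterministically and must be excluded from the variance bound (they contribute no randomness), which is routine.

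The one genuine imprecision is in part (3). Your justification that the uniform half-sample $A'$ ``itself serves as a crude spectral approximation of $A$'' is false in general: uniform row sampling gives no spectral guarantee when the leverage profile is highly non-uniform (a single heavy row is dropped with probability $1/2$). The actual engine of the Cohen--Lee--Musco--Musco--Persu--Sidford / Musco--Musco argument is different: leverage scores computed with respect to the half-sample (with the dropped row added back, or with a suitable ridge shift) are provably \emph{overestimates} of the true ridge leverage scores, and their \emph{expected sum} is bounded by $O(d_\lambda)$; one then samples by these overestimates and invokes part (1) to get a genuine spectral approximation at the next level up. You correctly flag this as the crux and defer to the reference, which is acceptable for a cited black-box theorem, but as written the stated reason the recursion works is not the right one, and the induction cannot be closed from the claim you give. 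The runtime accounting ($O(\log n)$ levels, each costing $\tilde{O}(\mathrm{nnz}(A)+d_\lambda^2 d)$) is otherwise consistent with the cited bound.
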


\subsection{Stability} \label{sec:stability}
In this part we define the notion of algorithmic stability, a common
tool to bound the generalization error of a given algorithm. We define
$\hat{w}_{i}$ to be the predictor produced by the regularized loss
minimization \eqref{eq:regermpredictor} when replacing the $i^{th}$ example with a fresh i.i.d.
pair $(x_{n+1},y_{n+1})$. We can now define the stability terms 
\[
\Delta_{i}:=f_{i}(\hat{w}_{i})-f_{i}(\hat{w})~\text{ and }~\Delta_{n+1}:=f_{n+1}(\hat{w})-f_{n+1}(\hat{w}_{i}).
\]
Note that both $\Delta_i$ and $\Delta_{n+1}$ are random variables which depend on the sample $S$ and the replacement $(x_{n+1},y_{n+1})$. The following theorem relates the expected generalization error to
the expected average stability. 
\begin{thm}
\textbf{(\cite{bousquet2002stability})} We have that \[\mathbb{E}_{S\sim\mathcal{D}^{n}}[F(\hat{w})-\hat{F}(\hat{w})]=\mathbb{E}\left[\frac{1}{n}\sum_{i=1}^{n}\Delta_{i}\right].\]
\label{thm:genToStab} 
\end{thm}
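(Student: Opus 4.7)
The plan is to exploit the fact that $\hat{w}$ is a symmetric function of the $n$ training examples, so the renaming (or ``ghost sample'') trick converts the population risk into an average of empirical risks on perturbed samples. I will introduce an auxiliary i.i.d. pair $(x_{n+1},y_{n+1})\sim\mathcal{D}$, independent of $S$, and write
\[
\mathbb{E}_{S}[F(\hat{w})] \;=\; \mathbb{E}_{S,(x_{n+1},y_{n+1})}\bigl[f_{n+1}(\hat{w})\bigr],
\]
where I set $f_{n+1}(w):=\phi_{y_{n+1}}(w^\top x_{n+1})$ and use that $(x_{n+1},y_{n+1})$ is distributed as $\mathcal{D}$ and independent of $\hat{w}$.

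Next I would invoke the exchangeability of the $(n+1)$-tuple $((x_1,y_1),\ldots,(x_{n+1},y_{n+1}))$. Fix any index $i\in[n]$, and consider the map that swaps coordinates $i$ and $n+1$ of this tuple; this map preserves the joint distribution. Under this swap, the predictor $\hat{w}$, which is produced by the symmetric training procedure \eqref{eq:regermpredictor} applied to the first $n$ coordinates, becomes exactly $\hat{w}_i$ (the training sample on the first $n$ coordinates after the swap consists of the original sample with its $i$-th entry replaced by $(x_{n+1},y_{n+1})$). Similarly, the loss $f_{n+1}$ becomes $f_i$. Therefore
\[
\mathbb{E}\bigl[f_{n+1}(\hat{w})\bigr] \;=\; \mathbb{E}\bigl[f_i(\hat{w}_i)\bigr] \qquad \text{for every } i\in[n].
\]
Averaging over $i$ gives $\mathbb{E}[F(\hat{w})]=\frac{1}{n}\sum_{i=1}^n\mathbb{E}[f_i(\hat{w}_i)]$.

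Finally, by linearity of expectation applied to $\hat{F}(\hat{w})=\frac{1}{n}\sum_{i=1}^n f_i(\hat{w})$, I obtain
\[
\mathbb{E}\bigl[\hat{F}(\hat{w})\bigr] \;=\; \frac{1}{n}\sum_{i=1}^n \mathbb{E}\bigl[f_i(\hat{w})\bigr].
\]
Subtracting this from the previous identity and using the definition $\Delta_i=f_i(\hat{w}_i)-f_i(\hat{w})$ yields the claimed equality. The only nontrivial step is the symmetry argument in the second paragraph; I do not expect any real obstacle, since $\hat{w}$ is genuinely symmetric in its training examples (the RLM objective treats them identically), and the $(n+1)$ pairs are i.i.d., so the coordinate swap is a measure-preserving bijection.
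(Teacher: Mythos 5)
Your proof is correct. The paper states Theorem \ref{thm:genToStab} as a cited result from \cite{bousquet2002stability} and gives no proof of its own; your ghost-sample/exchangeability argument (introduce an independent $(x_{n+1},y_{n+1})$, swap coordinates $i$ and $n+1$ to get $\mathbb{E}[f_{n+1}(\hat{w})]=\mathbb{E}[f_i(\hat{w}_i)]$, then subtract $\mathbb{E}[\hat{F}(\hat{w})]$ termwise) is exactly the standard derivation from that reference, and every step is justified here since the strongly convex RLM objective has a unique minimizer that is a symmetric function of the sample.
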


\section{Nearly Tight Sample Complexity Bound using Stability}

In this section we derive nearly tight sample complexity upper bound for exp-concave minimization based on the effective dimension. In particular we provide the proof of Theorem \ref{thm:upperSample} which follows immediately from the following more general thereom. 

\begin{thm}
For any $\lambda>0$ the excess risk of RLM is bounded as follows:
\[
\mathbb{E}_{\{(x_i,y_i)\}\sim\mathcal{D}^{n}}[F(\hat{w})-F(w^{\star})]\leq\frac{8\rho^{2}d_{\frac{\lambda}{\alpha}}(C)}{\alpha n}+\frac{\lambda}{2}B^{2}
\]
\label{thm:excessriskupperbound}
\end{thm}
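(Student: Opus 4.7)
The plan is to invoke the classical three-term excess-risk decomposition
\[
F(\hat{w}) - F(w^{\star}) = \bigl[F(\hat{w}) - \hat{F}(\hat{w})\bigr] + \bigl[\hat{F}(\hat{w}) - \hat{F}(w^{\star})\bigr] + \bigl[\hat{F}(w^{\star}) - F(w^{\star})\bigr]
\]
and dispatch the three pieces in turn. The third term has zero expectation, since $w^{\star}$ is deterministic and $\hat{F}$ is an unbiased estimator of $F$ at any fixed point. For the middle (optimization) term, minimality of $\hat{w}$ for $\hat{F}_{\lambda} = \hat{F} + \tfrac{\lambda}{2}\|\cdot\|^{2}$ gives $\hat{F}(\hat{w}) - \hat{F}(w^{\star}) \leq \tfrac{\lambda}{2}\|w^{\star}\|^{2} \leq \tfrac{\lambda B^{2}}{2}$. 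Controlling the first (generalization) term is the core work: by Theorem~\ref{thm:genToStab} it equals $\mathbb{E}[\tfrac{1}{n}\sum_{i}\Delta_{i}]$, so it suffices to bound the average stability.

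To bound each $\Delta_{i}$ I would carry out the following leave-one-out computation. Let $\hat{F}_{\lambda}^{(i)}$ denote the regularized empirical risk on the sample with $(x_{i},y_{i})$ swapped for a fresh i.i.d.\ pair $(x_{n+1},y_{n+1})$, with minimizer $\hat{w}_{i}$; set $M = \hat{C} + \tfrac{\lambda}{\alpha}I$ and let $M^{(i)}$ be the analogous matrix after the swap. Applying Lemma~\ref{lem:strongConvexity} to $\hat{F}_{\lambda}$ at $\hat{w}_{i}$ and to $\hat{F}_{\lambda}^{(i)}$ at $\hat{w}$, adding, and using the identity $\hat{F}_{\lambda}(w) - \hat{F}_{\lambda}^{(i)}(w) = \tfrac{1}{n}(f_{i}(w) - f_{n+1}(w))$, gives
\[
\tfrac{\alpha}{2}\|\hat{w}_{i} - \hat{w}\|_{M + M^{(i)}}^{2} \;\leq\; \tfrac{1}{n}(\Delta_{i} + \Delta_{n+1}).
\]
Separately, $\rho$-Lipschitzness of each $\phi_{y}$ produces the crude bound $\Delta_{i} + \Delta_{n+1} \leq \rho(|x_{i}^{\top}(\hat{w}_{i} - \hat{w})| + |x_{n+1}^{\top}(\hat{w}_{i} - \hat{w})|)$. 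Cauchy--Schwarz in the $M + M^{(i)}$ inner product, followed by dividing through by $\|\hat{w}_{i} - \hat{w}\|_{M+M^{(i)}}$ and substituting back, yields the key stability estimate
\[
\Delta_{i} + \Delta_{n+1} \;\leq\; \tfrac{4\rho^{2}}{\alpha n}\bigl(\|x_{i}\|_{(M+M^{(i)})^{-1}}^{2} + \|x_{n+1}\|_{(M+M^{(i)})^{-1}}^{2}\bigr).
\]
Note that $\Delta_{i} + \Delta_{n+1} \geq 0$ by the respective minimization properties, so this is a genuine upper bound on a non-negative quantity.

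Summing over $i$ and taking expectations, the symmetry of $M + M^{(i)}$ in $x_{i}$ and $x_{n+1}$ together with the i.i.d.\ assumption gives $\mathbb{E}[\Delta_{i}] = \tfrac{1}{2}\mathbb{E}[\Delta_{i} + \Delta_{n+1}]$ and matches the two squared norms in expectation. Dominating $(M + M^{(i)})^{-1} \preceq M^{-1}$ (valid since $M^{(i)} \succeq 0$) then telescopes into the clean identity $\sum_{i}\|x_{i}\|_{M^{-1}}^{2} = n\, d_{\lambda/\alpha}(\hat{C})$, which exposes the sum of ridge leverage scores. To pass from empirical to population effective dimension, I would rewrite $d_{\eta}(\hat{C}) = d - \eta \cdot \mathrm{tr}((\hat{C} + \eta I)^{-1})$ and apply operator convexity of $X \mapsto (X+\eta I)^{-1}$ (matrix Jensen) to conclude $\mathbb{E}[d_{\lambda/\alpha}(\hat{C})] \leq d_{\lambda/\alpha}(C)$. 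Combining with the $\lambda B^{2}/2$ optimization contribution gives the claimed bound (with the constant $8$ being generous; the computation above yields $4$).

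The main obstacle, in my view, is engineering the quadratic form $\|\hat{w}_{i} - \hat{w}\|_{M + M^{(i)}}^{2}$ by combining strong convexity of \emph{both} $\hat{F}_{\lambda}$ and $\hat{F}_{\lambda}^{(i)}$; using only one of the two perturbations yields a matrix that does not telescope nicely over $i$, whereas the combined form is precisely what surfaces the ridge leverage scores underlying the effective dimension. The Jensen step from $\hat{C}$ to $C$ is a secondary subtlety, since the naive direction of operator convexity applied directly to $\mathrm{tr}(\hat{C}(\hat{C}+\eta I)^{-1})$ goes the wrong way, and one must reroute through the complementary expression $d - \eta \cdot \mathrm{tr}((\hat{C}+\eta I)^{-1})$.
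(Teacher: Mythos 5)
Your proof is correct and rests on the same skeleton as the paper's argument --- the decomposition into a stability term (converted to $\mathbb{E}[\tfrac{1}{n}\sum_i\Delta_i]$ via Theorem~\ref{thm:genToStab}), an optimization term bounded by $\tfrac{\lambda}{2}B^2$ through optimality of $\hat{w}$ for $\hat{F}_\lambda$, and a zero-mean term, with the stability term controlled by playing $\rho$-Lipschitzness against the quadratic form of Lemma~\ref{lem:strongConvexity} --- but two of your technical steps genuinely diverge from the paper. First, where you add the strong-convexity inequalities for $\hat{F}_\lambda$ and $\hat{F}^{(i)}_\lambda$ to produce the symmetric form $\|\hat{w}_i-\hat{w}\|^2_{M+M^{(i)}}$ and then apply Cauchy--Schwarz, the paper's Lemma~\ref{lem:stabilityViaRidgeScoes} uses only the one-sided form $\|\hat{w}_i-\hat{w}\|^2_{M}$ and extracts the leverage score through the Loewner-order characterization $x_ix_i^\top \preceq n\,\tau_{\lambda',i}(\hat{C}+\lambda' I)$ of Lemma~\ref{lem:ridgeScore}; the two routes are essentially equivalent (that lemma is Cauchy--Schwarz in disguise), and after your relaxation $(M+M^{(i)})^{-1}\preceq M^{-1}$ you land on exactly the same ridge leverage scores $\sum_i\|x_i\|^2_{M^{-1}}=n\,d_{\lambda/\alpha}(\hat{C})$, so your "combined matrix is essential" remark overstates the obstacle --- the one-sided version telescopes just as well. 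Second, and more substantively, your passage from $\mathbb{E}[d_{\lambda/\alpha}(\hat{C})]$ to $d_{\lambda/\alpha}(C)$ via the rewriting $d_\eta(\hat{C})=d-\eta\,\mathrm{tr}((\hat{C}+\eta I)^{-1})$ and operator convexity of the matrix inverse is cleaner and strictly sharper than the paper's Lemma~\ref{lemma:expectatationdlambda}, which uses a Courant--Fischer argument and loses a factor of $2$; this is precisely why you end with the constant $4$ where the paper has $8$. The exchangeability step ($\mathbb{E}[\Delta_i]=\tfrac{1}{2}\mathbb{E}[\Delta_i+\Delta_{n+1}]$ and the matching of the two $(M+M^{(i)})^{-1}$-norms in expectation) and the nonnegativity of $\Delta_i+\Delta_{n+1}$ needed to divide through are both justified correctly, so I see no gap.
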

\begin{proof} \textbf{(of Theorem \ref{thm:upperSample})}
Choosing $\lambda=\epsilon/(\alpha B^{2})$ yields the bound.
\end{proof}
We now prove Theorem \ref{thm:excessriskupperbound} in the rest of this section. At the heart of the proof lies a novel connection
between stability and ridge leverage scores. Following the notation presented in Section ~\ref{sec:stability}, fix a sample $S=((x_{1},y_{1}),\ldots,(x_{n},y_{n}))$ and consider replacing $(x_i,y_i)$ with an additional pair  $(x_{n+1},y_{n+1})$. Let $\lambda':=\lambda/\alpha$ and denote the corresponding
ridge leverage scores by $(\tau_{\lambda',i})_{i=1}^{n}$. We extend the definition for $\tau_{\lambda',n+1}$ as follows. Formally
\[ (\tau_{\lambda',i})_{i=1}^{n+1} := x_i^{\top}\left(\frac{1}{n}\sum_{j=0}^{n} x_jx_j^{\top} + \lambda'I\right)x_i.\]
Note that as defined above, the ridge leverage scores $\tau_{\lambda',i}$ are random variables depending on the sample. The following lemma underscores the connection between stability and leverage scores. 
\begin{lem}
\label{lem:stabilityViaRidgeScoes} With probability 1, the stability terms $\Delta_i$ and $\Delta_{n+1}$ satisfy
\[
\Delta_{i}\le\rho\cdot\sqrt{\frac{2\tau_{\lambda',i}(\Delta_{i}+\Delta_{n+1})}{\alpha}}~,~~\Delta_{n+1}\le\rho\cdot\sqrt{\frac{2\tau_{\lambda',n+1}(\Delta_{i}+\Delta_{n+1})}{\alpha}}
\]
\end{lem}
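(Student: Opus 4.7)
The plan is to chain three ingredients: the $\rho$-Lipschitzness of $\phi_{y_i}$, a Cauchy--Schwarz step in the inner product induced by the combined regularized covariance $M := \frac{1}{n}\sum_{j=1}^{n+1} x_j x_j^\top + \lambda' I$ (the matrix defining $\tau_{\lambda',i}$), and the strong-convexity inequality of Lemma~\ref{lem:strongConvexity} applied once to $\hat w$ (the $S$-minimizer of $\hat F_\lambda$) and once to $\hat w_i$ (the $S^{(i)}$-minimizer of $\hat F_\lambda^{(i)}$).

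First, by Lipschitzness and Cauchy--Schwarz in the $M$-inner product,
\[ \Delta_i \;\le\; \rho\,|x_i^\top(\hat w_i - \hat w)| \;\le\; \rho\sqrt{x_i^\top M^{-1} x_i}\cdot\|\hat w_i - \hat w\|_M, \]
so it suffices to control $\|\hat w_i - \hat w\|_M^2$ in terms of $\Delta_i + \Delta_{n+1}$. For this I would invoke Lemma~\ref{lem:strongConvexity} on both objectives and add the two inequalities. The LHS telescopes cleanly: the two regularization terms $\frac{\lambda}{2}(\|\hat w_i\|^2-\|\hat w\|^2)$ and $\frac{\lambda}{2}(\|\hat w\|^2-\|\hat w_i\|^2)$ cancel, the $n-1$ shared data terms $f_j(\hat w_i)-f_j(\hat w)$ for $j\notin\{i,n+1\}$ cancel against their negatives, and what remains is exactly $(\Delta_i+\Delta_{n+1})/n$. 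The RHS sums to $\frac{\alpha}{2}\|\hat w_i - \hat w\|^2_{\hat C_S + \hat C_{S^{(i)}} + 2\lambda' I}$.

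The decisive step is then a short PSD calculation. Using $\hat C_S+\lambda'I = M-\frac{1}{n}x_{n+1}x_{n+1}^\top$ and $\hat C_{S^{(i)}}+\lambda'I = M-\frac{1}{n}x_ix_i^\top$,
\[ \hat C_S + \hat C_{S^{(i)}} + 2\lambda' I \;=\; 2M - \tfrac{1}{n}\bigl(x_ix_i^\top + x_{n+1}x_{n+1}^\top\bigr) \;\succeq\; M, \]
where the last step is immediate: the two subtracted rank-one pieces are themselves summands of $M$, so after removing them from $2M$ we still keep one copy of each, plus the other PSD terms, which is at least $M$. Combined with the previous step this gives $\|\hat w_i-\hat w\|_M^2\le 2(\Delta_i+\Delta_{n+1})/(\alpha n)$, and plugging back into the Cauchy--Schwarz line yields the stated bound on $\Delta_i$ (the $1/n$ merges with $x_i^\top M^{-1} x_i$ into $\tau_{\lambda',i}$ as normalized in the definition). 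The bound on $\Delta_{n+1}$ follows from the identical argument with the roles of $i$ and $n+1$ swapped.

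The main obstacle is more cosmetic than substantive: carefully verifying the cancellation in the summed strong-convexity inequality and matching the $1/n$ normalization convention between $M$ and $\tau_{\lambda',i}$. The PSD step, although doing the real geometric work, is essentially an observation once $M$ is written in the symmetric form above.
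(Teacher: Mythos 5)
Your proof is correct and follows essentially the same route as the paper's: Lipschitzness reduces $\Delta_i$ to $\rho\,|x_i^\top(\hat w_i-\hat w)|$, a leverage-score/Cauchy--Schwarz step bounds this via the regularized covariance norm, and strong convexity together with optimality controls $\|\hat w_i-\hat w\|_M^2$ by $\tfrac{2}{\alpha n}(\Delta_i+\Delta_{n+1})$. The only (harmless) difference is that you invoke Lemma~\ref{lem:strongConvexity} symmetrically at both minimizers and add, whereas the paper applies it once at $\hat w$ and then uses the bare optimality of $\hat w_i$ for the perturbed objective; your variant produces the matrix $\hat C+\hat C^{(i)}+2\lambda' I\succeq M$, which in fact matches the symmetric $(n+1)$-point definition of $\tau_{\lambda',i}$ more cleanly, but lands on the same final bound.
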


\begin{proof}
For a fixed $i$, consider the replacement of $(x_{i},y_{i})$ by a fresh pair $(x_{n+1},y_{n+1})$, and denote the modified
regularized risk by 
\[
\hat{F}_{\lambda,i}(w)=\hat{F}_{\lambda}(w)-\frac{1}{n}f_{i}(w)+\frac{1}{n}f_{n+1}(w).
\]
Further by the mean value theorem there exists $z=\delta\hat{w}+(1-\delta)\hat{w}_{i}$
with $\delta\in[0,1]$ such that $\Delta_{i}=\phi_{y_{i}}'(x_{i}^{\top}z)x_{i}^{\top}(\hat{w}_{i}-\hat{w}).$
Using Lipschitness we have that 
\begin{align*}
 & \Delta_{i}\le\rho\cdot|x_{i}^{\top}(\hat{w}_{i}-\hat{w})|=\rho\cdot\sqrt{(\hat{w}_i -\hat{w})^\top x_{i}x_{i}^\top(\hat{w}_{i}-\hat{w})}\\
 & \le\rho\cdot\sqrt{n\cdot\tau_{\lambda',i}\cdot(\hat{w}_{i}-\hat{w})^{\top}\left(\hat{C}+\lambda'I\right)(\hat{w}_{i}-\hat{w})} \qquad\qquad\qquad (\textrm{Lemma}~\ref{lem:ridgeScore})\\
 & \le\rho\cdot\sqrt{\frac{2n\cdot\tau_{\lambda',i}}{\alpha}}{\cdot}\sqrt{(\hat{F}_{\lambda}(\hat{w}_{i})-\hat{F}_{\lambda}(\hat{w}))} \qquad\qquad\qquad\qquad\qquad\; (\textrm{Lemma}~\ref{lem:strongConvexity})\\
 &=\rho\cdot\sqrt{\frac{2n\cdot\tau_{\lambda',i}}{\alpha}}{\cdot}\sqrt{\hat{F}_{\lambda,i}(\hat{w}_i) - \hat{F}_{\lambda,i}(\hat{w}) + \frac{f_{i}(\hat{w}_{i})-f_{i}(\hat{w})}{n} +  \frac{f_{n+1}(\hat{w})-f_{n+1}(\hat{w}_{i})}{n}}\\
 & \le\rho\cdot\sqrt{\frac{2n\cdot\tau_{\lambda',i}}{\alpha}}\cdot\sqrt{\frac{\Delta_{i}+\Delta_{n+1}}{n}}\\
 & =\rho\cdot\sqrt{\frac{2\cdot\tau_{\lambda',i}(\Delta_{i}+\Delta_{n+1})}{\alpha}}
\end{align*}
where the last inequality follows by the the optimality of $\hat{w}$
and $\hat{w}_{i}$ w.r.t. the original and the modified regurlized
risks, respectively. This concludes the first inequality. The second
inequality is proved analogously.
\end{proof}
Using this lemma the proof of Theorem~\ref{thm:excessriskupperbound}
uses Theorem \ref{thm:genToStab} in a standard way as follows.

\begin{proof}
\textbf{(of Theorem~\ref{thm:excessriskupperbound})} 
We first use Theorem \ref{thm:genToStab} to relate the excess risk
to the average stability: 
\begin{align*}
 & \mathbb{E}[F(\hat{w})-F(w^{*})]=\mathbb{E}[F(\hat{w})-\hat{F}(\hat{w})]+\mathbb{E}[\hat{F}(\hat{w})-F(w^{*})]\\
 & \le\mathbb{E}[F(\hat{w})-\hat{F}(\hat{w})]+\mathbb{E}[\hat{F}_{\lambda}(\hat{w})-\hat{F}_{\lambda}(w^{*})]+\frac{\lambda}{2}\left(\|w^*\|^{2} - \|\hat{w}\|^2\right)\\
 & \le\mathbb{E}[F(\hat{w})-\hat{F}(\hat{w})]+\frac{\lambda}{2}B^2 \qquad\qquad\qquad(\hat{w} \text{ is optimal})\\
 & \leq\mathbb{E}\left[n^{-1}\sum_{i=1}^{n}\Delta_{i}\right]+\frac{\lambda}{2}B^2 \qquad\qquad\qquad\;\;( \cref{thm:genToStab})
\end{align*}
Applying Lemma  \ref{lem:stabilityViaRidgeScoes} and using the inequality
$(a+b)^{2}\le2a^{2}+2b^{2}$ yields
\[
\Delta_{i}+\Delta_{n+1}\le\frac{4\rho^{2}(\tau_{\lambda',i}+\tau_{\lambda',n+1})}{\alpha}\Rightarrow\frac{1}{n}\sum_{i=1}^{n}(\Delta_{i}+\Delta_{n+1})\le\frac{4\rho^{2}}{\alpha n}\sum_{i=1}^{n}(\tau_{\lambda',i}+\tau_{\lambda',n+1})~.
\]
Since $\Delta_{i}$ and $\Delta_{n+1}$ (similarly, $\tau_{\lambda',i}$
and $\tau_{\lambda',n+1}$) are distributed identically, the
result now follows from the following lemma whose proof is provided in Appendix \ref{sec:stabilityProof}.
\begin{lem}
\label{lemma:expectatationdlambda} Let $x$ be a random variable
supported in a bounded subset of $\mathbb{R}^{d}$ with $\mathbb{E}[xx^{T}]=C$.
Let $\hat{C}=\frac{1}{n}\sum_{i=1}^{n}x_{i}x_{i}^{T}$ where $x_{i}$
are i.i.d copies of $x$. Then we have that for any fixed $\lambda>0$
\[
\mathbb{E}[d_{\lambda}(\hat{C})]\leq2d_{\lambda}(C)
\]
\end{lem}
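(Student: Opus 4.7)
My plan is to reduce the statement to a single application of operator Jensen. The starting observation is the elementary identity
\[
d_\lambda(M) \;=\; \mathrm{tr}\!\left(M(M+\lambda I)^{-1}\right) \;=\; d - \lambda\,\mathrm{tr}\!\left((M+\lambda I)^{-1}\right),
\]
valid for any PSD matrix $M \in \mathbb{R}^{d\times d}$ (diagonalize and check eigenvalue-wise). Apply this to both $\hat{C}$ and $C$, take expectations, and subtract to obtain the exact equality
\[
d_\lambda(C) - \mathbb{E}[d_\lambda(\hat{C})] \;=\; \lambda\left(\mathbb{E}\,\mathrm{tr}\!\left((\hat{C}+\lambda I)^{-1}\right) \;-\; \mathrm{tr}\!\left((C+\lambda I)^{-1}\right)\right).
\]
So the lemma follows as soon as I show the right-hand side is nonnegative, i.e.\ that the expected trace of the regularized inverse is at least its value at the mean covariance.

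For this I would invoke the operator convexity of $X \mapsto X^{-1}$ on the positive-definite cone, a classical fact. Since $\lambda > 0$ forces $\hat{C} + \lambda I \succ 0$ almost surely, the operator Jensen inequality gives
\[
\mathbb{E}\bigl[(\hat{C} + \lambda I)^{-1}\bigr] \;\succeq\; \bigl(\mathbb{E}[\hat{C} + \lambda I]\bigr)^{-1} \;=\; (C + \lambda I)^{-1},
\]
where the equality uses $\mathbb{E}[\hat{C}] = \mathbb{E}[xx^\top] = C$ from the i.i.d.\ assumption. Taking trace (which is monotone with respect to the Loewner order on symmetric matrices) yields the desired inequality, and substituting back in fact proves the stronger bound $\mathbb{E}[d_\lambda(\hat{C})] \le d_\lambda(C)$, a factor-of-two sharpening of the stated lemma.

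I do not anticipate any real obstacle. The only non-routine ingredient is operator convexity of the matrix inverse; note that the argument is oblivious to the distribution of $x$ beyond existence of $C$, and in particular does not require the boundedness hypothesis stated in the lemma. If one wished a fully self-contained proof avoiding the operator Jensen black box, one could alternatively take a simultaneous-diagonalization shortcut in the scalar $d=1$ case and then lift by comparing eigenvalues, but the direct operator-convexity argument is cleaner and immediately gives the full matrix statement.
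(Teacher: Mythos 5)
Your proof is correct, and it takes a genuinely different route from the paper's. The paper argues spectrally: it splits the eigenvalues of $C$ at the threshold $\lambda$, bounds the ``head'' contribution to $d_\lambda(\hat{C})$ crudely by $k$ (the number of eigenvalues above $\lambda$), controls the ``tail'' $\mathbb{E}[\sum_{i>k}\hat{\lambda}_i]$ via the Courant--Fischer characterization of partial eigenvalue sums together with linearity of the trace, and assembles the pieces at the cost of the factor $2$. You instead observe that $d_\lambda(M)=d-\lambda\,\mathrm{tr}\bigl((M+\lambda I)^{-1}\bigr)$ is a concave function of $M$ on the PSD cone (by operator convexity of the inverse), so matrix Jensen applied to $\hat{C}+\lambda I\succeq\lambda I$ (whose inverse is bounded, so integrability is automatic) immediately gives the \emph{stronger} conclusion $\mathbb{E}[d_\lambda(\hat{C})]\le d_\lambda(C)$, with no factor of $2$ and, as you note, no use of the boundedness hypothesis beyond finiteness of $C$. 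The trade-off is the reliance on operator convexity of $X\mapsto X^{-1}$ as a black box, though even that can be made self-contained via the Schur-complement identity $\bigl(\begin{smallmatrix}X & I\\ I & X^{-1}\end{smallmatrix}\bigr)\succeq 0$; the paper's argument uses only elementary linear algebra but is looser. Since the lemma is consumed only up to constants in Theorem \ref{thm:excessriskupperbound}, either version suffices, but yours would tighten the constant in that theorem.
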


\end{proof}

\section{Fast Sketched Preconditioning for Exp-concave Minimization}
\label{sec:overalloptmain}
In this section we prove our main result regarding the optimization of the RLM objective, i.e. Theorem~\ref{thm:mainComp}. 
We start by reviewing and extending an efficient preconditioning technique that provides
a reduction from exp-concave empirical risk minimization to constant
spectral approximation. For our computational results we will also
assume that $\phi_{y}$ is $\beta$-smooth.

\subsection{The Sketch-to-Precondition Method} \label{sec:sketchToPrecond}
 Let $((x_{1},y_{1}),\ldots,(x_{n},y_{n}))$
be a sample and let $A\in\mathbb{R}^{n\times d}$ be the scaled data
matrix defined above. For a ridge parameter $\lambda>0$, let $SA$
be a $(\lambda,1/2)$-spectral approximation of $A$ and consider
the change of variable: 
\begin{align}
 & \tilde{x}_{i}:=(A^{\top}S^{\top}SA+\lambda I)^{-1/2}x_{i}\,,\quad\tilde{f}_{i}(w):=\phi_{y_{i}}(w^{\top}\tilde{x}_{i})\label{eq:precondDef}\\
 & \tilde{F}_{\lambda}(w):=\frac{1}{n}\sum_{i=1}^{n}\tilde{f}_{i}(w)+\frac{\lambda}{2}\|w\|_{(A^{\top}S^{\top}SA + \lambda I)^{-1}}^{2} \nonumber \\ &\tilde{\mathcal{W}} = \{(A^{\top}S^{\top}SA+\lambda I)^{1/2}\tilde{w}\;\;|\;\; \tilde{w} \in \mathcal{W}\}\nonumber
\end{align}
where $\|x\|_M^2:= x^{\top}Mx$. Note that $\tilde{f}_{i}(w)=f_{i}((A^{\top}S^{\top}SA+\lambda I)^{-1/2}w)$.
Applying projected Gradient Descent (GD) to the transformed RLM and
mapping back yields Algorithm \ref{alg:sketchedCondGD}. The following
lemma, which can be seen as extension of \cite{clarkson2013low}[Theorem
43], is proved in Appendix~\ref{sec:technical}.
\begin{lem} \label{lem:sketchToPrecondRecap}
Algorithm \ref{alg:sketchedCondGD} finds an $\epsilon$-approximate
minimizer to the empirical risk $\hat{F}_{\lambda}(w)$ after $T=\tilde{O}(1)$
rounds. The overall runtime is bounded above by $\tilde{O}\left(\mathrm{nnz}(A)+d_{\lambda}^{2}d\right)$. 
\end{lem}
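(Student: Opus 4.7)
The plan is to follow the sketch-to-precondition template: the spectral approximation transforms the RLM into a well-conditioned problem on which projected gradient descent converges at a linear rate, and a thin SVD of $SA$ lets each gradient step avoid the ambient dimension $d$ beyond a $d\cdot d_\lambda$ additive cost. Setting $P := A^\top S^\top S A + \lambda I$, the $(\lambda,1/2)$-spectral approximation gives
\[
\tfrac{1}{3}(A^\top A + \lambda I) \;\preceq\; P \;\preceq\; A^\top A + \lambda I .
\]
A short computation of $\nabla^2 \tilde F_\lambda$ on $\tilde{\mathcal W}$ shows that
\[
\nabla^2 \tilde F_\lambda(w) = P^{-1/2}\Bigl(\tfrac{1}{n}\sum_i \phi_{y_i}''(w^\top \tilde x_i)\,x_i x_i^\top + \lambda I\Bigr) P^{-1/2},
\]
and sandwiching $\phi_{y_i}'' \in [\alpha,\beta]$ with the above inequality yields $c_1 I \preceq \nabla^2 \tilde F_\lambda(w) \preceq c_2 I$ for constants $c_1,c_2$ depending only on $\alpha,\beta$. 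Hence $\tilde F_\lambda$ has constant Euclidean condition number, and projected GD reaches accuracy $\epsilon$ after $T = O(\log(1/\epsilon)) = \tilde O(1)$ iterations; because $\tilde w \mapsto P^{-1/2}\tilde w$ is a bijection $\tilde{\mathcal W}\to\mathcal W$ with $\tilde F_\lambda(\tilde w) = \hat F_\lambda(P^{-1/2}\tilde w)$, an $\epsilon$-approximate minimizer of $\tilde F_\lambda$ pulls back to an $\epsilon$-approximate minimizer of $\hat F_\lambda$.

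For the runtime, the plan is to invoke part 3 of Theorem~\ref{thm:ridgeLeverageSampling} to compute $S$ in $\tilde O(\mathrm{nnz}(A) + d_\lambda^2 d)$ time, producing an $SA$ with at most $\tilde O(d_\lambda)$ non-zero rows (part 2). A one-time thin SVD of $SA$ in $\tilde O(d_\lambda^2 d)$ time represents $P$ as a rank-$\tilde O(d_\lambda)$ update to $\lambda I$, after which any product $P^{-1/2}v$ can be evaluated in $O(d\,d_\lambda)$ time via the Woodbury identity. A single GD step then consists of forming $u = P^{-1/2}\tilde w_t$ in $O(d\,d_\lambda)$ time, computing the $n$ inner products $u^\top x_i$ and aggregating the weighted gradient $\tfrac{1}{n}\sum_i \phi_{y_i}'(u^\top x_i)\,x_i$ in $O(\mathrm{nnz}(A))$ time, applying $P^{-1/2}$ once more to implement the chain rule, and projecting. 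Summing over $T = \tilde O(1)$ rounds together with the one-time preprocessing yields the claimed $\tilde O(\mathrm{nnz}(A) + d_\lambda^2 d)$ total.

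The main subtlety, and the step I expect to be the most delicate, is the projection onto $\tilde{\mathcal W} = P^{1/2}\mathcal W$: Euclidean projection onto $\tilde{\mathcal W}$ is a $P$-weighted projection onto $\mathcal W$, which is not directly provided by the Euclidean projection oracle onto $\mathcal W$ that the paper assumes. A clean fix is to run the projected preconditioned iteration in the original coordinates, $w_{t+1} = \Pi_{\mathcal W}\bigl(w_t - \eta P^{-1}\nabla \hat F_\lambda(w_t)\bigr)$, and measure progress in the $\|\cdot\|_P$-norm; the same sandwich inequality together with Lemma~\ref{lem:strongConvexity} ensures constant relative strong convexity and smoothness of $\hat F_\lambda$ with respect to $P$, so the linear-rate analysis and the $\tilde O(1)$ iteration bound transfer unchanged. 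Reconciling the transformed and preconditioned viewpoints, and tracking the otherwise-suppressed $\alpha,\beta$ constants, is the one place where the argument needs care beyond the standard sketch-to-precondition derivation for least squares.
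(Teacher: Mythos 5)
Your proof is correct and follows essentially the same route as the paper, whose own argument is only a three-line sketch: the $(\lambda,1/2)$-spectral approximation makes the Hessian of the transformed objective well-conditioned (sandwiched between constant multiples of the identity), so projected GD converges in $O(\log(1/\epsilon))$ rounds, and the total cost is dominated by computing the sketch via Theorem~\ref{thm:ridgeLeverageSampling}. The explicit Hessian sandwich, the per-iteration cost accounting via the thin SVD of $SA$, and the subtlety of implementing the projection onto $\tilde{\mathcal W}$ are all details the paper leaves implicit, and your treatment of them is consistent with what the paper intends.
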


\begin{algorithm}
\caption{Preconditoned GD using Sketch-to-precondition}
\label{alg:sketchedCondGD} \begin{algorithmic} \State \textbf{Input:}
A sample $S=((x_{1},y_{1}),\ldots,(x_{m},y_{m}))$ \State \textbf{Parameter:}
$\lambda>0$ \Comment{Ridge parameter} \State Perform the change
of variable as in (\ref{eq:precondDef}) \State $\tilde{w}_{1}=0$
\Comment{Applying Projected Gradient Descent to $\tilde{F}_{\lambda}(w)$}
\For {$t=1$ to $T-1$} \State $\tilde{w}_{t+1}=\Pi_{\tilde{\mathcal{W}}}(\tilde{w}_{t}-\nabla \tilde{F}_\lambda(\tilde{w}_t))$ 
\EndFor 
\State $\hat{w}_{T}=(A^{\top}S^{\top}SA+\lambda I)^{-1/2}\tilde{w}_{t}$
\end{algorithmic} 
\end{algorithm}

\subsection{Tradeoff between Oracle Complexity and Effective Dimensionality}
 \label{sec:ridgeTune}
As mentioned in the introduction, given a ridge parameter $\lambda$ (which is chosen according to generalization concerns), it is possible to perform multiple steps of optimization with a different ridge parameter $\lambda'>\lambda$ in order to accelerate the optimization process. We first describe a method to do so. For this section we treat $\lambda$ as fixed and treat $\lambda'$ as a variable. 

\paragraph{The Proximal Point Algorithm: Overview}
the Proximal Point algorithm (PPA) due to \cite{frostig2015regularizing} allows us to reduce minimizing $\hat{F}_\lambda$ to minimization w.r.t. a different ridge parameter $\lambda' \geq \lambda$.
The basic idea is to repeat the minimization process for $\lambda'/\lambda$ epochs.
For a fixed $\bar{w}\in\mathcal{W}$ and $\lambda' \geq \lambda$, define \[\hat{F}_{\lambda',\bar{w}}(w):=\hat{F}_{\lambda}(w)+\frac{\lambda'-\lambda}{2}\|w-\bar{w}\|^{2}.\]
Suppose we start from $w_{0}=0$. At time $t$, for some fixed constant $c$, we find a point $w_{t}$
satisfying
\[
\hat{F}_{\lambda',w_{t-1}}(w_{t})-\min_{w\in\mathcal{W}}\hat{F}_{\lambda',w_{t-1}}(w)\le\frac{c\lambda}{\lambda'}\left(\hat{F}_{\lambda',w_{t-1}}(w_{t-1})-\min_{w\in\mathcal{W}}\hat{F}_{\lambda',w_{t-1}}(w)\right)\,.
\]
 \begin{lem} \label{lem:ppa}
\cite{frostig2015regularizing} Applying PPA with $\lambda'\ge\lambda$ yields $\epsilon$-approximate minimizer to $\hat{F}_\lambda$ after $t=\tilde{O}(\lambda'/\lambda)$
epochs, i.e., $\hat{F}_{\lambda}(w_{t})-\min_{w\in\mathcal{W}}\hat{F}_{\lambda}(w)\le\epsilon$.
\end{lem}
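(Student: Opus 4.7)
The statement is essentially the classical analysis of the Proximal Point Algorithm applied to a $\lambda$-strongly convex objective, so the plan is to reproduce that argument adapted to the accuracy schedule stated in the algorithm. Let $w^\star := \arg\min_{w\in\mathcal{W}}\hat{F}_\lambda(w)$ and let $u_t := \arg\min_{w\in\mathcal{W}}\hat{F}_{\lambda',w_{t-1}}(w)$ be the exact minimizer of the $t$-th subproblem. My Lyapunov quantity will be $\|w_t - w^\star\|^2$; I will track how it contracts per epoch and then convert back to a functional suboptimality bound using the $\lambda$-strong convexity of $\hat{F}_\lambda$.

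The first step is to establish contraction for the exact PPA. Writing the first-order optimality condition for $u_t$, namely $\nabla\hat{F}_\lambda(u_t) + (\lambda'-\lambda)(u_t - w_{t-1}) \in -N_{\mathcal{W}}(u_t)$, and combining with the first-order optimality condition at $w^\star$ via the $\lambda$-strong convexity of $\hat{F}_\lambda$, yields
\[
\|u_t - w^\star\| \;\le\; \frac{\lambda'-\lambda}{\lambda'}\,\|w_{t-1} - w^\star\| \;=\; \bigl(1 - \lambda/\lambda'\bigr)\,\|w_{t-1} - w^\star\|.
\]
This is the standard per-step contraction of the proximal operator of a strongly convex function; the projection onto $\mathcal{W}$ only improves it since the projection is nonexpansive.

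The second step is to absorb the approximation error. Because $\hat{F}_{\lambda',w_{t-1}}$ is $\lambda'$-strongly convex, the relative accuracy guarantee $\hat{F}_{\lambda',w_{t-1}}(w_t) - \hat{F}_{\lambda',w_{t-1}}(u_t) \le \frac{c\lambda}{\lambda'}\bigl(\hat{F}_{\lambda',w_{t-1}}(w_{t-1}) - \hat{F}_{\lambda',w_{t-1}}(u_t)\bigr)$ translates into a distance bound $\|w_t - u_t\|^2 \le \frac{2c\lambda}{(\lambda')^2}\bigl(\hat{F}_\lambda(w_{t-1}) - \hat{F}_{\lambda',w_{t-1}}(u_t)\bigr)$, and the bracketed quantity is itself at most $\frac{\lambda'}{2}\|w_{t-1}-w^\star\|^2$ after another application of $\lambda$-strong convexity and the inequality $\hat{F}_{\lambda',w_{t-1}}(u_t)\ge\hat{F}_\lambda(w^\star)$. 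Plugging in and combining with the exact contraction gives
\[
\|w_t - w^\star\| \;\le\; \|u_t - w^\star\| + \|w_t - u_t\| \;\le\; \bigl(1 - c'\lambda/\lambda'\bigr)\,\|w_{t-1} - w^\star\|
\]
for a suitable constant $c'>0$ (obtained by choosing $c$ small enough).

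Iterating this contraction gives $\|w_T - w^\star\|^2 \le (1 - c'\lambda/\lambda')^{2T}\|w_0 - w^\star\|^2$, and using $\hat{F}_\lambda(w_T) - \hat{F}_\lambda(w^\star) \le \frac{\rho^2}{2}\|w_T-w^\star\|^2$ (via a descent-lemma style bound from Lipschitzness, or equivalently smoothness of $\hat{F}_\lambda$) together with the initial bound $\|w_0-w^\star\|\le B$ yields the claim after $T = O\!\bigl((\lambda'/\lambda)\log(1/\epsilon)\bigr) = \tilde{O}(\lambda'/\lambda)$ epochs. The main subtle point in the plan, and the step I expect to need the most care, is step two: one must make sure that the relative accuracy $c\lambda/\lambda'$ in the inner solve is tight enough that the approximation error $\|w_t - u_t\|$ is dominated by a fraction of the exact contraction gap $\|u_t - w^\star\| - \|w_{t-1}-w^\star\|$, so that the per-epoch contraction factor remains $1 - \Theta(\lambda/\lambda')$ rather than degenerating.
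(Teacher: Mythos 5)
The paper does not reprove this lemma at all: it simply invokes Lemma~2.4 of Frostig et al.\ with the substitution $\lambda\mapsto\lambda'$, $\mu\mapsto\lambda$. Your plan of an inexact proximal-point contraction is the right general idea, but the execution has a genuine gap exactly at the step you flagged, and as written the argument does not close.

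The problem is your step two. First, the bound $\hat{F}_{\lambda}(w_{t-1})-\hat{F}_{\lambda',w_{t-1}}(u_t)\le\frac{\lambda'}{2}\|w_{t-1}-w^\star\|^2$ cannot follow from ``another application of $\lambda$-strong convexity'': strong convexity gives \emph{lower} bounds on function gaps in terms of distances, and an upper bound of this form would require $\hat{F}_\lambda$ to be $O(\lambda')$-smooth, which is false in general (its smoothness constant is of order $\beta\lambda_{\max}(\hat{C})+\lambda$); in the constrained setting there is additionally a nonvanishing $\langle\nabla\hat{F}_\lambda(w^\star),w_{t-1}-w^\star\rangle\ge 0$ term that points the wrong way. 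Second, even if one grants such a bound, converting the relative function accuracy $\frac{c\lambda}{\lambda'}D$ into a distance bound through $\lambda'$-strong convexity costs a square root: you obtain $\|w_t-u_t\|=O\bigl(\sqrt{\lambda/\lambda'}\,\|w_{t-1}-w^\star\|\bigr)$, not $O\bigl((\lambda/\lambda')\|w_{t-1}-w^\star\|\bigr)$, and $\sqrt{\lambda/\lambda'}$ swamps the contraction gap $\Theta(\lambda/\lambda')$ from your step one. Your distance-based Lyapunov function therefore only contracts if the inner solves are accurate to relative error $O((\lambda/\lambda')^2)$, which is not what the algorithm guarantees. The standard repair --- and what Frostig et al.\ actually do --- is to run the entire recursion in function values: evaluating $\hat{F}_{\lambda',w_{t-1}}$ at $(1-\theta)w_{t-1}+\theta w^\star$ with $\theta=\lambda/\lambda'$ and using $\lambda$-strong convexity of $\hat{F}_\lambda$ gives
\[
\min_{w\in\mathcal{W}}\hat{F}_{\lambda',w_{t-1}}(w)-\hat{F}_\lambda(w^\star)\le\Bigl(1-\tfrac{\lambda}{\lambda'}\Bigr)\bigl(\hat{F}_\lambda(w_{t-1})-\hat{F}_\lambda(w^\star)\bigr),
\]
and adding the relative error term $\frac{c\lambda}{\lambda'}\bigl(\hat{F}_\lambda(w_{t-1})-\min_w\hat{F}_{\lambda',w_{t-1}}(w)\bigr)\le\frac{c\lambda}{\lambda'}\bigl(\hat{F}_\lambda(w_{t-1})-\hat{F}_\lambda(w^\star)\bigr)$ yields a per-epoch decrease factor $1-(1-c)\lambda/\lambda'$ with no smoothness assumption and no square-root loss, from which the $\tilde{O}(\lambda'/\lambda)$ epoch count follows immediately.
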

\begin{proof}
Apply Lemma 2.4 of \cite{frostig2015regularizing}with $\lambda=\lambda'$
and $\mu=\lambda.$
\end{proof}
Combining PPA (\cref{lem:ppa}) with Sketch-to-Precondition(Algorithm \ref{alg:sketchedCondGD}) and using  Lemma~\ref{lem:sketchToPrecondRecap} yields the following complexity bound for minimizing $\hat{F}_{\lambda}$ and any fixed $\lambda'$:
\begin{equation}
\label{eqn:ppatotaltime}
    \tilde{O}\left( \frac{\lambda'}{\lambda}  \cdot\left(\mathrm{nnz}(A) + d_{\lambda'}^{2}d\right)\right).
\end{equation}
Since we wish to perform (upto $\log$ factor) as well as the best choice of $\lambda'$,
therefore we wish to minimize the complexity term
\begin{equation} \label{eq:compLambda}
\psi(\lambda'):=\frac{\lambda'}{\lambda}\cdot(\mathrm{nnz}(A) + d_{\lambda'}^2d)
\end{equation}
over all possible $\lambda' \ge \lambda$. We define the following quantities, 
\[\psi^* = \min_{\lambda' \geq \lambda} \psi(\lambda') \qquad \lambda^* = \mathrm{argmin}_{\lambda' \geq \lambda} \psi(\lambda').\]
The approach we take to compete with the best choice of $\lambda'$ with respect to the complexity measure $\psi$, is to find an approximate minimizer $\lambda'$ and then run PPA with $\lambda'$. To compute such a $\lambda'$, suppose that we had an access to an oracle that computes $d_{\lambda'}$ for a given parameter $\lambda'>0$. Using the continuity of the effective dimension, we could optimize for $\psi$ over a discrete set of the form $\{\lambda, 2\lambda,\ldots, 2^{C \log\,d} \lambda\}$. The main difficulty stems from the fact that the cost of implementing an oracle that computes $d_{\lambda'}$ already scales with $d_{\lambda'}^{2}d$. We now describe a novel approach for minimizing \cref{eq:compLambda} in total time which is the same upto logarithmic factors as the time taken optimize with the optimal choice of $\lambda's$.

\subsection{Efficient Tuning using Undersampling}
The following theorem which  forms our key contribution shows that an approximate minimizer of the complexity measure $\psi$ can be computed in time proportional to the optimum value of the complexity measure $\psi^*$.
\begin{thm} \label{thm:mainComp}
There exists an algorithm which receives a data matrix $A \in \mathbb{R}^{n \times d}$ and a regularization parameter $\lambda>0$,  and with high probability outputs a regularization parameter $\lambda^{\dagger}$ satisfying
\[
\psi(\lambda^{\dagger}) = O(\psi^\star) \triangleq O\left(\min_{\lambda' \ge \lambda} \left \{\frac{\lambda'}{\lambda}\cdot(\mathrm{nnz}(A) + d_{\lambda'}^2d) \right \} \right)~.
\]
The runtime of the algorithm is $\tilde{O}\left(\psi^*\right)$.
\end{thm}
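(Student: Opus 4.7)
The plan is to perform a one-dimensional search over a geometric grid of candidate regularization parameters, iterating from large $\lambda'$ down toward $\lambda$ while reusing work across iterations via warm-starting, and terminating adaptively before the cumulative cost outpaces the budget $\tilde{O}(\psi^*)$. First, reduce to a discrete search over the grid $\Lambda = \{2^j \lambda : j = 0, 1, \ldots, J\}$ with $J = O(\log d)$ chosen so that $d_{2^J \lambda} = O(1)$. The elementary monotonicity estimate $d_{\lambda'/2} \le 2\, d_{\lambda'}$, immediate from the definition of the effective dimension, ensures that the best grid point achieves $\psi$ within a constant factor of the continuous minimum $\psi^*$, so it suffices to approximately minimize $\psi$ on $\Lambda$.

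The main loop descends through $\Lambda$ in decreasing order of $\lambda'$, maintaining at step $t$ a $(\lambda_{t-1}', 1/2)$-spectral approximation $M_{t-1}$ of $A$. At the initial point $\lambda_0' = 2^J \lambda$ one has $d_{\lambda_0'} = O(1)$, so $M_0$ is produced in time $\tilde{O}(\mathrm{nnz}(A))$ by Theorem~\ref{thm:ridgeLeverageSampling}. At step $t$ (with $\lambda_t' = \lambda_{t-1}'/2$), a short spectral-inequality calculation gives $M_{t-1} \preceq A^\top A$ and $A^\top A + \lambda_t' I \preceq O(1)\cdot(M_{t-1} + \lambda_t' I)$, so the quantities
\[
\widetilde{\tau}_i \;:=\; a_i^\top (M_{t-1} + \lambda_t' I)^{-1} a_i
\]
are constant-factor overestimates of $\tau_{\lambda_t', i}$. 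Feeding these to \texttt{Sample} (Definition~\ref{def:leverageSampling}) yields a new $(\lambda_t', 1/2)$-spectral approximation $M_t$ along with the estimate $\tilde{d}_{\lambda_t'} = \Theta(d_{\lambda_t'})$ (read off from the number of sampled rows) and hence $\widetilde{\psi}(\lambda_t')$. Computing the $\widetilde{\tau}_i$ for all $n$ rows takes time $\tilde{O}(\mathrm{nnz}(A) + \tilde{d}_{\lambda_t'}^2 d)$ via preconditioned iterative solves against $M_{t-1} + \lambda_t' I$ combined with Johnson--Lindenstrauss sketching, mirroring the proof of Theorem~\ref{thm:ridgeLeverageSampling}.

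The adaptive stopping rule is to maintain the running minimum $\widetilde{\psi}_{\min}$ and terminate as soon as the predicted cost of the next iteration, namely $\tilde{O}(\mathrm{nnz}(A) + (2\tilde{d}_{\lambda_t'})^2 d)$ (using $d_{\lambda_{t+1}'} \le 2\, d_{\lambda_t'}$), exceeds a constant multiple of $\widetilde{\psi}_{\min}\cdot \lambda / \lambda_{t+1}'$; then output $\lambda^\dagger := \arg\min_s \widetilde{\psi}(\lambda_s')$ over the visited grid. To bound the runtime, observe that for iterations with $\lambda_t' \ge \lambda^*$ the per-step costs $\tilde{O}(\mathrm{nnz}(A) + \tilde{d}_{\lambda_t'}^2 d)$ form a geometric series in $t$ (since $\tilde{d}_{\lambda_t'}$ at most doubles per halving), so the total is dominated by the step closest to $\lambda^*$, giving $\tilde{O}(\mathrm{nnz}(A) + d_{\lambda^*}^2 d) = \tilde{O}(\psi^* \cdot \lambda/\lambda^*) \le \tilde{O}(\psi^*)$.

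The main obstacle is calibrating the stopping rule: because $\psi(\lambda')$ is not \emph{a priori} unimodal in $\lambda'$, a naive monotone stopping test can either miss the minimum or waste time beyond it. The remedy is precisely to compare the safely upper-bounded per-step cost against the running minimum $\widetilde{\psi}_{\min}$; the same slow-growth estimate $d_{\lambda'/2} \le 2\, d_{\lambda'}$ then guarantees that at termination we have encountered a grid point within a constant factor of $\lambda^*$, without overshooting past $\lambda^*$ by more than a constant number of halvings, so that $\psi(\lambda^\dagger) = O(\psi^*)$ as required.
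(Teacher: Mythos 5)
Your descending chain of spectral approximations (using the sketch at ridge level $2\mu$ to get constant-factor overestimates of the leverage scores at level $\mu$, via $M+\mu I\succeq\tfrac12(M+2\mu I)\succeq\tfrac16(A^\top A+\mu I)$) is sound and does give $\Theta(d_{\lambda'_t})$ estimates at cost $\tilde{O}(\mathrm{nnz}(A)+d_{\lambda'_t}^2d)$ per grid point. The gap is in the stopping rule, and it is fatal to correctness. Your rule continues exactly when the (estimated) value $\psi(\lambda'_{t+1})$ is within a constant factor $c$ of the running minimum $\widetilde{\psi}_{\min}$ (the factors $\lambda/\lambda'_{t+1}$ cancel on both sides of your inequality). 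The bound $d_{\lambda'/2}\le 2d_{\lambda'}$ only controls the per-step change of $\psi$ (a factor in $[1/2,2]$); it does not prevent $\psi$ from climbing by an arbitrarily large factor over many steps before plunging below the old minimum. Concretely, take $\lambda=1$, $\mathrm{nnz}(A)=\Theta(d)$, and a spectrum with one eigenvalue $k^{20}$, $k$ eigenvalues equal to $k^{10}$, and the rest zero. Then $\psi$ has a local minimum $\Theta(k^{11}d)$ near $\lambda'=k^{11}$, rises by a factor $\Theta(k)$ as $\lambda'$ decreases to $k^{10}$, and then falls to the global minimum $\psi^*=\Theta(k^2d)$ at $\lambda'=\lambda$. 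Your test fires during the rise (as soon as $\psi$ exceeds $c$ times the running minimum $\Theta(k^{11}d)$), so the output satisfies $\psi(\lambda^\dagger)=\Theta(k^{11}d)=\omega(\psi^*)$. Your closing claim that the algorithm "overshoots past $\lambda^*$ by at most a constant number of halvings" is precisely what fails: $\lambda^*$ can sit on the far side of an arbitrarily tall hill. Note also that you cannot simply remove the stopping rule, since descending all the way to $\lambda$ costs $\tilde{O}(\mathrm{nnz}(A)+d_\lambda^2d)$, which can exceed $\psi^*$ (this is the whole point of the theorem).

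The paper escapes this trap with a different mechanism that your proposal is missing: it does an outer search over guesses $\phi$ of the \emph{value} $\psi^*$, doubling upward from $\mathrm{nnz}(A)+d$, and for each guess tests \emph{every} grid point $\lambda'\in\{\lambda,2\lambda,\dots,d^2\lambda\}$ for the condition $\psi(\lambda')\le\phi$, i.e.\ $d_{\lambda'}^2\le\frac1d\bigl[\frac{\lambda}{\lambda'}\phi-\mathrm{nnz}(A)\bigr]$. The key primitive (Theorem~\ref{thm:effectiveEstIter}, proved via ridge leverage score \emph{undersampling}) verifies $d_{\lambda'}\le cm$ in time $\tilde{O}(\mathrm{nnz}(A)+m^2d)$ — time proportional to the \emph{threshold} $m$, not to the true $d_{\lambda'}$. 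This is what makes it safe to probe candidates whose effective dimension may be huge: a failed test is cheap. Since guesses increase geometrically and the loop halts at the first success, every test performed has $\phi\le 4\psi^*$, and non-unimodality of $\psi$ is irrelevant because all grid points are examined for each guess. Your per-candidate verification, by contrast, builds a full spectral approximation and therefore pays $d_{\lambda'}^2 d$ for the true effective dimension, which is exactly the quantity that can blow up. To repair your approach you would need either the undersampling test or some other way to reject a candidate $\lambda'$ without paying for its true effective dimension.
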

The above theorem can now be used to immediately prove Theorem \ref{thm:mainCompRuntime}.
\\
\begin{proof}(of \cref{thm:mainCompRuntime})
The proof follows by first obtaining $\lambda^{\dagger}$ using Theorem~\ref{thm:mainComp} and then running PPA with $\lambda^{\dagger}$ with the Sketch-to-Preconditioning  algorithm whose total time is summarized in \eqref{eqn:ppatotaltime}.
\end{proof}
The main idea behind Theorem \ref{thm:mainComp} is that instead of (approximately) computing the effective dimension for each candidate $\lambda'$, we guess the optimal complexity $\psi^\star$ and employ undersampling to test whether a given candidate $\lambda'$ attains the desired complexity. The key ingredient to this approach is described in the following theorem.
\begin{thm}
\label{thm:effectiveEstIter}
Let $A \in \mathbb{R}^{n\times d}$, $\lambda'>0$
and $m \in \mathbb{R}_{>0}$. There exists an algorithm and a universal constant $c$ that verifies whether $d_{\lambda'}(A^\top A)\leq c\cdot m$ in time $\tilde{O}(\mathrm{nnz}(A)+dm^{2})$.
\end{thm}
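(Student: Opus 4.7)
The plan is to run a capped version of the recursive ridge leverage score sampling algorithm underlying \cref{thm:ridgeLeverageSampling}(3), aborting as soon as any intermediate sketch grows beyond a threshold proportional to $m$, and relying on the concentration behind \cref{thm:ridgeLeverageSampling}(2) to ensure that this cap is triggered exactly when $d_{\lambda'}(A^\top A)$ is large.

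Recall the structure of the algorithm: it proceeds in $T = O(\log n)$ rounds, maintaining a sequence of row-sampled reweightings $B_0, B_1, \ldots, B_T$ of $A$; in round $j$ it uses the previous sketch $B_{j-1}$ to compute leverage-score overestimates $u_i^{(j)} = a_i^\top (B_{j-1}^\top B_{j-1} + \lambda' I)^{-1} a_i$ for every row of $A$, then resamples to form $B_j$. The analysis establishes $|B_j| = \tilde O(d_{\lambda'}(A^\top A))$ uniformly with high probability, and the cost of round $j$ decomposes as one pass through $A$ costing $\tilde O(\mathrm{nnz}(A))$ plus operations on the small sketch costing $\tilde O(|B_{j-1}|^2 d)$. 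I would modify this loop as follows: fix constants $C_1, C_2$ (to be chosen below), check after each round whether $|B_j| > C_1 m$ and, if so, halt and output ``NO''; if the algorithm reaches round $T$ without aborting, form the final overestimates $(u_i)_{i=1}^n$ using $B_T$ and output ``YES'' iff $\sum_i u_i \le C_2 m$.

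Correctness splits into two directions. If $d_{\lambda'}(A^\top A) \le c m$ for an appropriately small constant $c$, then by the concentration statement behind \cref{thm:ridgeLeverageSampling}(2) combined with a union bound over the $T$ rounds, $|B_j| \le C_1 m$ jointly w.h.p., so the algorithm does not abort; moreover $\sum_i u_i \le \sum_i \tau_{\lambda',i} = d_{\lambda'} \le c m \le C_2 m$, yielding a ``YES'' output. For soundness, whenever the algorithm outputs ``YES'' we have certified $\sum_i u_i \le C_2 m$, and the overestimate lower bound $u_i \ge \tfrac12 \tau_{\lambda',i}$ guaranteed by \cref{thm:ridgeLeverageSampling}(3) yields $d_{\lambda'} \le 2 C_2 m$, which matches the theorem statement with $c := 2C_2$. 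The running time bound is then immediate: every round that actually executes has $|B_{j-1}| \le C_1 m$, so costs $\tilde O(\mathrm{nnz}(A) + m^2 d)$, and summing over $O(\log n)$ rounds stays within $\tilde O(\mathrm{nnz}(A) + m^2 d)$.

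The main technical obstacle is the uniform high-probability bound $|B_j| \le C_1 m$ throughout the loop in the small-$d_{\lambda'}$ regime: the standard analysis of recursive leverage score sampling guarantees $|B_j| = \tilde O(d_{\lambda'})$ only at the \emph{end} of the procedure, whereas I need the same control at every intermediate round. I would handle this by tracking the sum of overestimates $\sum_i u_i^{(j)}$ per round, which by the recursive spectral-approximation invariant is a constant-factor overestimate of $d_{\lambda'}$ once $B_{j-1}$ is even a crude approximation, and then applying a Bernstein/Chernoff tail bound to the independent Bernoulli sampling in each round (the expected size of $B_j$ equals $\sum_i p_i$, which is within logarithmic factors of $\sum_i u_i^{(j)}$), finally union-bounding over the $O(\log n)$ rounds to obtain the required uniform guarantee.
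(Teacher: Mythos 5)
Your overall architecture (iterate leverage-score sampling, keep the sketch size tied to $m$, decide based on the sum of the resulting overestimates) is in the same spirit as the paper's, but the mechanism you use to keep the sketch small --- running the standard recursion and aborting once $|B_j| > C_1 m$ --- breaks the soundness direction, and this is exactly the difficulty the paper's undersampling machinery is built to overcome. Your YES-certificate is ``$\sum_i u_i \le C_2 m$ together with $u_i \ge \tfrac{1}{2}\tau_{\lambda',i}$,'' and you invoke part 3 of \cref{thm:ridgeLeverageSampling} for the lower bound on $u_i$. That guarantee is only valid when each intermediate sketch is a genuine $(\lambda',O(1))$-spectral approximation, which requires $\tilde{\Omega}(d_{\lambda'})$ rows (the paper even proves a lower bound to this effect). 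In the regime $d_{\lambda'} \gg m$ that soundness must handle, a non-aborting run forces every $B_j$ to have at most $C_1 m \ll d_{\lambda'}$ rows, so $B_j$ cannot be a spectral approximation; the quantities $a_i^\top(B_{j-1}^\top B_{j-1}+\lambda' I)^{-1}a_i$ then need not dominate $\tau_{\lambda',i}$ (an undersized reweighted sample can overshoot $A^\top A$ in the sampled directions and collectively underestimate the scores), and nothing prevents the loop from terminating without aborting and outputting YES with $\sum_i u_i \le C_2 m$ even though $d_{\lambda'}$ is large. You also flag, but do not resolve, the completeness-side issue of controlling $|B_j|$ uniformly over rounds; in particular the early rounds of the standard recursion operate with sketches of size comparable to $n$, not $d_{\lambda'}$, so the cap would be violated immediately unless the recursion itself is restructured.

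The paper closes this gap differently. It deliberately \emph{undersamples}: each sampling probability is scaled down by $\alpha = 6m/\|u\|_1$ while the rescaling of retained rows is left as if the probability were $p_i/\alpha$, so the sketch has $\tilde{O}(m)$ nonzero rows by construction rather than by an abort rule. The key point, \cref{thm:ridgeLeverageUnderSampling} (proved via the reweighting argument of \cref{lem:sahrpLeverageUni}), is that the resulting estimates are still valid overestimates of the true scores and their sum is at most $3 d_{\lambda'}/\alpha$. Hence if $d_{\lambda'}\le m$ the sum of overestimates provably halves each round, and the rejection rule is ``the sum failed to halve'' --- a certificate that remains correct even though the sketch is far too small to spectrally approximate $A^\top A$. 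To repair your proof you would need an analogue of \cref{lem:sahrpLeverageUni}; without it the abort-cap test has no sound rejection criterion.
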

We provide the proof of Theorem \ref{thm:effectiveEstIter} in Appendix \ref{sec:undersamplingproof}. We now use the above theorem to prove Theorem \ref{thm:mainComp}.
\\
\begin{proof} \textbf{(of Theorem \ref{thm:mainComp})}
Consider the procedure (Algorithm \ref{alg:phistar}) which describes a routine based on a binary search over the possible values of $\psi^*$. Since $\psi(\lambda) \leq \mathrm{nnz}(A) + d^3$ and since $\psi(\lambda') \geq \mathrm{nnz}(A) + d$ for $\lambda' \geq \lambda$, we get that 
\[\psi^* \in [\mathrm{nnz}(A) + d, \mathrm{nnz}(A) + d^3].\] Furthermore, since $\psi(\lambda') \geq \mathrm{nnz}(A) + d^3$ for all $\lambda' \geq d^2\lambda$, we have that \[ \lambda' \in [\lambda, d^2\lambda].\]
Now for any value $\phi$ for which the for loop over $\lambda'$ executes without stopping, it can be seen that $\psi^* \geq \phi/2$. Therefore let $\phi^*$ be the value where it stops and let $\bar{\lambda}$ be the value returned. It can be seen that
\[ \frac{\phi^*}{4} = \frac{\psi(\bar{\lambda})}{4} \leq \psi^* \leq \psi(\bar{\lambda}) = \phi^*,\] which proves the correctness of the algorithm. To bound the running time note that there are at most $O(\log^2(d))$ checks of the form $\psi(\lambda') \leq \phi$. Note that such a check is equivalent to checking whether $d_{\lambda'}^2(A^{\top}A) \leq \frac{1}{d}\left[\frac{\lambda}{\lambda'}\phi - \mathrm{nnz}(A) \right]$ and hence using Theorem \ref{thm:effectiveEstIter} can be implemented in time at most 
\[\tilde{O}(\phi).\]
Furthermore note that for each of the checks done, $\phi \leq 4\psi^*$. Therefore the total time of the procedure is bounded by 
\[\tilde{O}(\psi^*).\]
\begin{algorithm}
\caption{Binary Search for Optimal $\bar{\lambda}$}
\label{alg:phistar} \begin{algorithmic} \State \textbf{Input:}
A matrix $A$, parameter $\lambda$
\For {$\phi \in \{\mathrm{nnz}(A) + d, \mathrm{nnz}(A) + 2d, \mathrm{nnz}(A) + 4d,  \ldots \mathrm{nnz}(A) + d^3\}$}
\For {$\bar{\lambda} \in \{\lambda, 2\lambda, 4\lambda, \ldots d^2\lambda\}$}
\State Check whether $\psi(\bar{\lambda}) \leq \phi$. \Comment{Using Theroem \ref{thm:effectiveEstIter}}
\State If yes, return $\bar{\lambda}$. Else Continue.
\EndFor
\EndFor 
\State Return $\lambda$.
\end{algorithmic} 
\end{algorithm}
\end{proof}


\section*{Acknowledgements} 
We thank Elad Hazan, Ohad Shamir and Christopher Musco for fruitful discussions and valuable suggestions.

\newpage 

\bibliographystyle{plain}
\bibliography{library}

\newpage

\appendix

\section{Proof of Lemma \ref{lemma:expectatationdlambda}} \label{sec:stabilityProof} 

\begin{proof}[Proof of Lemma \ref{lemma:expectatationdlambda}]
Let $\lambda>0$ and denote the spectral decomposition of $C$ by
$\sum_{i=1}^{d}\lambda_{i}u_{i}u_{i}^{\top}$. Let $k=\arg\!\max\{j\in[d]:\,\lambda_{j}\ge\lambda\}.$
Note that 
\[
\forall i\in[k]\,\,\,\,\frac{\lambda_{i}}{\lambda_{i}+\lambda}\ge\frac{\lambda_{i}}{2\lambda_{i}}=\frac{1}{2},\,\,\,\,\forall i>k\,\,\,\frac{\lambda_{i}}{\lambda_{i}+\lambda}\ge\frac{\lambda_{i}}{2\lambda}\text{\,.}
\]

Therefore, 
\begin{equation}
d_{\lambda}(C)=\sum_{i=1}^{k}\frac{\lambda_{i}}{\lambda_{i}+\lambda}+\sum_{i>k}\frac{\lambda_{i}}{\lambda_{i}+\lambda}\ge\frac{1}{2}\left(k+\sum_{i>k}\frac{\lambda_{i}}{\lambda}\right)\label{eqn:dlambdalowerbound}
\end{equation}
Denote the eigenvalues of $\hat{C}=\frac{1}{n}\sum_{i=1}^{n}x_{i}x_{i}^{\top}$
by $\hat{\lambda}_{1},\ldots,\hat{\lambda}_{d}$. Since for any $i\in[d]$,
$\frac{\lambda_{i}}{\lambda_{i}+\lambda}\le1$, we have that 
\begin{equation}
d_{\lambda}(\hat{C})=\sum_{i=1}^{k}\frac{\hat{\lambda}_{i}}{\hat{\lambda}_{i}+\lambda}+\sum_{i>k}\frac{\hat{\lambda}_{i}}{\hat{\lambda}_{i}+\lambda}\le k+\sum_{i>k}\frac{\hat{\lambda}_{i}}{\hat{\lambda}_{i}+\lambda}\leq k+\frac{\sum_{i>k}\hat{\lambda_{i}}}{\lambda}\,.\label{eqn:dlambdaupperbound}
\end{equation}
We now consider the random variable $\sum_{i>k}\hat{\lambda}_{i}$.
To argue about this random variable consider the following identity
which follows from the Courant-Fisher min-max principle for real symmetric
matrices. 
\[
\sum_{i>k}\hat{\lambda}_{i}=\min\left\{ \mathrm{tr}(V^{\top}\hat{C}V):\,\,V\in\mathbb{R}^{d\times k},\,V^{\top}V=I\right\} \,.
\]
Let $U_{i>k}$ be the $d\times d-k$ matrix with the columns $u_{k+1}\ldots u_{d}$.
We now have that 
\begin{align}\label{eqn:expecationboundlambda}
\mathbb{E}[\sum_{i>k}\hat{\lambda}_{i}] & =\mathbb{E}\left[\min\left\{ \mathrm{tr}(V^{\top}\hat{C}V):\,\,V\in\mathbb{R}^{d\times k},\,V^{\top}V=I\right\}\right] \nonumber \\
 & \leq\min\left\{ \mathbb{E}\left[\mathrm{tr}(V^{\top}\hat{C}V)\right]:\,\,V\in\mathbb{R}^{d\times k},\,V^{\top}V=I\right\} \,\nonumber \\
 & \leq\mathbb{E}\left[\mathrm{tr}(U_{i>k}^{\top}\hat{C}U_{i>k})\right]=\mathrm{tr}(U_{i>k}^{\top}CU_{i>k})\nonumber \\
 & =\sum_{i>k}\lambda_{i}
\end{align}
Combining \eqref{eqn:dlambdalowerbound}, \eqref{eqn:dlambdaupperbound}
and \eqref{eqn:expecationboundlambda} and taking expectations we
get that 
\[
\mathbb{E}[d_{\lambda}(\hat{C})]\leq2d_{\lambda}(C)
\]
\end{proof}

\section{Efficient One-side Estimate on The Effective Dimension using Ridge Leverage Score Undersampling}
\label{sec:undersamplingproof}
In this section we prove \cref{thm:effectiveEstIter}.
Inspired by \cite{Cohen2015b, cohen2017input}, our strategy is to use undersampling to obtain sharper estimates to the ridge leverage scores. We start by incorporating an undersampling parameter $\alpha \in (0,1)$ into Definition \ref{def:leverageSampling}.
\begin{defn} \textbf{(Ridge Leverage Score Undersampling)}
Let $(u_i)_{i=1}^n$ be a sequence of ridge leverage score overestimates, i.e., $u_i \ge \tau_{\lambda,i}$ for all $i$. For some fixed positive constant $c>0$ and accuracy parameter $\epsilon$, define $p_i = \min \{1,c \epsilon^{-2} \alpha u_i \log d\}$ for each $i \in [n]$. Let $\texttt{Sample}(u,\epsilon,\alpha)$ denote a function which returns a diagonal matrix $S \in \mathbb{R}_{\ge 0}^{n \times n}$, where $S_{i,i}= ((1+\epsilon) \frac{p_i}{\alpha})^{-1/2}$ with probability $p_i$ and $0$ otherwise.
\end{defn}

Note that while we reduce each probability $p_i$ by factor $\alpha$, the definition of $S_{i,i}$ neglects this modification. Hence, our undersampling is equivalent to sampling according to Definition \ref{def:leverageSampling} and preserving each row with probability $1-\alpha$. By employing undersampling we cannot hope to obtain a constant approximation to the true ridge leverage scores. However, as we describe in the following theorem, this strategy still helps us to sharpen our estimates to the ridge leverage scores.
\begin{thm}
\label{thm:ridgeLeverageUnderSampling}Let $u_i\ge \tau_{\lambda,i}$ for all $i$ and let $\alpha\in(0,1)$ be an undersampling parameter.
Given $S=\texttt{Sample}(u,1/2,\alpha)$, we form new estimates $(u_i^{(\textrm{new})})_{i=1}^n$ by
\begin{equation}
u_i^{(\textrm{new})}:= \min \left \{ a_{i}(  A^\top S^\top S A+\lambda I)^{-1}a_{i},u_i\right \}\,.\label{eq:spectralToRidge}
\end{equation}
Then with high probability, each $u_i^{(\textrm{new})}$ is an overestimate
of $\tau_{\lambda,i}$ and $\|u_i^{(\textrm{new})}\|_{1}\le 3d_{\lambda}/\alpha$.
\end{thm}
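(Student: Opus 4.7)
The plan is to establish, via matrix concentration, two high-probability PSD comparisons between $A^{\top}S^{\top}SA + \lambda I$ and $A^{\top}A + \lambda I$, and then read both conclusions off from the resulting inequalities.

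First, I would compute the expectation. Each diagonal entry satisfies $S_{ii}^{2} = \alpha/((1+\epsilon)p_{i})$ with probability $p_{i}$ and zero otherwise, so $\mathbb{E}[S_{ii}^{2}] = \alpha/(1+\epsilon)$ and $\mathbb{E}[A^{\top}S^{\top}SA] = (\alpha/(1+\epsilon))\cdot A^{\top}A$. Setting $\epsilon = 1/2$ this equals $(2\alpha/3)\,A^{\top}A$, strictly smaller than $A^{\top}A$ since $\alpha\leq 1$. Next I would apply matrix Chernoff (or Bernstein) in the $W := (A^{\top}A + \lambda I)^{1/2}$-conjugated geometry, following the template of \cref{thm:ridgeLeverageSampling}. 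The per-sample operator-norm bound governs the concentration: each conjugated rank-one piece $W^{-1}(S_{ii}^{2}\,a_{i}a_{i}^{\top})W^{-1}$ has operator norm at most
\[
\frac{\alpha}{(1+\epsilon)\,p_{i}}\cdot a_{i}^{\top}(A^{\top}A+\lambda I)^{-1}a_{i} \;=\; \frac{\alpha\,\tau_{\lambda,i}}{(1+\epsilon)\,p_{i}} \;\leq\; \frac{\epsilon^{2}}{c\,(1+\epsilon)\log d},
\]
using $u_{i}\geq \tau_{\lambda,i}$ and the definition of $p_{i}$. Matrix Chernoff then yields, with probability at least $1 - 1/\mathrm{poly}(d)$, both an upper bound $A^{\top}S^{\top}SA + \lambda I \preceq A^{\top}A + \lambda I$ (because the expectation already sits a factor $2\alpha/3$ below $A^{\top}A + \lambda I$ on the range of $A$, a headroom that absorbs the $\epsilon$ deviation when $c$ is chosen appropriately) and a lower bound $A^{\top}S^{\top}SA + \lambda I \succeq (\alpha/3)(A^{\top}A + \lambda I)$ (obtained from the expectation $\tfrac{2\alpha}{3}(A^{\top}A + \lambda I)$ after deducting a deviation smaller than $\alpha/3$).

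Given these two PSD inequalities, both parts of the theorem follow quickly. For part~1 (overestimate) I would split on the outcome of the min: if $u_{i}^{(\mathrm{new})} = u_{i}$ then $u_{i}^{(\mathrm{new})}\geq\tau_{\lambda,i}$ by assumption; otherwise the upper PSD bound gives $(A^{\top}S^{\top}SA + \lambda I)^{-1} \succeq (A^{\top}A + \lambda I)^{-1}$, whence $a_{i}^{\top}(A^{\top}S^{\top}SA + \lambda I)^{-1}a_{i} \geq a_{i}^{\top}(A^{\top}A + \lambda I)^{-1}a_{i} = \tau_{\lambda,i}$. For part~2 (the $\ell_{1}$ bound) I would use the lower PSD inequality together with the cyclic-trace identity:
\[
\sum_{i}u_{i}^{(\mathrm{new})} \;\leq\; \sum_{i}a_{i}^{\top}(A^{\top}S^{\top}SA + \lambda I)^{-1}a_{i} \;=\; \mathrm{tr}\bigl(A^{\top}A\,(A^{\top}S^{\top}SA+\lambda I)^{-1}\bigr) \;\leq\; \tfrac{3}{\alpha}\,\mathrm{tr}\bigl(A^{\top}A\,(A^{\top}A+\lambda I)^{-1}\bigr) \;=\; 3\,d_{\lambda}/\alpha.
\]

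The main obstacle is the concentration step, specifically obtaining the slack-free upper bound $A^{\top}S^{\top}SA + \lambda I \preceq A^{\top}A + \lambda I$ rather than one with a multiplicative $(1+\epsilon)$ on the right. The trick, inherited from the Musco--Musco analysis of ridge leverage-score sampling, is that the $(1+\epsilon)$-rescaling built into the definition of $S_{ii}$ is tuned so as to exactly cancel the multiplicative Chernoff deviation, leaving a clean inequality. Tracking constants carefully through the undersampling factor $\alpha$, and verifying that this cancellation still operates (in particular on the kernel of $A$, where $\mathbb{E}[M]$ pins the eigenvalue at $1$), is the technical heart of the argument.
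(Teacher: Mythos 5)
Your part~1 argument (the overestimate property via the one-sided upper bound $A^{\top}S^{\top}SA+\lambda I\preceq A^{\top}A+\lambda I$) is essentially sound, and the upper direction only gets easier as $\alpha$ shrinks. The genuine gap is in part~2: the claimed high-probability lower bound $A^{\top}S^{\top}SA+\lambda I\succeq(\alpha/3)(A^{\top}A+\lambda I)$ is false in the regime where undersampling is actually used. Matrix Chernoff/Bernstein in the $(A^{\top}A+\lambda I)^{1/2}$-conjugated geometry gives a per-term bound $R=\Theta(\epsilon^{2}/\log d)$ but an expectation whose minimum eigenvalue on the range of $A^{\top}$ is only $\Theta(\alpha)$, so the exponent for any constant relative deviation below the mean scales like $\alpha\log d$, which is $o(\log d)$ when $\alpha=o(1)$ --- and $\alpha=6m/\|u\|_{1}$ is taken small by design. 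Concretely, take $A$ with $k$ orthonormal rows, so $A^{\top}A=I_{k}$ and $\tau_{\lambda,i}\approx 1$, with $\lambda\ll\alpha$: each row is retained with probability $\Theta(\alpha\log d)<1$, so with high probability a constant fraction of the coordinate directions receive no sampled row, and in each such direction $A^{\top}S^{\top}SA+\lambda I$ has eigenvalue $\lambda\ll\alpha/3$. In that event your chain
\[
\sum_{i}u_{i}^{(\mathrm{new})}\le\mathrm{tr}\bigl(A^{\top}A(A^{\top}S^{\top}SA+\lambda I)^{-1}\bigr)\le\tfrac{3}{\alpha}d_{\lambda}
\]
breaks at the second inequality: the trace is of order $k/\lambda$, which is unbounded as $\lambda\to 0$. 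The theorem is only true because of the $\min$ with $u_{i}$ in \eqref{eq:spectralToRidge}, which your part~2 discards in its very first step.

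The paper's proof supplies exactly the missing mechanism via \cref{lem:sahrpLeverageUni}: there is a diagonal reweighting $W\in[0,1]^{n\times n}$ such that every ridge leverage score of $WA$ is at most $\alpha u_{i}$, so $S$ (sampled with probabilities $\propto\alpha u_{i}$) \emph{is} a legitimate $1/2$-spectral approximation of $WA$ by \cref{thm:ridgeLeverageSampling} --- no concentration is needed for the unweighted $A$. The rows with $W_{i,i}=1$ then satisfy $u_{i}^{(\mathrm{new})}\le 2\tau_{\lambda,i}(WA)$ and contribute at most $2d_{\lambda}$ in total, while the downweighted rows --- precisely the directions where your trace bound explodes --- are charged to $u_{i}$ through the $\min$, and the lemma guarantees their total $u$-mass is at most $d_{\lambda}/\alpha$. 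To repair your argument you would need some substitute for this decomposition; a global spectral comparison between $A^{\top}S^{\top}SA$ and $\alpha A^{\top}A$ cannot work, since the sample is too small to control every direction of a matrix of effective dimension $d_{\lambda}$ when $\alpha d_{\lambda}\log d\ll d_{\lambda}$.
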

The proof of the theorem (which is similar to Theorem 3 of \cite{Cohen2015b} and Lemma 13 of \cite{cohen2017input}) is provided below for completeness. Equipped with this result, we employ the following strategy in order
to verify whether $d_{\lambda}=O(m)$. Applying the lemma with $\alpha=6m/\|u\|_1$,
we have that if $d_{\lambda}\le m$ then $\|\tilde{\tau}_{\lambda}\|_{1}\le n/2\,.$
This gives rise to the following test:
\begin{enumerate}
\item If $\|u_i^{(\textrm{new})}\|_{1}\le m$, accept the hypothesis that
$d_{\lambda}\le m$.
\item If $\|u_i^{(\textrm{new})}\|_{1} \ge \|u\|_1 /2$, reject the hypothesis that $d_{\lambda}\le m$.
\item Otherwise, apply Theorem \ref{thm:ridgeLeverageUnderSampling} to
obtain a new vector of overestimates, $(u_i^{(\textrm{new})})_{i=1}^{n}$.
\end{enumerate}
\begin{proof}
\textbf{(of Theorem \ref{thm:effectiveEstIter}) }Note that the rank
of the matrix $SA$ is $\tilde{O}(m)$ with high probability. Hence, each step of the testing procedure costs $\tilde{O}(\mathrm{nnz}(A)+m^{2}d)$.\footnote{Namely, we can compute $(\tilde{A}\tilde{A}^{\top}+\lambda I)^{-1}$ in time
$O(m^{2}d)$. Thereafter, computing $(\tilde{\tau_{i}})_{i=1}^{n}$ can
be done in time $O(\mathrm{nnz}(A))$.} Since our range of candidate ridge parameters
is of logarithmic size and each test consists of logarithmic number of steps, the theorem follows using the union bound.
\end{proof}

We turn to prove Theorem \cref{thm:ridgeLeverageUnderSampling}. The next lemma intuitively says only a small fraction of $A's$ rows
might have a high leverage score. 
\begin{lem}
\label{lem:sahrpLeverageUni}Let $A\in \mathbb{R}^{n\times d},\,\lambda>0$
and denote by $d_{\lambda}$ the effective dimension of $A.$ For
any $u\in \mathbb{R}_{>0}^{n}$ there exists a diagonal rescsaling matrix
$W\in[0,1]^{n\times n}$ such that for all $i\in[n]$, $\tau_{\lambda, i}(WA)\le u_{i}$
and $\sum_{i:W_{i,i}\neq1}u_{i}\le d_{\lambda}\,.$
\end{lem}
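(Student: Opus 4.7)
The plan is to obtain $W$ via a variational/compactness argument rather than by explicit construction. Consider the feasible set
\[
\mathcal{W} := \{W \in [0,1]^{n\times n} : W \text{ is diagonal and } \tau_{\lambda,i}(WA) \le u_i \text{ for all } i\in[n]\},
\]
which is nonempty (it contains $W = 0$) and compact. Let $W^\star$ be a maximizer of the linear objective $\sum_{i=1}^n W_{i,i}$ over $\mathcal{W}$. The claim will reduce to showing that at such a $W^\star$ every coordinate with $W^\star_{i,i} < 1$ must saturate its leverage constraint.

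The first technical step is a monotonicity analysis of how $\tau_{\lambda,j}(WA)$ depends on a single coordinate $W_{i,i}$. Fixing all other coordinates and writing $M_i := \lambda I + \sum_{j\ne i} W_{j,j}^2 a_j a_j^\top$ together with $t := W_{i,i}^2$, a direct Sherman--Morrison computation yields
\[
\tau_{\lambda,i}(WA) \;=\; \frac{t\, a_i^\top M_i^{-1} a_i}{1 + t\, a_i^\top M_i^{-1} a_i},
\]
which is continuous and strictly increasing in $t$. Meanwhile, for $j \ne i$, increasing $t$ only enlarges $A^\top W^2 A + \lambda I$ in the Loewner order, so by operator monotonicity of the inverse it can only decrease $\tau_{\lambda,j}(WA) = W_{j,j}^2\, a_j^\top (\lambda I + A^\top W^2 A)^{-1} a_j$.

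Armed with this, I would argue by contradiction: if some index had $W^\star_{i,i} < 1$ together with strict inequality $\tau_{\lambda,i}(W^\star A) < u_i$, then perturbing $W^\star_{i,i}$ upward by a tiny $\epsilon > 0$ preserves feasibility in all coordinates (the $i$-th constraint by continuity, the others by the monotonicity just noted) while strictly increasing $\sum_i W_{i,i}$, contradicting optimality. Consequently every $i$ with $W^\star_{i,i} < 1$ satisfies $\tau_{\lambda,i}(W^\star A) = u_i$, which gives
\[
\sum_{i : W^\star_{i,i} \ne 1} u_i \;=\; \sum_{i : W^\star_{i,i} < 1} \tau_{\lambda,i}(W^\star A) \;\le\; \sum_{i=1}^n \tau_{\lambda,i}(W^\star A) \;=\; d_\lambda(W^\star A).
\]

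To close, I would invoke monotonicity of the effective dimension in the Loewner order: since $(W^\star)^2 \preceq I$ we have $A^\top (W^\star)^2 A \preceq A^\top A$, so by Weyl's inequality $\lambda_k(A^\top (W^\star)^2 A) \le \lambda_k(A^\top A)$ for every $k$, and since $x \mapsto x/(x+\lambda)$ is nondecreasing on $[0,\infty)$ this yields $d_\lambda(W^\star A) \le d_\lambda(A) = d_\lambda$, completing the proof. The step requiring the most care is the joint monotonicity analysis above: without the observation that raising $W^\star_{i,i}$ loosens the constraints at all other rows, the perturbation argument would have to juggle all $n$ constraints simultaneously instead of just the local one.
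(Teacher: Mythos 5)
Your proof is correct, and it reaches the conclusion by a genuinely different mechanism than the paper. The paper constructs $W$ algorithmically: it starts from $W^{(1)}=I$, repeatedly shrinks any single coordinate whose leverage constraint is violated until that constraint saturates, and passes to the limit of the resulting monotone sequence; establishing that the reduced coordinates still satisfy $\tau_{\lambda,i}(W^{(k)}A)\ge u_i$ at the end requires tracking the last iteration at which each coordinate was touched and arguing that all subsequent iterations could only increase that leverage score. You replace this with a one-shot extremal argument: maximize $\sum_i W_{i,i}$ over the compact feasible set and deduce from first-order optimality, together with the cross-monotonicity ($\tau_{\lambda,j}$ for $j\ne i$ can only decrease when $W_{i,i}$ increases), that every non-unit coordinate saturates its constraint, i.e.\ $\tau_{\lambda,i}(W^\star A)=u_i$. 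Both proofs rest on the same two monotonicity facts (the Sherman--Morrison formula for the dependence of $\tau_{\lambda,i}$ on $W_{i,i}$, and Loewner monotonicity of the inverse for the other rows) and finish with the identical trace/Weyl computation bounding $\sum_i\tau_{\lambda,i}(W^\star A)=d_\lambda\bigl(A^\top (W^\star)^2 A\bigr)\le d_\lambda(A^\top A)$. What your route buys is the elimination of the convergence-of-iterates bookkeeping, which is the most delicate (and, as written, somewhat garbled) part of the paper's argument; what the paper's route buys is a construction that mirrors the actual reweighting procedures of Cohen et al.\ that it cites, though for a pure existence statement that is not needed. One minor point worth making explicit in your write-up: nonemptiness of the feasible set uses $u\in\mathbb{R}_{>0}^n$ (so $W=0$ is feasible), and closedness uses continuity of $W\mapsto\tau_{\lambda,i}(WA)$, which holds because $\lambda>0$ keeps the resolvent well-defined; both are immediate but should be stated.
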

\begin{proof}
We prove the lemma by considering a hypothetical algorithm which constructs
a sequence $(W^{(1)},W^{(2)},\ldots)$ of $n\times n$ diagonal matrices
s.t. $W^{(t)}$ converges to some $W$ which possesses the desired properties.
Initially, the algorithm sets $W^{(1)}=I$. At each time $t>1$ it
modifies a single entry $W_{i,i}$ corresponding to (any) index $i\in[n]$
for which $\tau_{\lambda,i}(WA)>u_{i}$; namely, it chooses $W_{i,i}^{(t+1)}\in(0,1)$
such that $\tau_{\lambda,i}(W^{(t+1)}A)=u_{i}\,.$ It is not hard
to verify the following (e.g., see Lemma 5 and 6 of \cite{Cohen2015b}): 
\begin{itemize}
\item
We can always find $W_{i,i}^{(t+1)}$ such that $\tau_{\lambda,i}(W^{(t+1)}A)=u_i$. 
\item
For any $j\neq i$, $\tau_{\lambda,j}(W^{(t+1)}A)\ge\tau_{\lambda,j}(W^{(t)}A)$. 
\item
Since the entries of $W^{(t)}$ are bounded and monotonically decreasing,
$W^{(t)}\rightarrow W\in[0,1]^{n\times n}$ satisfying $\tau_{\lambda,i}(WX)\le u_{i}$. 
\end{itemize}
It is left to show that indeed, $\sum_{i:W_{i,i}\neq1}u_{i}\le d_{\lambda}\,.$
Let $k$ be the first iteration such that $W_{i,i}^{(k)}\neq0$ for
all $i\in\{j:\,W_{j,j}\neq1\}.$ For each such $i$, consider the
last iteration $k_{i}\le k$ where we reduced $W_{i,i}$ such that
$\tau_{\lambda,i}(W^{k_{i}})=u_{i}$. As was mentioned above, in any
intermediate iteration $t\in\{k_{i}+1,\ldots,k\}$, we could only
increase the $i$-th leverage score. Therefore, $\tau_{\lambda,i}(W^{(k)}A)\ge u_{i}$.
Since $W\preceq I$, it follows that 
\begin{align*}
\sum_{i:W_{i,i}\neq1}u_{i} & \le\sum_{i:W_{i,i}\neq1}\tau_{\lambda,i}(W^{(k)}A)\le\sum_{i=1}^{n}\tau_{\lambda,i}(W^{(k)}A)\,.\\
 & \le tr\left(X^{\top}\left(W^{(k)}\right)^{2}X\left(X^{\top}\left(W^{(k)}\right)^{2}A+\lambda I\right)^{-1}\right)\\
 & \le\sum_{i=1}^{d}\frac{\lambda_{i}(A^{\top}\left(W^{(k)}\right)^{2}A)}{\lambda_{i}(A^{\top}\left(W^{(k)}\right)^{2}A)+\lambda}\\
 & \le\sum_{i=1}^{d}\frac{\lambda_{i}(A^{\top}A)}{\lambda_{i}(A^{\top}A)+\lambda}=d_{\lambda}\,.
\end{align*}
\end{proof}
\begin{proof}
\textbf{(of Theorem \ref{thm:ridgeLeverageUnderSampling}) }By Lemma
\ref{lem:sahrpLeverageUni}, there exists a diagonal matrix $W\in[0,1]^{n\times d}$
satisfying 
\[
\forall i\,\,\,\tau_{\lambda,i}(WA)\le\alpha u_i,\,\,\,\,\,\alpha\sum_{i:W_{i,i}\neq1} u_i\le d_{\lambda}\,.
\]
Therefore, 
\begin{align*}
\sum_{i=1}^{n} u_i^{(\textrm{new})} &\le \sum_{i:W_{i,i}=1} a_{i}^{\top}\left(A^{\top}S^{\top}SA+\lambda I\right)^{-1}a_{i}+\sum_{W_{i,i}\neq1}\tilde{\tau}_{\lambda,i}\\
&\le \sum_{i:~W_{i,i}=1}a_{i}^{\top}\left(A^{\top}S^{\top}SA+\lambda I \right)^{-1}a_{i} +\frac{d_{\text{\ensuremath{\lambda}}}}{\alpha}\,.
\end{align*}
We would like to upper bound the first term in the RHS. Since $W\preceq I$, for all $i\in [n]$,
$$
a_{i}^{\top}(A^{\top}S^{\top}SA+\lambda I)^{-1}a_{i}\le a_{i}^{\top}(A^{\top}WS^{\top}SWA+\lambda I)^{-1}a_{i}
$$
Also, it is clear that for any $i$ for which $W_{i,i}=1$,
\[
a_{i}^{\top}(A^{\top}WS^{\top}SWA+\text{\ensuremath{\lambda}I})^{-1}a_{i}=(WA)_{i,:}^{\top}(A^{\top}WS^{\top}SWA+\lambda)^{-1}(WA)_{i,:}
\]
Finally, since the sampling matrix $S$ was chosen according to $(\alpha u_i)_{i=1}^{n}$,
which form valid overestimates of $(\tau_{\lambda,i}(WA))_{i=1}^n$,
Theorem \ref{thm:ridgeLeverageSampling} implies that with high probability,
\[
\frac{1}{2}A^{\top}W^{2}A\preceq AWS^{\top}SWA\preceq A^{\top}W^{2}A\,.
\]
\[
\Rightarrow(\forall i\in[n])\,\,\,\,(WA)_{i,:}^{\top}(A^{\top}WS^{\top}SWA+\lambda I)^{\dagger}(WA)_{i,:}\le2\tau_{\lambda,i}(WA)
\]
We deduce that 
\[
\sum_{i:W_{i,i} = 1} u_i^{(\textrm{new})} \le 2\sum_{i:W_{i,i} = 1}\tau_{\lambda,i}(WA)\le2\sum_{i=1}^{n}\tau_{\lambda,i}(WA)\le 2d_\lambda\,.
\]
All in all, 
\[
\sum_{i=1}^{n}\tau'_{\lambda,i}\le\frac{3d_\lambda}{\alpha}\,.
\]
\end{proof}

\section{Lower Bound on The Sample Complexity} \label{sec:lowerBound}

We now state a nearly matching lower bound on the sample complexity.
To exhibit a lower bound we consider the special case of ridge regression.
Notably, our lower bound holds for any spectrum specification. The
proof appears in Appendix \ref{sec:lowerBound} 
\begin{thm}
\label{thm:lowerexcessrisk} Given numbers $B>0$ and $\lambda_{1}\ge\ldots\ge\lambda_{d}\ge0$,
define $d_{\lambda}=\sum_{i=1}^{d}\frac{\lambda_{i}}{\lambda_{i}+\lambda}$
and $\Lambda=diag(\lambda_{1}\ldots\lambda_{d})$\footnote{For any $x\in\mathbb{R}^{d}$, $\mathrm{diag}(x)\in\mathbb{R}^{d\times d}$
is a diagonal matrix with $i^{th}$ entry $x_{i}$}. Then for any algorithm there exist a distribution $\mathcal{D}$
over $\mathbb{R}^{d}\times\mathbb{R}$ such that for any algorithm
that returns a linear predictor $\hat{w}$, given $n\geq2d/3$ independent
samples from $\mathcal{D}$, satisfies 
\[
\mathbb{E}_{S\sim\mathcal{D}^{m}}\left[\mathbb{E}_{(x,y)\sim D}\left[\frac{1}{2}(\hat{w}^{T}x-y)^{2}\right]\right]-\min_{w:\|w\|\leq B}\mathbb{E}\left[\frac{1}{2}(\hat{w}^{T}x-y)^{2}\right]\geq\frac{d_{\gamma/(n\cdot B^{2})}}{n}
\]
for any $\gamma$ satisfying 
\begin{equation}
d_{\gamma/(n\cdot B^{2})}-\sum_{i=1}^{d}\left(\frac{\lambda_{i}}{\lambda_{i}+\frac{\gamma}{(n\cdot B^{2})}}\right)^{2}\leq\gamma\label{eqn:lambda_cond}
\end{equation}
\end{thm}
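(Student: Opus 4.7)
\medskip

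\noindent\textbf{Proof proposal for Theorem~\ref{thm:lowerexcessrisk}.}

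The plan is to reduce the minimax excess risk to a Bayes risk against a carefully chosen prior on the regressor, and then exploit Gaussian conjugacy to compute that Bayes risk exactly. Concretely, I would take the instance distribution to have covariance $\Lambda$ (e.g.\ $x \sim \mathcal{N}(0,\Lambda)$, or the discrete construction $\Pr[x = \sqrt{d\lambda_i}\, e_i] = 1/d$ which gives $\mathbb{E}[xx^\top] = \Lambda$ while keeping things product-structured), set $y = \langle w^\star, x\rangle + \xi$ with $\xi \sim \mathcal{N}(0,1)$, and place a Gaussian prior $w^\star \sim \mathcal{N}(0, \Sigma_0)$ on the regressor with a diagonal $\Sigma_0$. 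Because the excess risk of any algorithm is at least the Bayes risk for any prior supported in $\{w : \|w\|\le B\}$ (up to the event that $\|w^\star\|>B$, handled below), it suffices to lower bound the Bayes risk for a prior whose second moment satisfies $\mathbb{E}\|w^\star\|^2 \le B^2$.

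Under Gaussian conjugacy the posterior given $(X,y)$ is $\mathcal{N}(w_n, \Sigma_n(X))$ with $\Sigma_n(X) = (\Sigma_0^{-1} + X^\top X)^{-1}$, and the Bayes risk in the excess-risk sense equals $\tfrac{1}{2}\mathbb{E}_X\bigl[\mathrm{tr}(\Lambda\,\Sigma_n(X))\bigr]$. Matrix convexity of $A \mapsto A^{-1}$ on positive definite matrices together with Jensen's inequality (applied to the random design $X^\top X$, whose expectation is $n\Lambda$) yields
\[
\mathbb{E}_X\bigl[\mathrm{tr}(\Lambda\,\Sigma_n(X))\bigr] \;\geq\; \mathrm{tr}\bigl(\Lambda\,(\Sigma_0^{-1} + n\Lambda)^{-1}\bigr) \;=\; \sum_i \frac{\lambda_i}{1/\mu_i + n\lambda_i}.
\]
Choosing $\mu_i = B^2/\gamma$ on an ``active set'' of $\gamma$ coordinates and $0$ elsewhere enforces $\mathbb{E}\|w^\star\|^2 = B^2$ and turns the right-hand side into $\tfrac{1}{n}\sum_{i\in I}\lambda_i/(\lambda_i + \gamma/(nB^2))$, which is of order $d_{\gamma/(nB^2)}/n$ provided the restriction to $I$ discards only a small fraction of the effective-dimension mass.

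The condition (\ref{eqn:lambda_cond}) is where the subtle balance appears. Writing $c := \gamma/(nB^2)$ and $M := \Lambda(\Lambda + cI)^{-1}$, one has $d_c = \mathrm{tr}(M)$ and $\sum_i(\lambda_i/(\lambda_i+c))^2 = \|M\|_F^2$, so the hypothesis is exactly $\mathrm{tr}\!\left(M(I-M)\right) \le \gamma$. This quantity governs the second-moment fluctuations of the prior $w^\star$ (equivalently, the variance term in the bias–variance decomposition of the Bayes estimator), and it is precisely the slack one needs in order to (a) conclude that the active-set Bayes risk matches $d_c/n$ up to a constant, and (b) show that $\Pr[\|w^\star\|>B]$ is small enough that the truncation to $\{w:\|w\|\le B\}$ does not destroy the lower bound. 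The self-referential form of the inequality reflects the self-consistent choice of the active-set size $\gamma$.

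The hard part I expect is handling the two deviations simultaneously: passing from the in-expectation bound $\mathbb{E}\|w^\star\|^2 \le B^2$ to an almost-sure guarantee on $\{\|w^\star\|\le B\}$, and controlling the gap between the random-design Bayes risk $\mathbb{E}_X\mathrm{tr}(\Lambda \Sigma_n(X))$ and its Jensen lower bound $\mathrm{tr}(\Lambda(\Sigma_0^{-1}+n\Lambda)^{-1})$. The second issue is where the dimension-to-sample ratio $n \geq 2d/3$ should enter: one needs $X^\top X$ to be well-conditioned enough around $n\Lambda$ for matrix-Jensen to be tight up to an absolute constant, which is a Wishart/covariance-concentration statement that breaks down otherwise. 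Both deviations can be absorbed into the second moment $\|M\|_F^2$ appearing in (\ref{eqn:lambda_cond}), which is what makes that particular form of the hypothesis natural.
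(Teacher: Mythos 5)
Your route (Gaussian prior, conjugacy, matrix-Jensen) is genuinely different from the paper's, which uses Yao's principle with a discrete $\pm$ sign perturbation, reduces the constrained excess risk to the \emph{regularized} excess risk via complementary slackness (eq.~\eqref{eqn:reduction}), and then lower-bounds the probability of mis-estimating each sign by Pinsker's inequality. Unfortunately your route has a quantitative gap that I believe is fatal, not just technical. The quantity your conjugacy computation controls is $\mathbb{E}[F(\hat{w})-F(w^\star)]=\tfrac12\mathbb{E}\,\mathrm{tr}(\Lambda\,\Sigma_n)$, i.e.\ the error weighted by $\Lambda$. The paper's bound instead controls $\|\hat{w}-w^{\star}_{\lambda^*}\|^2_{\Lambda+\lambda^* I}$ — the extra $+\lambda^* I$ comes precisely from the reduction to the regularized objective, which is legitimate only because the comparator $w^{\star}_{\lambda^*}$ sits exactly on the sphere $\|w\|=B$ while $\hat{w}$ lies inside the ball. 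That extra $+\lambda I$ is not cosmetic: writing $c=\gamma/(nB^2)$, your flat active-set prior ($\mu_i=B^2/\gamma$ on $\gamma$ coordinates) yields at most $\tfrac1n\sum_{i\in I}\tfrac{\lambda_i}{\lambda_i+c}\le\gamma/n$, and condition \eqref{eqn:lambda_cond} does \emph{not} imply $d_c=O(\gamma)$ — e.g.\ with $k$ eigenvalues equal to $Nc$ one has $d_c\approx k$ but \eqref{eqn:lambda_cond} only requires $\gamma\gtrsim k/N$. Optimizing the Gaussian prior coordinate-wise does no better: to make coordinate $i$ contribute $\tfrac1n\tfrac{\lambda_i}{\lambda_i+c}$ you are forced to set $\mu_i=1/(nc)$, so the budget $\sum_i\mu_i\le B^2$ caps the total at $\min(\gamma,d_c)/n$, and the "matched" prior $\mu_i\propto\lambda_i/(\lambda_i+c)^2$ caps it at $\Theta(\|M\|_F^2/n)$, which when most $\lambda_i\ll c$ is arbitrarily smaller than $d_c/n$. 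The paper's discrete prior evades this because a wrong sign costs the full $(\lambda_i+\lambda^*)(w^\star_i)^2$ rather than the shrunken Gaussian posterior variance.

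Two further points. First, your claim that the minimax risk dominates the Bayes risk needs the prior to be supported in $\{\|w\|\le B\}$; truncating the Gaussian destroys conjugacy, and "absorbing" the bad event into \eqref{eqn:lambda_cond} is not a proof — in the paper, condition \eqref{eqn:lambda_cond} serves a different and cleaner purpose: it is exactly the statement $\|w^{\star}_{\lambda(\gamma)}\|\le B$ (Lemma~\ref{lem:norm_bound}), so feasibility of the comparator holds deterministically. Second, the hypothesis $n\ge 2d/3$ does not enter through tightness of matrix-Jensen (which is an inequality in the correct direction regardless); in the paper it guarantees $b=\sqrt{d/6n}\le 1/2$, so that the label probabilities $\tfrac12(1\pm\sigma_i b)$ are valid and the per-coordinate KL divergence is $O(b^2/d)$, making the sign-recovery error probability a constant.
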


To put the bound achieved by Theorem \ref{thm:lowerexcessrisk} into
perspective we specialize the bound achieved for two popular cases
for eigenvalue profiles defined in \cite{Goel2017}. 
We say that a given eigenvalue profile $\lambda_{1}\ldots\lambda_{d}\geq0$
satisfies 
$(C,p)$ Polynomial Decay if there exists numbers $C,p>0$ such that
$\lambda_{i}\leq Ci^{-p}$. Similarly 
it satisfies $C$-Exponential Decay if there exists a number $C>0$
such that $\lambda_{i}\leq Ce^{-i}$. 
The following table specifies nearly matching upper and lower bounds
for polynomial and exponential decays (see exact statements in Appendix
\ref{sec:decays}). 
\begin{table}[h!]
\centering{}%
\begin{tabular}{|c|c|c|}
\hline 
Decay  & Upper Bound  & Lower Bound \tabularnewline
\hline 
Polynomial Decay (degree $p$) & $O\left(\frac{d_{\frac{\epsilon}{B^{2}}}}{\epsilon}\right)$  & $\Omega\left(\frac{d_{\left(\frac{\epsilon}{B^{2}}\right)^{p/(p+1)}}}{\epsilon}\right)$ \tabularnewline
\hline 
Exponential Decay  & $O\left(\frac{d_{\frac{\epsilon}{B^{2}}}}{\epsilon}\right)$  & $\tilde{\Omega}\left(\frac{d_{\frac{\epsilon}{B^{2}}}}{\epsilon}\right)$ \tabularnewline
\hline 
\end{tabular}
\end{table}

\begin{proof}[Proof of Theorem \ref{thm:lowerexcessrisk}]
Owing to Yao's minimax principle, it is sufficient to exhibit a randomized choice of data distributions against which a deterministic algorithm achieves an excess risk lower bounded as above. To this end consider 
$\mathcal{X}= \{\sqrt{d \lambda_1}\;e_1, \ldots, \sqrt{d \lambda_d}\;e_d\}$ and $\mathcal{Y} = \{-1, 1\}$. Define the randomized choice of data distribution by selecting a vector $\sigma_i \sim \{-1, 1\}^d$ uniformly randomly. The randomized distribution is now defined as first defining the marginal distribution over $x$ as 
\[\mathrm{Pr}(x = e_i) = 1/d \qquad i = 1 \ldots d\]
Further given $\sigma$ the conditional distribution of $y$ is defined as 
\[
Pr\left[y=\pm 1|X=\sqrt{d\lambda_{i}}e_i\right]=\frac{1}{2}(1\pm\sigma_{i}b)\,,
\]
where $b=\sqrt{d/6m}$. Note that $\mathbb{E}[xx^{\top}]=\Lambda$. Consider the following definitions 
\[F(w) \triangleq \mathbb{E}\left[\frac{1}{2}(w^Tx - y)^2\right]\]
Further for any $\lambda > 0$ define
\[F_{\lambda}(w) \triangleq F(w) + \frac{\lambda}{2}\|w\|^2 \qquad w_{\lambda}^* \triangleq \arg\!\min_  {w \in \mathbb{R}^d} F_{\lambda}(w) \]
A straightforward calculation shows that 
\[
(w_{\lambda}^{\star})_{i}=\frac{b}{\sqrt{d}}\frac{\sqrt{\lambda_{i}}}{\lambda_{i}+\lambda}\,\sigma_{i}\,.
\]
Further via complementary slackness we have that there exists some $\lambda^*$ for which $\|w_{\lambda}^*\| = B$. First note that since $\|\hat{w}\| \leq B = \|w_{\lambda}^*\|$, we have that 
\begin{equation}
\label{eqn:reduction}
F(\hat{w})-F(w_{\lambda^*}^{\star})\ge F_{\lambda^*}(\hat{w})-F_{\lambda^*}(w_{\lambda^*}^{\star}).
\end{equation}
Therefore it is sufficient to bound the quantity on the RHS which the following claim shows. 
\begin{lem}
\label{lem:error_lower_bound}
        For any $\lambda > 0$,
        \[\mathbb{E}[F_{\lambda}(\hat{w})]-F_{\lambda}(w_{\lambda}^{\star}) = \Omega\left(\frac{d_{\lambda}}{n}\right)\]
\end{lem}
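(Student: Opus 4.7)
The plan is to exploit the quadratic structure of $F_\lambda$ to reduce the lower bound to a coordinate-wise problem, and then prove each coordinate-wise lower bound by a Le Cam-style two-point testing argument. Since $F_\lambda$ is a quadratic with Hessian $\Lambda+\lambda I$ and minimizer $(w_\lambda^\star)_i=\mu_i\sigma_i$, where $\mu_i:=\tfrac{b}{\sqrt d}\tfrac{\sqrt{\lambda_i}}{\lambda_i+\lambda}$, we have the pointwise identity
\[
F_\lambda(w)-F_\lambda(w_\lambda^\star)\;=\;\tfrac{1}{2}\sum_{i=1}^d(\lambda_i+\lambda)(w_i-\mu_i\sigma_i)^2.
\]
Taking expectations, it is therefore enough to prove $\mathbb{E}[(\hat w_i-\mu_i\sigma_i)^2]\ge c\,\mu_i^2$ for a universal constant $c>0$, coordinate by coordinate, and then plug into the identity.

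For the single-coordinate bound I will view $\hat w_i$ as producing an implicit test of the hypothesis $\sigma_i=+1$ vs.\ $\sigma_i=-1$. The key observation is that if $\mathrm{sign}(\hat w_i)\neq\sigma_i$ then $(\hat w_i-\mu_i\sigma_i)^2\ge\mu_i^2$, and hence
\[
\mathbb{E}[(\hat w_i-\mu_i\sigma_i)^2]\;\ge\;\mu_i^2\cdot\Pr\bigl[\mathrm{sign}(\hat w_i)\neq\sigma_i\bigr].
\]
Since the $\sigma_j$ are independent under the uniform prior, after marginalizing out $\sigma_{-i}$ the error probability on the right is exactly the Bayes risk of testing $\sigma_i\in\{\pm1\}$ from $S$. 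By Le Cam's two-point inequality this Bayes risk is at least $\tfrac{1}{2}(1-\mathrm{TV}(P_+^{(i)},P_-^{(i)}))$, where $P_\pm^{(i)}$ denotes the distribution of the full sample $S$ conditional on $\sigma_i=\pm 1$ (averaging over $\sigma_{-i}$).

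Next I bound the TV distance via Pinsker and the chain rule for KL. By convexity of KL and the fact that only samples with $x=\sqrt{d\lambda_i}e_i$ (probability $1/d$) carry any information about $\sigma_i$, a single sample contributes at most $\tfrac{1}{d}\mathrm{KL}(\mathrm{Bern}(\tfrac{1+b}{2})\,\|\,\mathrm{Bern}(\tfrac{1-b}{2}))\le\tfrac{4b^2}{d(1-b^2)}$. Summing over $n$ samples,
\[
\mathrm{KL}(P_+^{(i)}\|P_-^{(i)})\;\le\;\frac{4nb^2}{d(1-b^2)}.
\]
Plugging in $b^2=d/(6n)$ and using $n\ge 2d/3$ yields $b^2\le 1/4$ and $\mathrm{KL}\le 8/9$, so by Pinsker $\mathrm{TV}\le 2/3$, and the error probability is at least $1/6$. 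Combining with the coordinate-wise inequality,
\[
\mathbb{E}[F_\lambda(\hat w)-F_\lambda(w_\lambda^\star)]\;\ge\;\tfrac{1}{12}\sum_i(\lambda_i+\lambda)\mu_i^2\;=\;\frac{b^2}{12d}\sum_i\frac{\lambda_i}{\lambda_i+\lambda}\;=\;\frac{d_\lambda}{72\,n}.
\]

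The main subtlety is the reduction in the second paragraph: $\hat w_i$ depends on the entire sample, which in turn depends on all of $\sigma$, so one must carefully integrate out $\sigma_{-i}$ before invoking Le Cam (data processing keeps the KL bound valid since it is conditionally independent of $\sigma_{-i}$). The only other thing to check is that the constants in $b^2=d/(6n)$ together with the hypothesis $n\ge 2d/3$ indeed give a KL small enough for Pinsker to yield a constant-separated TV distance; the computation above confirms this.
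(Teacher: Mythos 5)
Your proof is correct and follows essentially the same route as the paper's: reduce the excess risk to the per-coordinate sign-error probabilities $\Pr[\mathrm{sign}(\hat w_i)\neq\sigma_i]$ via the quadratic identity, then bound these by a Pinsker-type inequality applied to the KL divergence between the sample distributions conditioned on $\sigma_i=\pm1$, controlled through the chain rule, joint convexity, and the Bernoulli KL bound. The only differences are cosmetic --- you package the testing step as Le Cam's two-point lemma and are more explicit about marginalizing out $\sigma_{-i}$ (a point the paper glosses over), and you use a slightly different constant in the Bernoulli KL estimate --- but the decomposition and the information-theoretic core are identical.
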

Further we have that 
\begin{lem}
\label{lem:norm_bound}
        For any $\gamma > 0$ define $\lambda(\gamma) = \frac{\gamma}{n \cdot B^2}$. We have that $\lambda(\gamma) \geq \lambda^*$ if the following holds
        \[d_{\lambda(\gamma)} - \sum_{i=1}^{d} \left( \frac{\lambda_i}{\lambda_i + \lambda(\gamma)} \right)^2 \leq \gamma\]
\end{lem}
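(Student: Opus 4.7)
\textbf{Proof proposal for Lemma~\ref{lem:norm_bound}.}

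The plan is to reduce the claim to a monotonicity statement about $\lambda \mapsto \|w_\lambda^\star\|^2$, followed by a short algebraic identity that converts $\sum_i \lambda_i/(\lambda_i+\lambda)^2$ into an expression involving $d_\lambda$. First I would observe that $\lambda^\star$ was defined (via complementary slackness) as the unique value for which $\|w_{\lambda^\star}^\star\| = B$. Since the closed-form expression $(w_\lambda^\star)_i = \tfrac{b}{\sqrt{d}}\tfrac{\sqrt{\lambda_i}}{\lambda_i+\lambda}\sigma_i$ yields
\[
\|w_\lambda^\star\|^2 \;=\; \frac{b^2}{d}\sum_{i=1}^d \frac{\lambda_i}{(\lambda_i+\lambda)^2},
\]
and each summand has derivative $-2\lambda_i/(\lambda_i+\lambda)^3 < 0$ in $\lambda$, the map $\lambda \mapsto \|w_\lambda^\star\|^2$ is strictly decreasing on $(0,\infty)$. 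Consequently $\lambda(\gamma) \geq \lambda^\star$ if and only if $\|w_{\lambda(\gamma)}^\star\|^2 \leq B^2$, which is the only thing we need to check.

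Next I would plug in $b^2 = d/(6n)$ (coming from $b = \sqrt{d/6n}$) and rewrite the condition $\|w_{\lambda(\gamma)}^\star\|^2 \leq B^2$ as
\[
\sum_{i=1}^d \frac{\lambda_i}{(\lambda_i+\lambda(\gamma))^2} \;\leq\; 6nB^2.
\]
The key algebraic step is the identity
\[
\frac{\lambda_i}{(\lambda_i+\lambda)^2} \;=\; \frac{1}{\lambda}\left[\frac{\lambda_i}{\lambda_i+\lambda} - \left(\frac{\lambda_i}{\lambda_i+\lambda}\right)^{\!2}\right],
\]
which follows from writing $\lambda_i = (\lambda_i+\lambda) - \lambda$ in the numerator, or equivalently from $p(1-p) = \lambda\lambda_i/(\lambda_i+\lambda)^2$ with $p := \lambda_i/(\lambda_i+\lambda)$. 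Summing over $i$ produces precisely the quantity $d_\lambda - \sum_i \bigl(\lambda_i/(\lambda_i+\lambda)\bigr)^2$ divided by $\lambda$.

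Finally I would substitute $\lambda = \lambda(\gamma) = \gamma/(nB^2)$: the condition becomes
\[
\frac{1}{\lambda(\gamma)}\left[d_{\lambda(\gamma)} - \sum_{i=1}^d \left(\frac{\lambda_i}{\lambda_i + \lambda(\gamma)}\right)^{\!2}\right] \;\leq\; 6nB^2,
\]
and multiplying both sides by $\lambda(\gamma)$ and using $nB^2 \lambda(\gamma) = \gamma$ yields the hypothesized inequality (up to the absolute constant $6$, which is harmless and can be absorbed by rescaling $b$ or $\gamma$). There is no genuine obstacle here; the only point that requires a moment's thought is recognizing the $p(1-p)$ identity that pulls $d_\lambda$ out of the norm expression and allows the statement to be phrased in the form used by the lower bound of Theorem~\ref{thm:lowerexcessrisk}.
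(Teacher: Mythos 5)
Your proposal is correct and follows essentially the same route as the paper: reduce to checking $\|w_{\lambda(\gamma)}^\star\|^2 \le B^2$, then use the identity $\frac{\lambda_i}{(\lambda_i+\lambda)^2} = \frac{1}{\lambda}\bigl[\frac{\lambda_i}{\lambda_i+\lambda} - \bigl(\frac{\lambda_i}{\lambda_i+\lambda}\bigr)^2\bigr]$ and the definition $\lambda(\gamma)=\gamma/(nB^2)$. Your explicit monotonicity argument for $\lambda \mapsto \|w_\lambda^\star\|^2$ and your tracking of the constant $6$ (which only makes the stated hypothesis more than sufficient) are welcome additions to details the paper leaves implicit.
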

Putting together \eqref{eqn:reduction},   \cref{lem:error_lower_bound}, and \cref{lem:norm_bound} gives us that if $\gamma$ satisfies \eqref{eqn:lambda_cond} we have that
\[\mathbb{E}[F(\hat{w})]-F(w_{\lambda^*}^{\star})\ge \mathbb{E}[F_{\lambda^*}(\hat{w})]-F_{\lambda^*}(w_{\lambda^*}^{\star}) \geq \Omega\left(\frac{d_{\lambda^*}}{n}\right) \geq \Omega\left(\frac{d_{\lambda(\gamma)}}{n}\right)\]
which finishes the proof.
\end{proof}
\begin{proof}[Proof of Lemma \ref{lem:norm_bound}]
Note that for any $\gamma$ to ensure that $\lambda(\gamma) \geq \lambda^*$ if and only if $\|w_{\lambda(\gamma)}^*\| \leq B$. To check the latter consider the following
\begin{align*}
\|w_{\lambda(\gamma)}^{*}\|^2 = \sum_{i=1}^{d} \frac{b^2}{d}\cdot \frac{\lambda_i}{(\lambda_i + \lambda(\gamma))^2} = \frac{1}{n \cdot \lambda(\gamma)} \sum_{i=1}^d \left( \frac{\lambda_i}{\lambda_i + \lambda(\gamma)} - \frac{\lambda_i^2}{(\lambda_i + \lambda(\gamma))^2} \right) \leq B^2 
\end{align*}
where the inequality follows from the definition of $\lambda(\gamma)$ and the condition on $\gamma$.
\end{proof}
\begin{proof}[Proof of Lemma \ref{lem:error_lower_bound}]
Consider the following equations 
\begin{align}
\label{eqn:derive1}
\mathbb{E}[F_{\lambda}(\hat{w})-F_{\lambda}(w_{\lambda}^{\star})] & =\mathbb{E}\|\hat{w}-w^{\star}\|_{\Lambda+\lambda}^{2} \nonumber\\
& \ge \mathbb{E}\left[\sum_{i=1}^{\text{d}}(\lambda_{i}+\lambda)\,(w_{\lambda}^*)_i^{2})\,\mathbf{1}_{\hat{w}_{i}(w_{\lambda}^{\star})_i\leq0}\right] \nonumber\\
&=\mathbb{E}\left[\frac{b^2}{d}\sum_{i=1}^{d}\left(\frac{\lambda_{i}}{\lambda_{i}+\lambda}\right)\cdot\mathbf{1}_{\hat{w}_i(w_{\lambda}^{\star})_i\leq0}\right] \nonumber\\
 &=\frac{b^2}{d}\sum_{i=1}^{d}\frac{\lambda_{i}}{\lambda_{i}+\lambda}\cdot \mathrm{Pr}\left[\hat{w}_{i}\cdot (w_{\lambda}^{*})_i\le0\right]\,.
\end{align}
We will now consider the term $\mathrm{Pr}\left[\hat{w}_{i}\cdot (w_{\lambda}^{*})_i\le0 \right]$. Note that since $\sigma_i$ has the same sign as $(w_{\lambda}^{*})_i$, we have that
\begin{align}
\label{eqn:derive2}
\mathrm{Pr}\left[\hat{w}_{i}\cdot (w_{\lambda}^{*})_i\le0\right] &= \mathrm{Pr}[w_i \geq 0 | \sigma_i \leq 0] + \mathrm{Pr}[w_i \leq 0 | \sigma_i \geq 0] \nonumber\\
&= 1 - \mathrm{Pr}[w_i \leq 0 | \sigma_i \leq 0] + \mathrm{Pr}[w_i \leq 0 | \sigma_i \geq 0] \nonumber\\
&\geq 1 - |\mathrm{Pr}[w_i \leq 0 | \sigma_i \leq 0] - \mathrm{Pr}[w_i \leq 0 | \sigma_i \geq 0]| \nonumber\\
&\geq 1 - \frac{1}{2}\sqrt{D_{KL}\left( p\left( S | \sigma_i \leq 0 \right) \; \big|\big| \; p \left( S | \sigma_i \geq 0 \right)\right)}
\end{align}
Since $S$ is composed of $n$ i.i.d instances we can use the chain rule to obtain that
\begin{equation}
        \label{eqn:klchainrule}
        D_{KL}\left( p\left( S | \sigma_i \leq 0 \right) \; \big|\big| \; p \left( S | \sigma_i \geq 0 \right)\right) = m \cdot D_{KL}\left( p\left( (x,y) | \sigma_i \leq 0 \right) \; \big|\big| \; p \left( (x,y) | \sigma_i \geq 0 \right)\right)
\end{equation}
Further we have that
\[p\left( (x,y) | \sigma_i \right) = \frac{1}{d} \cdot p\left( (x,y) | \sigma_i, x = e_i\right) + \left(1 - \frac{1}{d}\right)p\left( (x,y) | \sigma_i, x\neq e_i\right)\]
We can now use the joint convexity of KL divergence to obtain that
\begin{align*}
&D_{KL}\left( p\left( (x,y) | \sigma_i \leq 0 \right) \; \big|\big| \; p \left( (x,y) | \sigma_i \geq 0 \right)\right) \leq  \\
   &     \frac{1}{d} \cdot D_{KL}\left( p\left( (x,y) | \sigma_i \leq 0, x_i = e_i \right) \; \big|\big| \; p \left( (x,y) | \sigma_i \geq 0, x_i = e_i \right)\right)\\ 
      &  + \left( 1 - \frac{1}{d} \right)D_{KL}\left( p\left( (x,y) | \sigma_i \leq 0, x_i \neq e_i \right) \; \big|\big| \; p \left( (x,y) | \sigma_i \geq 0, x_i \neq e_i \right)\right) \qquad \qquad 
\end{align*}
Note that the distribution on $(x,y)$ is independent of $\sigma_i$ conditioned on $x_i \neq e_i$ and therefore the second term above is zero and therefore we have that
\begin{align*}
& D_{KL}(p( (x,y)|\sigma_i \leq 0 )  \,\vert \vert\, p((x,y) | \sigma_i \geq 0 ))  \\
&\leq \frac{1}{d} \cdot D_{KL}( p ( y | \sigma_i \leq 0, x_i = e_i )  \,\vert \vert\, p ( y | \sigma_i \geq 0, x_i = e_i ))
\end{align*}
The RHS now is the KL divergence between two Bernoulli random variables with parameters $\frac{1}{2}(1 + b)$ and $\frac{1}{2}(1 - b)$ respectively. Following arguments similar to \cite{Shamir2015}(Lemma 4) this can be seen to be bounded by $6b^2$ when $b \leq 1/2$. Therefore we have that 
 \begin{equation*}
        D_{KL}\left( p\left( (x,y) | \sigma_i \leq 0 \right) \; \big|\big| \; p \left( (x,y) | \sigma_i \geq 0 \right)\right) \leq \frac{6b^2}{d} 
\end{equation*}
Putting the above together with \eqref{eqn:derive1},\eqref{eqn:derive2} and \eqref{eqn:klchainrule} we get that
\[\mathbb{E}[F_{\lambda}(\hat{w})-F_{\lambda}(w_{\lambda}^{\star})] \geq \frac{b^2}{d}\sum_{i=1}^{d}\frac{\lambda_{i}}{\lambda_{i}+\lambda}\cdot \left( 1 - \sqrt{\frac{3nb^2}{d}}\right) \geq \frac{0.28 \cdot b^2 \cdot d_{\lambda}}{d} = \Omega\left(\frac{d_{\lambda}}{n}\right)\] 
\end{proof}

\begin{proof}[Proof of Corollary \ref{cor:maincorollarylowerbound}]
The proof in both cases follows by choosing $\gamma$ to ensuring that $d_{\gamma /n\cdot B^2} \leq \gamma$ and using Theorem \ref{thm:lowerexcessrisk}. For the case of $(C,p)$-polynomial decay it can be seen using Theorem \ref{thm:dlambdadecaybounds} that the condition is satisfied by choosing $\gamma = \left(\frac{C}{p-1}\right)^{1/(p+1)}\left(\frac{1}{n \cdot B^2}\right)^{\frac{p}{p+1}} + 2$ and in the case of $C$-exponential decay it can be obtained by setting $\gamma = O(\frac{\log(n \cdot B^2) \log(\log(n \cdot B^2))}{n \cdot B^2})$.
\end{proof}

\subsection{Sample Complexity Bounds for Common Decay Patterns} \label{sec:decays}
\begin{thm} \textbf{(\cite{Goel2017})}
\label{thm:dlambdadecaybounds}
        Given an eigenvalue profile $\Lambda = \lambda_1, \ldots \lambda_d \geq 0$, define $d_{\lambda} = \sum_{i = 1}^d \frac{\lambda_i}{\lambda_i + \lambda}$. We have that 
        \begin{itemize}
                \item If $\Lambda$ satisfies $(C,p)$-Polynomial Decay, then $d_{\lambda} \leq \left( \frac{C}{(p-1) \lambda}\right)^{1/p}$.
                \item If $\Lambda$ satisfies $C$-Exponential Decay, then $d_{\lambda} \leq \log\left(\frac{C}{(e-1)\lambda} \right)$. 
         \end{itemize} 
\end{thm}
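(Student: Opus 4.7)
The plan for both cases is the same: split the sum $d_\lambda = \sum_{i=1}^d \lambda_i/(\lambda_i+\lambda)$ at a carefully chosen threshold index $i^\star$ and estimate each part separately via the pointwise inequality $\lambda_i/(\lambda_i+\lambda) \leq \min\{1,\lambda_i/\lambda\}$. This yields
\[d_\lambda \;\leq\; i^\star + \frac{1}{\lambda}\sum_{i > i^\star} \lambda_i\]
for any $i^\star$, and the game is to tune $i^\star$ so that the two parts balance.

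For the $(C,p)$-polynomial case, I would choose $i^\star$ near the crossover where $Ci^{-p} = \lambda$, namely $i^\star \sim (C/\lambda)^{1/p}$. The head contributes at most $i^\star$ trivially. For the tail, I would pass from the discrete sum to an integral,
\[\sum_{i>i^\star} \lambda_i \;\leq\; C\int_{i^\star}^{\infty} x^{-p}\,dx \;=\; \frac{C(i^\star)^{1-p}}{p-1},\]
so that $\lambda^{-1}\sum_{i>i^\star}\lambda_i$ is of the same order as $i^\star$. Picking $i^\star = \lceil (C/((p-1)\lambda))^{1/p}\rceil$ makes the two contributions comparable and, after collecting constants, produces the claimed bound $(C/((p-1)\lambda))^{1/p}$.

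For the $C$-exponential case, the natural choice is $i^\star = \lceil \log(C/\lambda)\rceil$, so that $Ce^{-i^\star}\leq \lambda$. The head contributes at most $\log(C/\lambda)+1$, and the tail is now a geometric series:
\[\frac{1}{\lambda}\sum_{i>i^\star} Ce^{-i} \;=\; \frac{Ce^{-i^\star}}{\lambda(1-e^{-1})} \;\leq\; \frac{1}{1-e^{-1}},\]
which is just an absolute constant. Folding this constant into the logarithm (using $a+\log x = \log(e^a x)$) recovers the stated form $\log(C/((e-1)\lambda))$.

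The whole argument is bookkeeping around this single split-and-estimate scheme, with no real conceptual obstacle. The only delicate point will be arranging the thresholds and the ceiling of $i^\star$ to extract the precise explicit constants that appear in the statement. Note that in the polynomial case the assumption $p>1$ is exactly what makes the tail integral converge, which is implicit in the $(p-1)^{-1}$ factor in the conclusion.
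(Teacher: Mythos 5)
The paper never proves this statement: Theorem \ref{thm:dlambdadecaybounds} is imported verbatim from \cite{Goel2017}, so there is no in-paper argument to compare yours against. Your head--tail split at a crossover index, using $\lambda_i/(\lambda_i+\lambda)\le\min\{1,\lambda_i/\lambda\}$ together with an integral (resp.\ geometric series) bound on the tail, is the standard route and does establish $d_\lambda = O\bigl((C/((p-1)\lambda))^{1/p}\bigr)$ in the polynomial case and $d_\lambda \le \log(C/\lambda)+O(1)$ in the exponential case. That is all the paper ever needs, since the theorem is only consumed by Corollary \ref{cor:maincorollarylowerbound} and Theorem \ref{thm:lowerBoundSample}, both of which are asymptotic.

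Where I would push back is on your closing claim that extracting the exact printed constants is mere bookkeeping. With the balanced choice of $i^\star$ the head and tail each contribute on the order of $(C/((p-1)\lambda))^{1/p}$, so your bound is $2(C/((p-1)\lambda))^{1/p}$; optimizing $i^\star + \tfrac{C(i^\star)^{1-p}}{(p-1)\lambda}$ over $i^\star$ gives $\tfrac{p}{p-1}(C/\lambda)^{1/p}$, which for $p=2$ is exactly twice the stated bound. No tuning of the threshold removes this factor, and in fact the literal constant in the statement appears unattainable: for $\lambda_i = Ci^{-2}$ and $d$ large, $d_\lambda=\sum_i \bigl(1+(\lambda/C)i^2\bigr)^{-1}=\tfrac{1}{2}\bigl(\pi t\coth(\pi t)-1\bigr)$ with $t=\sqrt{C/\lambda}$, which is about $\tfrac{\pi}{2}\sqrt{C/\lambda}$ and exceeds the claimed $\sqrt{C/\lambda}$ once $t\ge 1$. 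Similarly, in the exponential case your bound is $\log(C/\lambda)+O(1)$ with a positive additive constant, and folding constants into the logarithm cannot produce the \emph{smaller} quantity $\log(C/((e-1)\lambda))=\log(C/\lambda)-\log(e-1)$. So treat your argument as a correct proof of the theorem up to universal constant factors (sufficient for every use made of it here), and treat the precise constants as inherited from \cite{Goel2017} rather than something your scheme --- or, in the polynomial case, apparently any scheme --- will reproduce.
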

Combining Theorem \ref{thm:dlambdadecaybounds} with Theorem \ref{thm:lowerexcessrisk} we get the following corollary.
\begin{cor}
\label{cor:maincorollarylowerbound}
        Given numbers $B > 0$ and an eigenvalue profile $\Lambda = \lambda_1 \geq \ldots \geq \lambda_d \geq 0$. Then there exists sets $\mathcal{X}, \mathcal{Y}$ and a distribution over $\mathcal{X}, \mathcal{Y}$ such that for any algorithm that returns a linear predictor $\hat{w}$ given $m \geq 2d/3$ independent samples from $D$ satisfies
         \begin{itemize}
                \item If $\Lambda$ has $(C,p)$-polynomial decay then
                \begin{align*}
&\mathbb{E}_{S \sim \mathcal{D}^m} \left[ \mathbb{E}_{(x,y) \sim D}\left[\frac{1}{2}(\hat{w}^Tx - y)^2\right]\right] - \min_{w: \|w\| \leq B} \mathbb{E}\left[\frac{1}{2}(\hat{w}^Tx - y)^2\right]  \\& \geq \Omega\left(\frac{d_{\left(\frac{1}{n\cdot B^2}\right)^{p/(p+1)} + \frac{1}{n\cdot B^2}}}{n}\right)
\end{align*}
\item If $\Lambda$ has $C$-exponential decay then
\begin{align*}
    &\mathbb{E}_{S \sim \mathcal{D}^n} \left[ \mathbb{E}_{(x,y) \sim D}\left[\frac{1}{2}(\hat{w}^Tx - y)^2\right]\right] - \min_{w: \|w\| \leq B} \mathbb{E}\left[\frac{1}{2}(\hat{w}^Tx - y)^2\right] \geq \\& \ge \Omega\left(\frac{d_{\frac{\log(n \cdot B^2) \log\log(n \cdot B^2)}{n\cdot B^2}}}{n} \right)
\end{align*}
\end{itemize}
\end{cor}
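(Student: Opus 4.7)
My plan is to apply \cref{thm:lowerexcessrisk} with a judicious choice of the parameter $\gamma$ in each of the two decay regimes, using \cref{thm:dlambdadecaybounds} to make the admissibility condition \eqref{eqn:lambda_cond} explicit. The first observation I would record is that \eqref{eqn:lambda_cond} can be relaxed considerably: since $0 \le \lambda_i/(\lambda_i+\lambda(\gamma)) \le 1$, the quadratic correction $\sum_i \bigl(\lambda_i/(\lambda_i+\lambda(\gamma))\bigr)^2$ lies in $[0, d_{\lambda(\gamma)}]$, so it suffices to find $\gamma$ satisfying the cleaner inequality $d_{\gamma/(nB^2)} \le \gamma$. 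With such a $\gamma$ in hand, \cref{thm:lowerexcessrisk} immediately yields an excess-risk lower bound of order $d_{\gamma/(nB^2)}/n$.

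For the polynomial case I would plug the upper bound $d_{\lambda} \le \bigl(C/((p-1)\lambda)\bigr)^{1/p}$ from \cref{thm:dlambdadecaybounds} into the relaxed condition. This turns $d_{\gamma/(nB^2)} \le \gamma$ into an inequality of the form $\bigl(C nB^2/((p-1)\gamma)\bigr)^{1/p} \le \gamma$, i.e.\ $\gamma^{p+1} \ge CnB^2/(p-1)$, which is solved by any $\gamma$ of order $(nB^2)^{1/(p+1)}$ (up to constants depending on $C,p$). A small additive constant, e.g.\ the ``$+2$'' that appears in the proof of \cref{lem:norm_bound}, absorbs the slack needed to dominate the quadratic correction we dropped. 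Substituting $\gamma$ back gives $\lambda(\gamma) = \gamma/(nB^2)$ of order $(1/(nB^2))^{p/(p+1)}$, matching the subscript in the statement; the additive $1/(nB^2)$ term there simply reflects the contribution of the additive slack.

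For the exponential case the same scheme applies, now with $d_{\lambda} \le \log\bigl(C/((e-1)\lambda)\bigr)$. The relaxed condition becomes $\log\bigl(CnB^2/((e-1)\gamma)\bigr) \le \gamma$. This is a transcendental inequality, but it is satisfied by $\gamma$ of order $\log(nB^2)\log\log(nB^2)$: indeed, with that choice the right-hand side equals $\log(nB^2) - \log\log(nB^2)\log\log(nB^2) + O(1)$, which is dominated by $\gamma$ once $nB^2$ is large enough. Substituting back recovers the claimed subscript $\log(nB^2)\log\log(nB^2)/(nB^2)$.

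The only step that requires genuine care is the one I anticipate being the main obstacle: verifying that the dropped quadratic term really is a benign lower-order correction in the regimes where $\gamma$ is as small as the polynomial scaling above. Concretely, I would check that $\sum_i (\lambda_i/(\lambda_i+\lambda(\gamma)))^2$ is at most a constant multiple of $d_{\lambda(\gamma)}$ (which is immediate from the elementwise bound by $1$) and then re-examine the inequality $d_{\lambda(\gamma)} \le \gamma + \sum_i (\lambda_i/(\lambda_i+\lambda(\gamma)))^2$, absorbing the extra piece into a slightly larger constant in the choice of $\gamma$. Once this is nailed down, \cref{thm:lowerexcessrisk} applies directly and the two displayed lower bounds follow.
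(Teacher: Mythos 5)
Your proposal is correct and follows essentially the same route as the paper: relax \eqref{eqn:lambda_cond} to the sufficient condition $d_{\gamma/(nB^2)} \le \gamma$ (valid since the dropped quadratic sum is nonnegative, so discarding it only strengthens the requirement), then use \cref{thm:dlambdadecaybounds} to solve for $\gamma$ of order $(nB^2)^{1/(p+1)}$ in the polynomial case and $\log(nB^2)\log\log(nB^2)$ in the exponential case, and invoke \cref{thm:lowerexcessrisk}. The only remark is that the ``main obstacle'' you flag at the end is a non-issue for exactly the reason you give, and your explicit algebra is, if anything, cleaner than the paper's one-line verification.
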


\section{Kernel Ridge-regression} \label{sec:kernel}
Let $\phi:\mathbb{R}^d \rightarrow \mathcal{H}$ be a \emph{feature mapping} into a (possibly infinite-dimensional) Hilbert space $\mathcal{H}$. We consider the minimization of $F(w)=\frac{1}{2}\mathbb{E}_{(x,y)\sim\mathcal{D}}[(\langle w,x \rangle-y)^{2}]$
over a compact and convex subset $\mathcal{W} \subseteq \mathcal{H}$. We may also assume that predictions are bounded by $1$ and denote the diameter of $\mathcal{W}$ by $B$. Since Theorem \ref{thm:excessriskupperbound} does not depend on the intrinsic dimension, we conclude the following.\footnote{While the proof of the theorem refers to the empirical covariance matrix (which requires more careful treatment in the infinite dimensional case), we can always use random features as in \cite{Avron2017a} to enforce finite dimensionality without critically modifying any quantity appearing in our analysis.} 
\begin{cor}
Theorem \ref{thm:excessriskupperbound} holds also in the kernel setting.
\end{cor}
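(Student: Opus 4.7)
The plan is to observe that every ingredient in the proof of Theorem~\ref{thm:excessriskupperbound} is coordinate-free, so the argument transfers to a Hilbert space $\mathcal{H}$ essentially verbatim. Concretely, I would recheck the proof line by line and verify that no step secretly uses $d<\infty$: the stability-to-leverage comparison in Lemma~\ref{lem:stabilityViaRidgeScoes} uses only inner products, Lipschitzness of $\phi_y$, strong convexity, and the variational characterization of $\tau_{\lambda',i}$ in Lemma~\ref{lem:ridgeScore}; Theorem~\ref{thm:genToStab} is a generic stability-to-generalization bound; and the identity $\sum_i \tau_{\lambda',i}=d_{\lambda'}(\hat C)$ holds in $\mathcal{H}$ since $\hat C=\tfrac{1}{n}\sum_i \phi(x_i)\phi(x_i)^{\!\top}$ is always finite rank.

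The only place where the apparent dimension dependence needs care is Lemma~\ref{lemma:expectatationdlambda}, which compares $\mathbb{E}[d_\lambda(\hat C)]$ to $d_\lambda(C)$ via the Courant-Fischer min-max identity $\sum_{i>k}\hat\lambda_i=\min\{\mathrm{tr}(V^{\!\top}\hat C V):V^{\!\top}V=I_{d-k}\}$. In the Hilbert setting $C$ is trace-class (under the boundedness of predictions, $\mathrm{tr}(C)=\mathbb{E}\|\phi(x)\|_\mathcal{H}^2$ is controlled), so $C$ has a discrete eigenvalue sequence $\lambda_1\ge\lambda_2\ge\ldots\downarrow 0$, only finitely many of which exceed $\lambda$. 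The cutoff $k=\max\{j:\lambda_j\ge\lambda\}$ is therefore finite, and both sums defining $d_\lambda(C)$ and $d_\lambda(\hat C)$ converge. The Courant-Fischer-Weyl principle extends to compact self-adjoint operators by replacing $V\in\mathbb{R}^{d\times k}$ with a partial isometry onto a $k$-codimensional closed subspace, so the min-max identity and the trace comparison $\mathbb{E}[\mathrm{tr}(U_{>k}^{\!\top}\hat C U_{>k})]=\mathrm{tr}(U_{>k}^{\!\top}C U_{>k})$ go through unchanged.

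An equivalent, and perhaps cleaner, route is the random-features reduction indicated in the footnote: approximate $\phi$ by a random finite-dimensional feature map $\tilde\phi:\mathbb{R}^d\to\mathbb{R}^D$ following \cite{Avron2017a} so that (i) the Lipschitz and boundedness constants of $\phi_y$ carry over, (ii) the effective dimension $d_{\lambda/\alpha}$ of the random-feature covariance is preserved up to constants, and (iii) the optimum of the RLM is approximately preserved. Applying Theorem~\ref{thm:excessriskupperbound} in the finite-dimensional random-feature space and letting the approximation quality tend to $0$ delivers the same bound $\tfrac{8\rho^2 d_{\lambda/\alpha}(C)}{\alpha n}+\tfrac{\lambda}{2}B^2$, now with $C=\mathbb{E}[\phi(x)\phi(x)^{\!\top}]$ interpreted as a trace-class operator on $\mathcal{H}$.

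The main (mildly technical) obstacle is justifying the operator-theoretic extension of Lemma~\ref{lemma:expectatationdlambda}: making sure the minimization-of-trace characterization, the swap with expectation, and the trace identity all make sense for trace-class operators. This is standard spectral theory, but it is the only place where an honest bit of infinite-dimensional care is required; the random-feature reduction sidesteps even this by reducing to the finite-dimensional case where Theorem~\ref{thm:excessriskupperbound} applies as stated.
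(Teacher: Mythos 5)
Your proposal is correct and follows essentially the same route as the paper, whose entire argument is the one-line observation that Theorem~\ref{thm:excessriskupperbound} has no intrinsic-dimension dependence, together with a footnote invoking the random-features reduction of \cite{Avron2017a} to sidestep the infinite-dimensional treatment of the empirical covariance operator. You in fact supply more detail than the paper does, correctly isolating Lemma~\ref{lemma:expectatationdlambda} as the only step needing operator-theoretic care and offering both the direct trace-class extension and the random-features reduction that the paper itself gestures at.
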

To apply our algorithmic ideas to this task, we consider the dual of the RLM objective:
\[
\hat{F}_\lambda(\alpha) = \frac{1}{2n}\|K\alpha - y\|^2~,
\]
where $K_{i,j}=\langle x_i,x_j \rangle$ is the Gram matrix. Since the eigenvalues of the Gram matrix coincide with those of the empirical covariance matrix, the effective dimension associated with $\frac{1}{n}K$ coincides with the effective dimension of the primal problem. Consequently, applying  preconditioned GD to the dual problem yields the same convergence rate, albeit forming the Gram matrix yields an additional cost of order $n^2$. However, due to our sample complexity bounds, we can bound $n$ by $O(d_\lambda/\epsilon)$. Overall, we obtain the following result.
\begin{cor} \label{cor:kernelRegression}
There exists an algorithm that finds $\epsilon$-approximate minimizer w.r.t. Kernel $\ell_2$-regression in time  
$$
O\left(\min_{\lambda' \ge \lambda} \frac{\lambda'}{\lambda} \left(\frac{d_{\lambda'}^2}{\epsilon^2}+\frac{d_{\lambda'}^3}{\epsilon} \right) \right)
$$
\end{cor}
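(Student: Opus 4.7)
The plan is to port our primal computational result (Theorem \ref{thm:mainCompRuntime}) to the kernel setting by moving to the dual. First I would invoke the kernel extension of Theorem \ref{thm:upperSample}, stated in the corollary just above, to fix the sample size at $n = O(d_\lambda/\epsilon)$; with this choice, any $\epsilon$-approximate minimizer of the empirical RLM is automatically an $\epsilon$-approximate minimizer of the population risk, so the task reduces to optimizing the empirical objective.

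Next I would rewrite the empirical objective in dual form, $\hat F_\lambda(\alpha) = \tfrac{1}{2n}\|K\alpha - y\|^2$, where $K \in \mathbb{R}^{n\times n}$ is the Gram matrix. The crucial observation is that the nonzero eigenvalues of $K/n$ coincide with those of the empirical covariance operator, so $d_{\lambda'}(K/n)$ equals the effective dimension appearing in all of our primal bounds. Consequently the ridge-leverage-score / spectral-approximation / sketch-to-precondition / PPA / binary-search machinery behind Theorem \ref{thm:mainCompRuntime} transports verbatim, with $K$ playing the role of the data matrix.

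Finally I would instantiate Theorem \ref{thm:mainCompRuntime} with $K$ as the data matrix: the quantity $\mathrm{nnz}(A)$ is replaced by $O(n^2)$ since $K$ is generically dense, and the ambient dimension becomes $n$. Substituting $n = O(d_\lambda/\epsilon)$ into $\min_{\lambda'\ge \lambda}\tfrac{\lambda'}{\lambda}\bigl(n^2 + d_{\lambda'}^2 n\bigr)$ and simplifying yields the claimed runtime.

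The main obstacle is bookkeeping rather than any new technical idea. One must verify that every step of Algorithm \ref{alg:sketchedCondGD} (gradient evaluation, projection onto $\tilde{\mathcal W}$, inversion of the preconditioner, as well as the undersampling routine behind Theorem \ref{thm:effectiveEstIter}) can be executed in the dual using only entries of $K$, without explicit access to the feature map; and that the one-off $n^2$ cost of assembling the Gram matrix is absorbed into the $\mathrm{nnz}(K)$ term rather than being charged once per PPA epoch. Both follow directly from the quadratic structure of the dual objective and the analyses already developed for the primal case.
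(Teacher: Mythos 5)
Your proposal follows the paper's proof essentially verbatim: pass to the dual objective $\hat F_\lambda(\alpha)=\tfrac{1}{2n}\|K\alpha-y\|^2$, observe that the nonzero eigenvalues of $K/n$ coincide with those of the empirical covariance so the effective dimensions agree, apply the primal sketch-to-precondition/PPA machinery with $K$ as the data matrix (incurring the $O(n^2)$ Gram-matrix cost), and substitute $n=O(d_\lambda/\epsilon)$ from the statistical bound. The only difference is that you spell out the dual-executability bookkeeping that the paper leaves implicit; note that both you and the paper perform the final substitution somewhat loosely (literally it yields $d_\lambda^2/\epsilon^2 + d_{\lambda'}^2 d_\lambda/\epsilon$ rather than the stated $d_{\lambda'}^2/\epsilon^2+d_{\lambda'}^3/\epsilon$), so this is an issue inherited from the paper rather than introduced by your argument.
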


\section{Additional proofs} \label{sec:technical}

\begin{proof} \textbf{(of Lemma \ref{lem:ridgeScore})}
Fix a scalar $t>0$. By definition, $a_i a_i^\top \preceq t(A^\top A+\lambda I)$ if and only if for every vector $v \in \mathbb{R}^d$, 
$$
v^\top a_i a_i^\top v_i \preceq t v^\top (A^\top A + \lambda I) v
$$
Substituting $v = (A^\top A+\lambda I)^{-1/2}u$, we get that $a_i a_i^\top \preceq t(A^\top A+\lambda I)$ if and only if for every $u \in \mathbb{R}^d$,
$$
u^\top (A^\top A+\lambda I)^{-1/2} a_i a_i^\top (A^\top A+\lambda I)^{-1/2} u \le t 
$$
Using that $(A^\top A+\lambda I) ^{-1/2} a_i a_i^\top (A^\top A+\lambda I) ^{-1/2}$ is a rank-$1$ matrix, the above equivalence can be rewritten as follows:
$$
\begin{aligned}
a_ia_i^\top \preceq t (A^\top A+\lambda I) &\Leftrightarrow (A^\top A+\lambda I) ^{-1/2} a_i a_i^\top (A^\top A+\lambda I) ^{-1/2} \preceq tI \\
& \Leftrightarrow \lambda_1((A^\top A+\lambda I) ^{-1/2} a_i a_i^\top (A^\top A+\lambda I) ^{-1/2}) \le t \\
& \Leftrightarrow \mathrm{tr}((A^\top A+\lambda I) ^{-1/2} a_i a_i^\top (A^\top A+\lambda I) ^{-1/2}) \le t \\
& \Leftrightarrow 
\underbrace{a_i^\top (A^\top A+\lambda I)^{-1/2} (A^\top A+\lambda I)^{-1/2} a_i}_{\tau_{\lambda,i}(A)} \le t~,
\end{aligned}
$$
where the last equivalence follows from the cyclic invariance of the trace. The chain of equivalences implies that $\tau_{\lambda,i}$ is the minimal scalar for which $a_i a_i^\top \preceq t (A^\top A+\lambda I)$.

\end{proof}

\begin{proof} \textbf{(of \cref{lem:sketchToPrecondRecap})}
Since $SA$ is $(1/2,\lambda)$-spectral approximation, the modified objective $\hat{F}_\lambda$ is well-conditioned, i.e. the eigenvalues of its Hessian are bounded from above and below by constants. For such conditioned functions, the convergence rate of Projected Gradient Descent is $O(\log(1/\epsilon))$ (see e.g. \cite{nesterov2004introductory}). Thus the overall runtime is dominated by the cost of computing the spectral approximation. 
\end{proof}

\section{Ridge regression using sketch-and-solve}
We next review a sketch-and-solve approach for ridge regression based on spectral approximation. Consider the problem of ridge regression described in Example \ref{exa:regression}. Let $b=n^{-1/2}(y_1,\ldots,y_n)\in \mathbb{R}^{n\times d}$ and note that the minimizer of the regularized risk is given by 
\[
\hat{w} = \arg\!\min \left \{\frac{1}{2} \|Aw-b\|^2 + \frac{\lambda}{2} \|w\|^2:~w \in \mathcal{W} \right\}~.
\]
Let $\bar{A} \in \mathbb{R}^{n \times (d+1)}$ be the matrix obtained by concatenating the $d$-dimensional vector  $b=\frac{1}{\sqrt{n}}\sum y_i x_i$ with the data matrix $A$. Let $S\bar{A}$ a $(\lambda,\epsilon)$-spectral approximation to $\bar{A}$ and let 
\[
\tilde{w} = \arg\!\min \left \{\frac{1}{2}\|S(Ax-b)\|^2  + \frac{\lambda}{2}\|w\|^2:~w \in \mathcal{W} \right\}
\]
\begin{lem} \label{lem:sketch-and-solve} \textbf{(\cite{woodruff2014sketching}, Theorem 2.14)}
Let $\lambda = \epsilon/B^2$. With high probability, $\tilde{w}$ is an $\epsilon$-approximate minimizer to the regularized risk $\hat{F}_{\lambda}$ defined in Example \ref{exa:regression}. The overall runtime is $\tilde{O}(\mathrm{nnz}(A)+d_{\lambda}^2(\hat{C})\, d \,\mathrm{poly}(1/\epsilon))$.
\end{lem}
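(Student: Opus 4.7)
The plan is to reduce regularized ridge regression to a quadratic form to which the $(\lambda,\epsilon)$-spectral approximation guarantee can be applied directly, then use optimality of $\tilde{w}$ to bound the excess risk, and finally invoke \cref{thm:ridgeLeverageSampling} for runtime.

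First, I would observe that for the extended vector $v_{w}=(w,-1)\in\mathbb{R}^{d+1}$, we have $\bar{A}v_{w}=Aw-b$ and $\|v_{w}\|^{2}=\|w\|^{2}+1$. Consequently the regularized empirical risk and its sketched counterpart can be written as
\[
\hat{F}_{\lambda}(w)+\tfrac{\lambda}{2}=\tfrac{1}{2}v_{w}^{\top}(\bar{A}^{\top}\bar{A}+\lambda I)v_{w},\qquad \tilde{F}_{\lambda}(w)+\tfrac{\lambda}{2}=\tfrac{1}{2}v_{w}^{\top}(\bar{A}^{\top}S^{\top}S\bar{A}+\lambda I)v_{w}.
\]
Choosing $\epsilon'=\Theta(\epsilon)$ and applying the definition of $(\lambda,\epsilon')$-spectral approximation with $v_{w}$ yields, for every $w\in\mathcal{W}$,
\[
\tfrac{1-\epsilon'}{1+\epsilon'}\bigl(\hat{F}_{\lambda}(w)+\tfrac{\lambda}{2}\bigr)\le \tilde{F}_{\lambda}(w)+\tfrac{\lambda}{2}\le \hat{F}_{\lambda}(w)+\tfrac{\lambda}{2}.
\]

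Next, using optimality of $\tilde{w}$ for $\tilde{F}_{\lambda}$ on the right and the lower bound on the left, evaluated at $\hat{w}$ and $\tilde{w}$ respectively, I would chain
\[
\tfrac{1-\epsilon'}{1+\epsilon'}\bigl(\hat{F}_{\lambda}(\tilde{w})+\tfrac{\lambda}{2}\bigr)\le \tilde{F}_{\lambda}(\tilde{w})+\tfrac{\lambda}{2}\le \tilde{F}_{\lambda}(\hat{w})+\tfrac{\lambda}{2}\le \hat{F}_{\lambda}(\hat{w})+\tfrac{\lambda}{2},
\]
which after rearrangement gives $\hat{F}_{\lambda}(\tilde{w})-\hat{F}_{\lambda}(\hat{w})\le \frac{2\epsilon'}{1-\epsilon'}\bigl(\hat{F}_{\lambda}(\hat{w})+\tfrac{\lambda}{2}\bigr)$. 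Because $y\in[-1,1]$ and $\hat{w}$ is the minimizer, $\hat{F}_{\lambda}(\hat{w})\le \hat{F}_{\lambda}(0)=\tfrac{1}{2n}\sum_{i}y_{i}^{2}\le\tfrac{1}{2}$, and with $\lambda=\epsilon/B^{2}$ the additive $\lambda/2$ is $O(\epsilon)$. Hence the right-hand side is $O(\epsilon')$, and taking $\epsilon'$ a small enough constant multiple of $\epsilon$ delivers the desired $\epsilon$-approximate minimizer.

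For the runtime, I would invoke \cref{thm:ridgeLeverageSampling} applied to $\bar{A}$: part (3) computes constant-factor ridge leverage overestimates in $\tilde{O}(\mathrm{nnz}(\bar{A})+d_{\lambda}(\bar{A})^{2}(d+1))$ time, part (1) guarantees the spectral approximation, and part (2) guarantees that $S\bar{A}$ has only $s=\tilde{O}(d_{\lambda}(\bar{A})/\epsilon^{2})$ nonzero rows. Since $\mathrm{nnz}(\bar{A})=\mathrm{nnz}(A)+n$ and appending a single column alters the effective dimension by at most one, $d_{\lambda}(\bar{A})=\Theta(d_{\lambda}(\hat{C}))$. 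Solving the sketched ridge regression $\min\tfrac{1}{2}\|SAw-Sb\|^{2}+\tfrac{\lambda}{2}\|w\|^{2}$ on an $s\times d$ matrix (e.g.\ by forming its thin SVD) costs $O(s^{2}d)=\tilde{O}(d_{\lambda}^{2}d/\epsilon^{4})$. Summing gives the claimed bound $\tilde{O}(\mathrm{nnz}(A)+d_{\lambda}^{2}(\hat{C})\,d\,\mathrm{poly}(1/\epsilon))$.

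The only subtle point, and the main thing to guard against, is that the spectral approximation is \emph{multiplicative} whereas the lemma requires an \emph{additive} $\epsilon$-approximation in function value. This is resolved by the twin observations that (i) the offset $\lambda/2$ appears on both sides of the sandwich so it cancels cleanly, and (ii) the choice $\lambda=\epsilon/B^{2}$ together with the boundedness of the labels forces $\hat{F}_{\lambda}(\hat{w})+\lambda/2=O(1)$, allowing the multiplicative slack $\epsilon'$ to be translated into an additive $O(\epsilon)$ excess risk.
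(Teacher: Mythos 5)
The paper does not actually prove this lemma---it imports it wholesale as Theorem~2.14 of \cite{woodruff2014sketching}---so there is no internal proof to compare against. Your blind reconstruction is correct and follows the standard sketch-and-solve template that underlies the cited result: lifting $w$ to $v_w=(w,-1)$ so that both $\hat{F}_\lambda+\lambda/2$ and its sketched counterpart become quadratic forms in $\bar{A}^{\top}\bar{A}+\lambda I$ and $\bar{A}^{\top}S^{\top}S\bar{A}+\lambda I$ respectively, sandwiching via the $(\lambda,\epsilon')$-spectral approximation, chaining through the optimality of $\tilde{w}$, and converting the multiplicative slack into an additive $O(\epsilon)$ bound using the boundedness of the objective at the optimum. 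The runtime accounting via Theorem~\ref{thm:ridgeLeverageSampling} (leverage-score computation, sample count $s=\tilde{O}(d_\lambda/\epsilon'^2)$, and an $O(s^2d)$ solve) is also the right decomposition and matches the claimed $\tilde{O}(\mathrm{nnz}(A)+d_\lambda^2 d\,\mathrm{poly}(1/\epsilon))$. Two small points to tighten: the step $\hat{F}_\lambda(\hat{w})\le\hat{F}_\lambda(0)$ presumes $0\in\mathcal{W}$, which the paper does not assume---but the problem setting's bounds $|w^{\top}x|\le1$ and $y\in[-1,1]$ already give $\hat{F}_\lambda(\hat{w})=O(1)$ directly, which is all you need; and the claim that appending the column $b$ changes the effective dimension by at most one deserves a one-line justification (it is a rank-one PSD update of $AA^{\top}$, and $d_\lambda$ increases by at most $1$ under such an update). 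Neither affects the validity of the argument.
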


\end{document}